\DeclareMathOperator{\g}{g}
\newcolumntype{C}[1]{>{\centering\arraybackslash}m{#1}}
\newcolumntype{R}[1]{>{\raggedleft\arraybackslash}m{#1}}
\newcolumntype{L}[1]{>{\raggedright\arraybackslash}m{#1}}
\title{Probabilistic Scores of Classifiers, \\ Calibration is not Enough\thanks{Emmanuel Flachaire and Ewen Gallic acknowledge that the project leading to this publication has received funding from the French government under the ``France 2030'' investment plan managed by the French National Research Agency (reference: ANR-17-EURE-0020) and from Excellence Initiative of Aix-Marseille University -- A*MIDEX.\\Replication codes and companion e-book: \href{https://github.com/fer-agathe/scores-classif-calibration}{https://github.com/fer-agathe/scores-classif-calibration}}}
\theoremstyle{plain}
\newtheorem{theorem}{Theorem}[section]
\newtheorem{proposition}[theorem]{Proposition}
\newtheorem{lemma}[theorem]{Lemma}
\theoremstyle{definition}
\theoremstyle{remark}
\definecolor{bleu}{RGB}{0,101,189}
\definecolor{vert}{HTML}{004D40}
\definecolor{rose}{HTML}{D81B60}
\definecolor{bleuTOL}{HTML}{332288}
\definecolor{wongBlack}{RGB}{0,0,0}
\definecolor{wongGold}{RGB}{230, 159, 0}
\definecolor{wongLightBlue}{RGB}{86, 180, 233}
\definecolor{wongGreen}{RGB}{0, 158, 115}
\definecolor{wongYellow}{RGB}{240, 228, 66}
\definecolor{wongBlue}{RGB}{0, 114, 178}
\definecolor{wongOrange}{RGB}{213, 94, 0}
\definecolor{wongPurple}{RGB}{204, 121, 167}
\definecolor{colUncalibrated}{RGB}{191, 191, 191}
\definecolor{colRecalibrated}{RGB}{197, 214, 231}
\definecolor{bleuTOL}{HTML}{332288}
\definecolor{vertTOL}{HTML}{117733}
\definecolor{vertClairTOL}{HTML}{44AA99}
\definecolor{bleuClairTOL}{HTML}{88CCEE}
\definecolor{sableTOL}{HTML}{DDCC77}
\definecolor{parmeTOL}{HTML}{CC6677}
\definecolor{magentaTOL}{HTML}{AA4499}
\definecolor{roseTOL}{HTML}{882255}
\author[1]{Agathe~Fernandes~Machado\thanks{Corresponding author: \href{mailto:fernandes_machado.agathe@courrier.uqam.ca}{fernandes\_machado.agathe@courrier.uqam.ca}}}
\author[1]{Arthur~Charpentier}
\author[2]{Emmanuel~Flachaire}
\author[2]{Ewen~Gallic}
\author[3]{Fran\c{c}ois Hu}
\affil[1]{%
    \footnotesize Département de Mathématiques\\
    Université du Québec à Montréal\\
    Montréal, Québec, Canada
}
\affil[2]{%
    \footnotesize Aix Marseille Univ, CNRS, AMSE\\
    Marseille, France
}
\affil[3]{%
    \footnotesize Milliman France\\
    14 Av. de la Grande Armée, 75017 Paris, France
}
\def\@fnsymbol#1{%
   \ifcase#1\or
   \TextOrMath ~ \dagger\or
   \TextOrMath {\footnotesize\Letter} \dagger\or
   \TextOrMath \textdaggerdbl \ddagger \or
   \TextOrMath \textsection  \mathsection\or
   \TextOrMath \textparagraph \mathparagraph\or
   \TextOrMath \textbardbl \|\or
   \TextOrMath {\textdagger\textdagger}{\dagger\dagger}\or
   \TextOrMath {\textdaggerdbl\textdaggerdbl}{\ddagger\ddagger}\else
   \@ctrerr \fi
}
\newcommand{\authornames}{\footnotesize\textsc{Fernandes Machado, Charpentier, Flachaire, Gallic, Hu}}
\patchcmd{\NAT@test}{\else \NAT@nm}{\else \NAT@nmfmt{\NAT@nm}}{}{}
\DeclareRobustCommand\citepos
   \let\NAT@nmfmt\NAT@posfmt
\let\NAT@ctype\z@\NAT@partrue
\let\NAT@orig@nmfmt\NAT@nmfmt
\def\NAT@posfmt#1{\NAT@orig@nmfmt{#1's}}
\begin{document}

\maketitle
\thispagestyle{empty}

\begin{abstract}
In binary classification tasks, accurate representation of probabilistic predictions is essential for various real-world applications such as predicting payment defaults or assessing medical risks. The model must then be well-calibrated to ensure alignment between predicted probabilities and actual outcomes. However, when score heterogeneity deviates from the underlying data probability distribution, traditional calibration metrics lose reliability, failing to align score distribution with actual probabilities.
In this study, we highlight approaches that prioritize optimizing the alignment between predicted scores and true probability distributions over minimizing traditional performance or calibration metrics. When employing tree-based models such as Random Forest and XGBoost, our analysis emphasizes the flexibility these models offer in tuning hyperparameters to minimize the Kullback-Leibler (KL) divergence between predicted and true distributions. Through extensive empirical analysis across 10 UCI datasets and simulations, we demonstrate that optimizing tree-based models based on KL divergence yields superior alignment between predicted scores and actual probabilities without significant performance loss. In real-world scenarios, the reference probability is determined \textit{a priori} as a Beta distribution estimated through maximum likelihood. Conversely,  minimizing traditional calibration metrics may lead to suboptimal results, characterized by notable performance declines and inferior KL values. Our findings reveal limitations in traditional calibration metrics, which could undermine the reliability of predictive models for critical decision-making.
\end{abstract}

\section{Introduction}

When quantifying the risks associated with decisions made using a classifier, it is essential that the scores returned by the classifier accurately reflect the underlying probability of the event in question. The model must then be well-calibrated \citep{brier_1950,murphy1972scalar,dawid1982well}. This is particularly relevant in contexts such as assessing risks, in medicine, or in economic applications. Tree-based machine learning techniques like Random Forests (RF) \citep{ho_1995_random,Breiman_2001_rf} and XGBoost \citep{friedman_2001_xgb} are increasingly popular for risk estimation by practitioners, though these models are not inherently well-calibrated \citep{caliForest2020}. Even when they are calibrated, it may not be sufficient because the score distribution might not align with actual probabilities. This misalignment can render calibration metrics unreliable since they are assessed only within the prediction range, which may not represent the full range of the underlying data distribution.

Scholars, as in \citet{rodgers2000inverse}, acknowledge the difficulty of knowing the "true distribution" of some underlying risk factor, recommending that practitioners make a "{\em reasonable estimate of a probability density function consistent with all our knowledge, one that is least committal about the state but consistent with whatever more or less detailed understanding we may have of the state vector prior to the measurement(s)}". In genetics, it could be possible to have some (biological) knowledge about the ``true prior distribution" of some quantity, capturing its distribution in the entire heterogeneous population, as in \citet{galanti2021pheniqs}, or on spectral measures in physics applications, as in \citet{cressie2018mission}. 

\paragraph{Related Work and Metrics}Traditionally, binary classifiers are evaluated using metrics like AUC-ROC (or AUC). However, these metrics ensure neither that the classifier's scores are well-calibrated, nor reflect the underlying probability distribution \citep{widmann2020calibration}. 
To address the first challenge, machine learning literature has shifted towards introducing calibration metrics and methodologies to recalibrate these scores \citep{brier_1950, dawid1982well, naeini2015obtaining, kull2017EJS}. 
And despite the abundance of calibration metrics, none, to our knowledge, evaluate their effectiveness using synthetic data with known probabilities. To address this gap, this paper demonstrates through empirical simulations that optimizing KL divergence could be a more effective model selection approach \citep{der2009aleatory,shaker2020aleatoric,schweighofer2023quantification,gruber2023sources}. In particular, the model selected by optimizing KL divergence does not necessarily correspond to the one that minimizes existing accuracy and calibration metrics, confirming that these metrics might not be a sufficient goal in practice.

In a real-world context where the distribution of underlying probabilities is no longer directly observable, we suppose that some expert based opinion can be considered. For convenience, in this paper, we adopt an approach where a Beta distribution is selected as a reference, and study accuracy, calibration and score dispersion over 10 UCI datasets. 

\paragraph{Contributions and Findings} Let us now summarize our main contributions:
\begin{itemize}
    \item We show that the proposed calibration metrics alone are insufficient in practice to ensure the reliability of scores for crucial decision-making particularly when the score distribution lacks heterogeneity compared to the underlying data distribution.
    \item We demonstrate through controlled simulations that optimizing KL divergence is a more effective model selection approach.
    \item Our simulations in both synthetic and real-world data reveal that minimizing traditional performance and calibration metrics for the target variable may result in suboptimal outcomes, marked by significant performance declines and inferior KL values.
\end{itemize}

We start by introducing classical calibration metrics for binary classification tasks and discussing score heterogeneity. We then highlight deficiencies in these metrics using numerical simulations and synthetic datasets. Finally, we conduct extensive experiments to demonstrate the effectiveness of KL-based metrics on both synthetic and real-world datasets.

\section{Calibration of a Binary Classifier}\label{sec:measure-calibration}

Consider a binary variable $y$ indicating event occurrence (1 for occurrence, 0 otherwise). In this context, the event probability $p_i$ depends on individual characteristics $\mathbf{x}_i$, represented as $p_i = s(\mathbf{x}_i)$, where $i=1,\ldots, n$ denotes individuals in a sample of size $n$. The goal is to estimate this probability $\hat{s}(\mathbf{x}_i)\in [0,1]$ using models like Generalized Linear Models (GLMs) or Machine Learning (ML) models such as Trees or ensemble models like Random Forest (RF) or Boosting (XGBoost). Given our interest in predicting a continuous variable $s(\boldsymbol{x})$, alongside a binary variable $y$, we adopt a regression framework for estimating these scores, denoted as $\hat{s}(\boldsymbol{x})$.

However, if the model is not well-calibrated, the score cannot be interpreted as ``the true underlying probability'' \citep{van1995fine,kull2014reliability,konek2016probabilistic}. For a binary variable $Y$, a model is well-calibrated when \citep{Schervish_1989_AS}%
\begin{equation}
\mathbb{E}[Y \mid \hat{s}(\bm{X}) = p] = p, \quad \forall p \in [0,1].\label{eq-calib-E}
\end{equation}

For example, if \(y\) represents a credit default where \(y=1\) indicates default and \(y=0\) indicates no default, a model predicting credit default is well-calibrated if the predicted probabilities \(\hat{s}(\mathbf{x})\) match the actual observed probabilities of default. If the model predicts a probability of 0.8 for an individual, the actual default frequency for individuals with this predicted probability should be close to 80\%.

\paragraph{Calibration of Well-Specified Logistic Regression}
It should be mentioned that conditioning by \(\{\hat{s}(\mathbf{X})=p\}\) leads to the concept of (local) calibration; however, as discussed by \citet{bai2021don}, \(\{\hat{s}(\mathbf{X})=p\}\) is \textit{a.s.} a null mass event for standard regression models, such as a logistic regression. Thus, calibration should be understood in the sense that 
\begin{equation}
\mathbb{E}[Y \mid \hat{s}(\mathbf{X}) = p]\overset{a.s.}{\to} p\text{ when }n\to\infty,
\end{equation}
meaning that, asymptotically, the model is well-calibrated, or locally well-calibrated in \(p\), for any \(p\). By the dominated convergence theorem, it also indicates that the model is ``on average'' well-calibrated. To account for this, we consider multiple replications of finite samples in the simulation study in Section~\ref{sec:scores-heterogeneity}. In a well-specified logistic regression model, this property is satisfied.

\begin{proposition}\label{prop:logistic}
Consider a dataset $\{(y_i,\mathbf{x{_i}})\}$, where 
{$\mathbf{x}$}
are $k$ features ($k$ being fixed), so that $Y|\boldsymbol{X}=\mathbf{x} \sim \mathcal{B}\big(s(\mathbf{x})\big)$ where
$
s(\mathbf{x})={\big[1+\exp[-(\beta_0+\mathbf{x}^\top\boldsymbol{\beta})]\big]}
$. Let $\widehat{\beta}_0$ and $\widehat{\boldsymbol{\beta}}$ denote maximum likelihood estimators. Then, for any $\mathbf{x}$, the score is defined as 
$
\hat{s}(\mathbf{x})={\big[1+\exp[-(\hat\beta_0+\mathbf{x}^\top\hat{\boldsymbol{\beta}})]\big]^{-1}}
$
is well-calibrated in the sense that
$
\mathbb{E}[Y \mid \hat{s}(\mathbf{X}) = p]\overset{a.s.}{\to} p\text{ as }n\to\infty.
$
\end{proposition}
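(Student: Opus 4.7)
The plan is to reduce the statement to two classical ingredients: consistency of the logistic MLE and the tautological identity that, under well-specification, $\mathbb{E}[Y\mid\mathbf{X}]=s(\mathbf{X})$. With $k$ fixed and the usual regularity conditions (identifiability of $(\beta_0,\boldsymbol{\beta})$, nonsingular Fisher information, finite moments of $\mathbf{X}$), the classical Fahrmeir--Kaufmann type argument for GLMs yields $(\hat\beta_0,\hat{\boldsymbol{\beta}})\overset{a.s.}{\to}(\beta_0,\boldsymbol{\beta})$. Writing $\sigma(u):=[1+\exp(-u)]^{-1}$, $\hat T(\mathbf{x}):=\hat\beta_0+\mathbf{x}^\top\hat{\boldsymbol{\beta}}$ and $T(\mathbf{x}):=\beta_0+\mathbf{x}^\top\boldsymbol{\beta}$, continuous mapping then gives $\hat T(\mathbf{x})\overset{a.s.}{\to} T(\mathbf{x})$ and $\hat s(\mathbf{x})=\sigma(\hat T(\mathbf{x}))\overset{a.s.}{\to} s(\mathbf{x})$ for every $\mathbf{x}$.

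Invertibility of $\sigma$ reduces the task to showing $\mathbb{E}[Y\mid\hat T(\mathbf{X})=t]\overset{a.s.}{\to}\sigma(t)$ for each $t\in\mathbb{R}$, since $\{\hat s(\mathbf{X})=p\}=\{\hat T(\mathbf{X})=\sigma^{-1}(p)\}$. Combining the tower property with the well-specification identity $\mathbb{E}[Y\mid\mathbf{X}]=\sigma(T(\mathbf{X}))$ gives
\begin{equation*}
\mathbb{E}[Y\mid\hat T(\mathbf{X})=t]=\mathbb{E}\big[\sigma(T(\mathbf{X}))\,\big|\,\hat T(\mathbf{X})=t\big].
\end{equation*}
In the oracle case $\hat T\equiv T$, the conditioning event becomes $\{T(\mathbf{X})=t\}$, on which $\sigma(T(\mathbf{X}))=\sigma(t)=p$ identically, so the right-hand side equals $p$. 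The proposition amounts to saying that this oracle identity is the a.s.\ limit as $n\to\infty$.

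The delicate step, and the main obstacle I anticipate, is justifying the substitution of $\hat T$ by $T$ inside the conditional expectation: both are random, we condition on a null-mass event (as acknowledged just above Proposition~\ref{prop:logistic}), and $\hat T$ is built from the same sample. I would handle it by a slab approximation. Fix $t$, assume that $T(\mathbf{X})$ admits a continuous density $f_T$ positive at $t$, and choose a bandwidth $h_n\downarrow 0$ slower than the consistency rate of $\hat{\boldsymbol{\beta}}$; then consider the localized ratio
\begin{equation*}
\frac{\mathbb{E}\big[Y\,\mathbbm{1}\{|\hat T(\mathbf{X})-t|<h_n\}\big]}{\mathbb{P}(|\hat T(\mathbf{X})-t|<h_n)}.
\end{equation*}
A.s.\ convergence of $(\hat\beta_0,\hat{\boldsymbol{\beta}})$ forces the density of $\hat T(\mathbf{X})$ to approach $f_T$ uniformly on a neighborhood of $t$, so the denominator behaves like $2h_n f_T(t)$. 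In the numerator, decompose $Y=\sigma(T(\mathbf{X}))+(Y-\sigma(T(\mathbf{X})))$: continuity of $\sigma\circ T$ keeps the first term within $o(1)$ of $\sigma(t)$ on the shrinking slab, while the residual has conditional mean zero given $\mathbf{X}$ and averages out by the strong law. Sending $n\to\infty$ first and then $h_n\to 0$ recovers $\sigma(t)=p$, as required. Everything else is bookkeeping built on consistency, continuous mapping, and the tower property.
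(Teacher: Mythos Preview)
Your proposal is correct in spirit and lands on the same conclusion via the same two pillars (MLE consistency plus continuous mapping), but the paper takes a markedly shorter route. The paper first treats $k=1$, where the key observation is that $\hat s$ is strictly monotone in $x$, hence bijective, so the conditioning event $\{\hat s(X)=p\}$ collapses to the single point $\{X=\hat s^{-1}(p)\}$. This yields the \emph{exact} identity $\mathbb{E}[Y\mid \hat s(X)=p]=s(\hat s^{-1}(p))$ with no slab approximation, no bandwidth, and no tower property; the result then follows at once from $\hat s^{-1}(p)\to s^{-1}(p)$ (consistency and continuity of the inverse), giving $s(\hat s^{-1}(p))\to s(s^{-1}(p))=p$. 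The paper then extends to fixed $k>1$ only by remarking that the argument carries over ``since functions are continuous and invertible.''

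What each approach buys: the paper's bijectivity trick turns the univariate case into a two-line argument but leaves the multivariate extension essentially hand-waved (in $k>1$, $\hat s$ is not one-to-one on $\mathbb{R}^k$, the level set is a hyperplane, and the exact identity no longer holds). Your slab approximation is heavier machinery, but it is precisely what one needs to make the $k>1$ case honest---you are doing the work the paper skips. That said, your sketch would need a bit more care to be airtight: the step ``$\sigma(T(\mathbf X))$ stays within $o(1)$ of $\sigma(t)$ on the slab'' requires controlling $|T(\mathbf X)-\hat T(\mathbf X)|=|(\boldsymbol\beta-\hat{\boldsymbol\beta})^\top\mathbf X|$ uniformly on the slab, which is delicate if $\mathbf X$ is unbounded; and the claim that the density of $\hat T(\mathbf X)$ converges uniformly to $f_T$ near $t$ needs a brief justification.
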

\begin{proof}
    Proof in Appendix~\ref{proof:prop}. If $k$ were increasing with $n$, \cite{bai2021don} showed that logistic regression is over-confident. However, assuming $k$ is fixed provides a complementary perspective.
\end{proof}

To evaluate model calibration, the literature offers both graphical techniques and metrics \citep{brier_1950, NEURIPS2019_f8c0c968, naeini2015obtaining, NEURIPS2022_33d6e648, pmlr-v119-zhang20k}. Given the continuous nature of the score with a null mass event, various methods have emerged. Graphical methods mainly involve estimating a calibration curve.

\paragraph{Calibration Curve}
In the binary case, based on Eq.~\ref{eq-calib-E}, consider
\begin{equation}
\g :
\begin{cases}
[0,1] \rightarrow [0,1]\\
p \mapsto \g(p) := \mathbb{E}[Y \mid \hat{s}(\mathbf{X}) = p]
\end{cases}.
\end{equation}
The $\g$ function for a well-calibrated model is the identity function $\g(p) = p$. In practice, from this real-valued setting, it is challenging to have a sufficient number of observations in the training dataset with identical scores to effectively measure calibration defined in Eq.~\ref{eq-calib-E}, resulting in a lack of robustness in the estimation process of these probabilities \citep{Dimitriadis_2021}. A common method for estimating calibration is to group observations into $B$ bins \citep{NEURIPS2019_f8c0c968, naeini2015obtaining}, defined by the empirical quantiles of predicted values $\hat{s}(\mathbf{x})$. The average of observed values, denoted $\bar{y}_b$ with $b\in \{1, \ldots, B\}$, in each bin $b$ can then be compared with the central value of the bin. Thus, a calibration curve can be constructed by plotting the middle of each bin on the x-axis and the averages of corresponding observations on the y-axis, also referred to as ``reliability diagrams'' in \cite{Wilks_1990_WF}.

\paragraph{Calibration Metrics}
A commonly used metric for assessing calibration is the Brier score \citep{brier-use-2021, kull2017EJS, platt1999probabilistic, NEURIPS2020_9bc99c59}. It is expressed as \citep{brier_1950}
\begin{equation}
\text{BS} = \frac{1}{n}\sum_{i=1}^{n} \big(\hat{s}(\mathbf{x}_i) - y_i\big)^{2}\enspace.\label{eq-brier-score}
\end{equation}

\citet{Austin_2019} proposes the Integrated Calibration Index (ICI) based on the calibration curve, using smoothing methods instead of bins, similar to \citet{pmlr-v119-zhang20k}. Smoothing methods avoid the need for hyperparameter selection required in bin-based methods, which depend on the number of observations. However, they require a neighborhood parameter. Here, the binary event is regressed on predicted scores using LOESS for small datasets (\(n < 1000\)) or cubic regression splines for larger datasets, and $\hat{\g}$ denote the estimate. The ICI writes:
\begin{equation}
    \text{ICI} = \frac{1}{n}\sum_{i=1}^{n} \big\vert\hat{s}(\mathbf{x}_i) -\hat{\g}\big(\hat{s}(\mathbf{x}_i)\big)\big\vert,
\end{equation}
which is the empirical version of \citet{Austin_2019}, based on the absolute difference between the calibration curve (estimated using a local regression on sample $\{(\hat{s}(\mathbf{x}_i),y_i)\}$) and the identity function.

\section{Scores Heterogeneity}\label{sec:scores-heterogeneity}

As explained in \citet{dietrich2013reasons}, Bayesian epistemology guides us on how to update our beliefs in response to new evidence or information. It delineates the influence any specific piece of evidence should have on our beliefs and the posterior beliefs we should adopt after assimilating such evidence (for an introduction, see, \textit{e.g.}, \citealp{bovens2004bayesian}). This framework of rational belief updating presupposes the existence of prior beliefs. However, a notable limitation of Bayesianism is its silence on the origin of these priors. It lacks the means to justify certain prior beliefs as rational or appropriate and to critique others as irrational or unreasonable. Nonetheless, we naturally consider some prior beliefs to be more reasonable than others. In some applications, we could have some knowledge about the "true prior distribution" (in the sense given in \citealp{andrewgellman}), \textit{i.e.}, the distribution of the parameter across the population), corresponding to the ``{\em credence function}'' in \citet{lance1995subjective} and \citet{moss2018probabilistic}, while \citet{miller1975propensity} suggested the term ``propensity'', or ``{\em propensity theory of probability}'', following \citet{peirce1910notes}.

When a binary classification model is well-calibrated, the distribution of its score \(\hat{s}(\mathbf{x})\) according to Eq.~\ref{eq-calib-E} is identical to that of the underlying probability. In practice, however, the underlying probability is not observed. Consequently, calibration measures cannot rely on this probability. Whether based on bin creation or a local regression approach, calibration measures cannot assess discrepancies between the distribution of scores and the distribution of underlying probabilities of the event. Here, we aim to study the potential links between score heterogeneity and calibration measures in a controlled environment where we observe true probabilities through simulations. To achieve this, we train regression trees while varying their complexity. The fewer leaves a tree has, the less heterogeneous the scores it returns will be. 
To better understand the relationship between score heterogeneity and calibration, the decomposition of prediction errors proves to be useful.

\subsection{Decomposition}

Let \(\hat{s}\) denote a scoring classifier, \(\mathcal{X}\to[0,1]\), then set \(\widehat{S}:=\hat{s}(\bm{X})\) the score output by \(\hat{s}\) on instance \(\bm{X}\). Let \(P\) denote the true (posterior) probability, in the sense that \(S:=s(\bm{X})=\mathbb{E}[Y|\bm{X}]\). \citet{murphy1972scalar}
\citet{degroot1983comparison} and \citet{brocker2009reliability} suggested the following decomposition. Let \(C:=\mathbb{E}[Y|\widehat{S}]\), corresponding to the true proportion of 1's among the instances for
which the model has output the same estimate \(\widehat{S}\). The ``calibration-refinement'' decomposition considered in \citet{brocker2009reliability} is:

\begin{lemma}[Adapted from \citealp{brocker2009reliability}]
The expected loss corresponding to any proper scoring rule is the sum of expected divergence of \(\widehat{S}\) from \(C\) and the expected
divergence of \(C\) from \(Y\), denoted\footnote{with notations based on random variables instead of distributions for convenience and interpretability.} 
\[
\mathbb{E}\big[d(\widehat{S},Y)\big]=\underbrace{
\mathbb{E}\big[d(\widehat{S},C)\big]}_{\text{calibration loss}}+
\underbrace{\mathbb{E}\big[d(C,Y)\big]}_{\text{refinement loss}}.
\]
\end{lemma}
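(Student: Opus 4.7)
The plan is to carry out the decomposition by conditioning on $\widehat{S}$ and then exploiting the definition of a proper scoring rule. First I would write, by the tower property,
\[
\mathbb{E}\big[d(\widehat{S},Y)\big]=\mathbb{E}\big[\mathbb{E}[d(\widehat{S},Y)\mid \widehat{S}]\big].
\]
Since $Y\mid \widehat{S}$ is Bernoulli with parameter $C=\mathbb{E}[Y\mid\widehat{S}]$, and $\widehat{S}$ is $\sigma(\widehat{S})$-measurable, the inner conditional expectation equals the expected scoring loss when predicting $\widehat{S}$ against a Bernoulli with parameter $C$, namely $\bar d(\widehat{S},C):=C\,d(\widehat{S},1)+(1-C)\,d(\widehat{S},0)$.

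Next I would invoke the defining property of a proper scoring rule: the map $p\mapsto \bar d(p,q)$ is minimized at $p=q$, so one can define the induced (nonnegative) divergence $D(p,q):=\bar d(p,q)-\bar d(q,q)$ and write $\bar d(\widehat{S},C)=\bar d(C,C)+D(\widehat{S},C)$. By exactly the same conditioning argument applied to $d(C,Y)$ (using that $C$ is also $\sigma(\widehat{S})$-measurable), we have $\mathbb{E}[d(C,Y)\mid\widehat{S}]=\bar d(C,C)$. Taking outer expectations and putting the two identities together yields
\[
\mathbb{E}\big[d(\widehat{S},Y)\big]=\mathbb{E}\big[D(\widehat{S},C)\big]+\mathbb{E}\big[d(C,Y)\big],
\]
which is the claimed decomposition once we adopt the notational convention (implicit in the statement) that $d(\widehat{S},C)$ refers to the divergence $D(\widehat{S},C)$ induced by the scoring rule $d$ when its second argument is a probability rather than a binary label.

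I would expect the main subtlety, rather than a technical obstacle, to lie precisely in this notational overloading: the symbol $d(\cdot,\cdot)$ is used both for the realized loss $d(\widehat{S},Y)$ with $Y\in\{0,1\}$ and for a divergence between two elements of $[0,1]$ in the calibration term $d(\widehat{S},C)$. Making this reconciliation explicit through the expected-loss functional $\bar d$ and its Bregman-type decomposition is the key conceptual step; everything else reduces to two applications of the tower property and the Bernoulli structure of $Y\mid\widehat{S}$. No measurability beyond the standard regularity of conditional expectations is needed, and the proof applies verbatim to any strictly proper scoring rule, recovering in particular the Brier decomposition when $d(p,y)=(p-y)^2$.
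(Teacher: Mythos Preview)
Your argument is correct and is the standard route to the calibration--refinement decomposition. Note, however, that the paper itself does not supply a proof of this lemma: it is simply stated as a known result adapted from Br\"ocker (2009), so there is no in-paper derivation to compare against. Your conditioning on $\widehat{S}$, combined with the identity $\bar d(p,q)=\bar d(q,q)+D(p,q)$ induced by properness, is exactly the argument in the original reference; your observation about the notational overloading of $d$ (realized loss $d(\widehat{S},Y)$ with $Y\in\{0,1\}$ versus the induced divergence $d(\widehat{S},C)$ between two elements of $[0,1]$) is apt and is indeed the only real subtlety.
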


Here, the calibration loss is due to the difference between
the model output score \(\widehat{S}\) and the proportion of 1's among instances with the same output.
An alternative decomposition can be considered

\begin{lemma}[Adapted from \citealp{kull2015novel}]
The expected loss corresponding to any proper scoring rule is the sum of expected divergence of \(\widehat{S}\) from \(S\) and the expected
divergence of \(S\) from \(Y\),  
\[
\mathbb{E}\big[d(\widehat{S},Y)\big]=\underbrace{
\mathbb{E}\big[d(\widehat{S},S)\big]}_{\text{epistemic loss}}+
\underbrace{\mathbb{E}\big[d(S,Y)\big]}_{\text{irreducible loss}}.
\]
\end{lemma}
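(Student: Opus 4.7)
The plan is to condition on $\bm{X}$, invoke the defining property of any proper scoring rule, and then integrate out. Because $\widehat{S}=\hat{s}(\bm{X})$ and $S=s(\bm{X})$ are both $\sigma(\bm{X})$-measurable while $Y\mid\bm{X}\sim\mathcal{B}(S)$, the inner conditional expectation reduces to an expectation over a Bernoulli with parameter $S$, with $\widehat{S}$ held fixed:
\[
\mathbb{E}\big[d(\widehat{S},Y)\mid\bm{X}\big] \;=\; \mathbb{E}_{Y\sim\mathcal{B}(S)}\big[d(\widehat{S},Y)\big].
\]

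The key step is the Savage / Bregman characterization of a (strictly) proper scoring rule: for every $p,q\in[0,1]$,
\[
\mathbb{E}_{Y\sim\mathcal{B}(q)}\big[d(p,Y)\big] \;=\; d(p,q) \;+\; \mathbb{E}_{Y\sim\mathcal{B}(q)}\big[d(q,Y)\big],
\]
where on the right-hand side $d(p,q)$ is the divergence between two probability values and $d(\cdot,Y)$ with a binary second argument denotes the loss itself (so $\mathbb{E}_{Y\sim\mathcal{B}(q)}[d(q,Y)]$ is the generalized entropy at $q$). Applying this identity inside the conditional expectation with $p=\widehat{S}$ and $q=S$ gives
\[
\mathbb{E}\big[d(\widehat{S},Y)\mid\bm{X}\big] \;=\; d(\widehat{S},S) \;+\; \mathbb{E}\big[d(S,Y)\mid\bm{X}\big],
\]
and a final application of the tower property yields the desired decomposition.

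The main obstacle is really notational rather than mathematical: the symbol $d$ is used in the statement in two senses (a divergence between two probabilities, versus a loss when the second argument is a binary outcome), and one must articulate this to verify that the generalized entropy appearing in the proper-scoring-rule identity coincides with the irreducible term $\mathbb{E}[d(S,Y)]$. As a concrete sanity check, the Brier case $d(p,y)=(p-y)^2$ reduces the whole argument to the expansion
\[
(\widehat{S}-Y)^2 \;=\; (\widehat{S}-S)^2 \;+\; 2(\widehat{S}-S)(S-Y) \;+\; (S-Y)^2,
\]
whose cross term vanishes in conditional expectation because $\mathbb{E}[Y\mid\bm{X}]=S$; the general proper-scoring-rule argument is this same calculation rephrased via the associated Bregman divergence, which is why the decomposition is stated for \emph{any} proper scoring rule.
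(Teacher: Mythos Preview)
Your argument is correct and is precisely the standard route: condition on $\bm{X}$ so that $\widehat{S}$ and $S$ become constants, invoke the Bregman/Savage identity that defines the divergence associated with a proper scoring rule, and take expectations via the tower property. The paper itself does not supply a proof of this lemma; it is merely stated as an adaptation of \citet{kull2015novel}, so there is no in-paper argument to compare against. Your remark about the overloaded use of $d$ (loss versus divergence) is well taken and is exactly the point one has to make explicit to turn the statement into a proof; the Brier sanity check is the canonical illustration.
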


Here, the irreducible loss represents inherent uncertainty in the classification task (and is the same whatever the model), and is called ``aleatoric'' in \citet{hora1996aleatory, der2009aleatory, senge2014reliable}; the  epistemic uncertainty, as coined in \citet{der2009aleatory}, measures the distance between the distribution of estimated scores (denoted \(\widehat{S}\)) and the true probability \(S\). 
If the scoring rule is the log-loss (that is a proper scoring rule, \citealp{good1952rational}), the associated discrepancy $d$ is KL divergence (or relative entropy). If \(\widehat{S}\overset{a.s.}{=}S\), \citet{kull2015novel} says that \(\widehat{s}\) is ``optimal'' (w.r.t. any proper scoring rule). Unlike aleatoric uncertainty, which refers to an intrinsic notion of randomness and uncertainty (and is irreducible by nature), the epistemic uncertainty can potentially be reduced with additional information, as discussed in \citet{banerji2023clinical}, such as the range of the underlying probabilities or their distribution in the population (is the prior probability to have a car accident for individuals concentrated around 10\%, or is it on a larger range, e.g., \([5\%;40\%]\)?). To capture this heterogeneity of probabilities in the population, we use KL divergence between distributions of \(\widehat{S}\) and \(S=\mathbb{E}[Y|\boldsymbol{X}]\), as in \citet{shaker2020aleatoric}: ``{\em the epistemic uncertainty is measured in terms of the mutual information between hypotheses and outcomes (\textnormal{i.e.}, the KL divergence).}'' Since the distribution of \(S\) is always unknown in real applications, \citet{apostolakis1990concept} suggests to consider the distance between the distribution of predicted scores \(\hat{s}_i\)'s and some expert knowledge, or prior opinion on the distribution of true probabilities. \citet{zidek2003uncertainty} suggests to compare simply the dispersion (quantified using the variance) of predicted scores.

\subsection{Synthetic Data}\label{sec:synth-data}

We simulate data to compare estimated scores with the underlying probabilities, considering four types of data generating processes (DGP) using a logistic link function. The first three DGPs follow \citet{Ojeda_2023}, and we introduce a fourth DGP with interaction terms and non-linearities. %
Each scenario uses a logistic model to generate the outcome. Let \(Y_i\) be a binary variable following a Bernoulli distribution: \(Y_i \sim B(p_i)\), where \(p_i\) is the probability of observing \(Y_i = 1\). The probability \(p_i\) is defined by:
\begin{equation}
p_i = \mathbb{P}(Y = 1 \mid \mathbf{x}_i)=\mathbb{E}[Y \mid \mathbf{x}_i] = \big[{1+\exp(-\eta_i)}\big]^{-1}.\enspace\label{eq-true-propensity}
\end{equation}
For the second DGP, to introduce non-linearities, \(p^3\) is used as true probabilities instead of $p$. For all DGPs, \(\eta_i =\mathbf{x}_i^\top \boldsymbol{\beta}\), %
where \(\mathbf{x}_i\) is a vector of covariates and \(\boldsymbol{\beta}\) is a vector of arbitrary scalars. The covariate vector includes two continuous predictors for DGPs 1 and 2. For DGP 3, it includes five continuous and five categorical predictors. For DGP 4, it contains three continuous variables, the square of the first variable, and an interaction term between the second and third variables.\footnote{\(\eta_i = \beta_1 x_{1,i} + \beta_2 x_{2,i} + \beta_3 x_{3,i} + \beta_4 x_{1,i}^2 + \beta_5 x_{2,i} \times x_{3,i}\).} Continuous predictors are drawn from \(\mathcal{N}(0,1)\). Categorical predictors consist of two variables with two categories, one with three categories, and one with five categories, all uniformly distributed.

For each DGP, we generate data considering four scenarios with varying numbers of noise variables: 0, 10, 50, or 100 variables drawn from \(\mathcal{N}(0,1)\) (see Table~\ref{tab:dgp} for simulation settings). The distribution of underlying probabilities for each DGP category is shown in Fig.~\ref{fig:hist-underlying-prob}, based on a single draw of 10,000 observations. For the fourth DGP, to achieve a similar probability distribution to DGP 1, we perform resampling using a rejection algorithm (details in Section~\ref{sec-appendix-resampling-algo}).

\begin{figure}[ht!]
\centering
\includegraphics[width=\columnwidth]{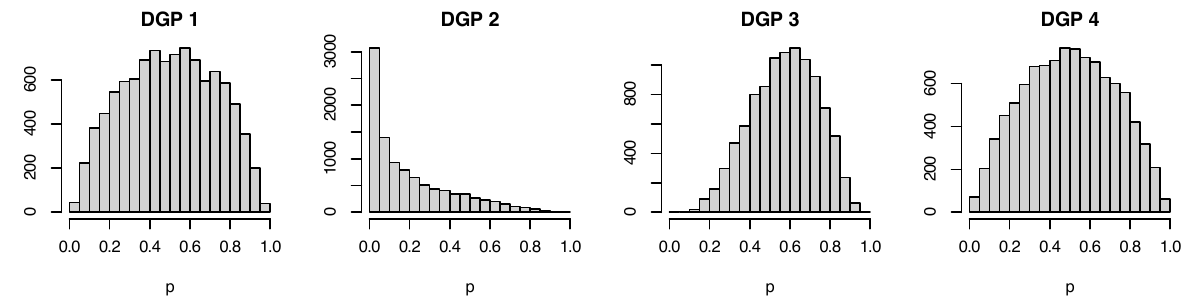}
\caption{Distribution of the underlying probabilities in the different categories of scenarios.}
\label{fig:hist-underlying-prob}
\end{figure}

\subsection{Illustrative Example: Regression Trees}\label{sec:heterogeneity-trees}

Tree-based methods like RF and XGBoost have become popular for estimating binary events \citep{insu-tree}. Understanding their core component, decision trees, is therefore essential. In decision trees, the heterogeneity of scores and tree size are interconnected. The number of leaves in a decision tree determines the partitioning of the data space into subspaces. As the number of leaves increases, the prediction function becomes more segmented. To illustrate this relationship, we generate a dataset of 30,000 observations (10,000 each for training, validation and test) according to DGP~1, without adding noise variables. Simulations with varying numbers of noise variables and different DGPs are performed subsequently. The trees are estimated using the \texttt{R} package \texttt{rpart} \citep{r_package_rpart}, which implements the CART algorithm \citep{breimann1984classification}. Since the CART algorithm does not allow direct control over the number of leaves, we adjust the parameter controlling the minimum number of observations per terminal leaf.

Among the estimated trees, we isolate the smallest one (with the fewest leaves), and the largest one  (with the most leaves). The distribution of scores for these two trees on the test sample is shown in Fig.~\ref{fig:tree-scenario1-hist-scores}. The smallest tree, with five leaves, produces five distinct scores. In contrast, the largest tree, with 1908 terminal leaf nodes, exhibits a more heterogeneous score distribution, with a strong concentration around 0 and 1 due to the model's objective of minimizing classification error. As anticipated, the tree with numerous leaves significantly overfits compared to the true probabilities.

\begin{figure}[ht!]
\centering
\includegraphics[width=\columnwidth]{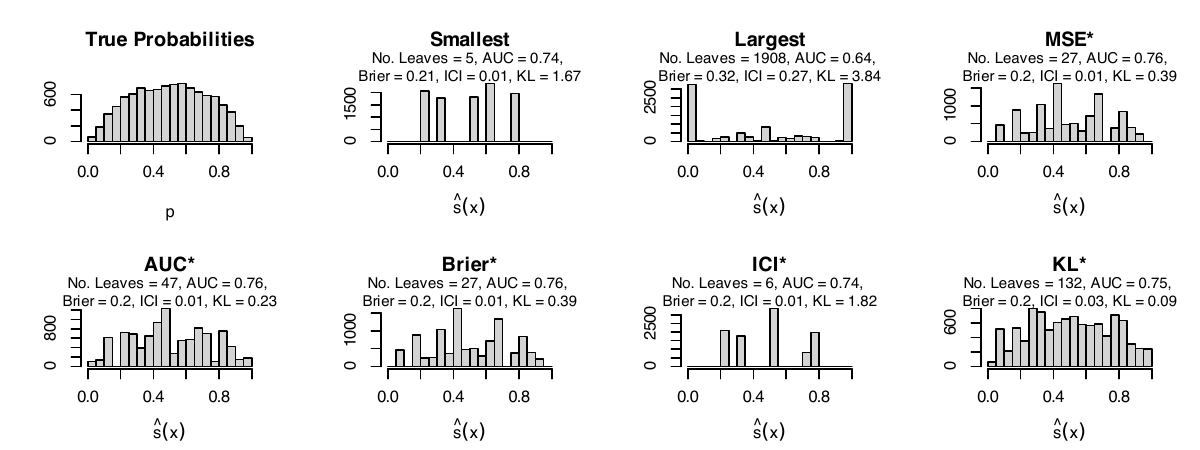}
\caption{Distribution of scores on test set for trees (DGP~1, without noise variables).}
\label{fig:tree-scenario1-hist-scores}
\end{figure}

In practice, hyperparameters are typically selected through grid search, choosing the model that performs best according to a criterion on a validation set.  For our binary classifier, we might select the tree that maximizes the AUC, for example. The score distribution for this tree, labeled AUC* in Fig.~\ref{fig:tree-scenario1-hist-scores}, may be more heterogeneous than that of the smallest tree but tends to deviate from the true probability distribution. To quantify this deviation, we first discretize the probability and score distributions by dividing the interval \([0,1]\) into \(m=20\) bins (histograms). We then compute the KL divergence \citep{Kullback_1951} between the two distributions, using the true probabilities as the reference. Let \(h_\phi\) be the bin of the scores and \(h_g\) the bin of the true probabilities, the KL divergence of \(\phi\) w.r.t. \(g\) is:%
\begin{equation}
D_{KL}(\phi || g) = \sum_{i=1}^m h_\phi(i) \log \frac{h_\phi(i)}{h_g(i)}.
\end{equation}

The Kullback-Leibler divergence on the test sample of the tree whose hyperparameters were chosen by optimizing this metric, denoted KL*, is much lower than that of the tree maximizing AUC (0.09 vs. 0.23), while the AUC remains almost unchanged.

Since our primary interest is in the predicted scores rather than the predicted labels, it might be tempting to choose model hyperparameters by optimizing a calibration distance. However, the results obtained by minimizing the Brier score (denoted Brier*) or the ICI (denoted ICI*) show that model calibration is not sufficient to achieve this goal. The distribution of obtained scores, similar to when maximizing the AUC, remains distant from the probability distribution.

\paragraph{Replications}
The experiment conducted with a single simulation under DGP 1 warrants replication with other samples. We generate 100 replications of the experiment (replications under other DGPs are presented in Section~\ref{sec:appendix-sim-trees}). Table~\ref{tbl:trees-metrics} reports the average values for performance metrics (AUC), calibration (Brier, ICI), and distribution divergence (KL Div.). In practical applications, as previously mentioned, the focus is often on optimizing performance metrics, typically AUC. However, maximizing AUC (row AUC*) reduces the ICI and Brier scores but increases the KL divergence compared to the tree optimized for KL (row KL*). This suggests that if decision-makers prioritize model calibration alongside classification performance, maximizing AUC may not yield score distributions that closely resemble the underlying data distribution. In addition, if decision-makers focus on optimizing calibration metrics like ICI or Brier instead of AUC, the performance decrease will be more pronounced than when optimizing KL. This also results in substantial discrepancies between the true probability distribution and the predicted scores, as measured by KL divergence.

The trees obtained by optimizing AUC have fewer terminal leaves than those obtained by minimizing KL divergence, resulting in lower score heterogeneity in the former. The Quantile Ratio (QR in Table~\ref{tbl:trees-metrics}) quantifies this variability.
QR quantifies the ratio between the difference of the 90th and 10th percentiles of the scores and the difference of the 90th and 10th percentiles of the true probabilities. A ratio of 1 indicates score spread equals true probability spread. A ratio greater than 1 implies wider score distribution than true probabilities, indicating more variability. Conversely, a ratio less than 1 suggests scores are more concentrated around central values than true probabilities, indicating less variability. Regardless of whether AUC or KL divergence is optimized, the QR remains very close to 1, slightly exceeding it. For DGPs~3 and 4 (Tables~\ref{tbl:trees-metrics-dgp-3} and \ref{tbl:trees-metrics-dgp-4}), minimizing KL divergence yields a QR close to 1, while maximizing AUC results in a QR significantly less than 1. This discrepancy reflects lower score variability, as shown in Figs.~\ref{fig:bp_synthetic_trees_3} and \ref{fig:bp_synthetic_trees_4}.

{
\setlength{\tabcolsep}{1.9pt}

\begin{table*}[ht!]
    \caption{Average performance and calibration metrics, for regression trees, computed on test set over 100 replications under DGP~1, without noise variables.}\label{tbl:trees-metrics}
    \centering\small
    \begin{tabular}{cccccccc}
\toprule
Tree & No. Leaves & MSE & AUC & Brier & ICI & KL Div. & QR \\
\midrule
Smallest & 5 (0.603) & 0.012 (0.002) & 0.720 (0.007) & 0.212 (0.002) & 0.012 (0.004) & 1.807 (0.164) & 0.930 (0.055)\\
Largest & 1 999 (44.796) & 0.134 (0.004) & 0.629 (0.006) & 0.333 (0.005) & 0.289 (0.008) & 4.190 (0.180) & 1.627 (0.010)\\
MSE* & 30 (4.800) & 0.003 (0.000) & 0.751 (0.005) & 0.203 (0.002) & 0.014 (0.004) & 0.300 (0.120) & 1.021 (0.048)\\
AUC* & 33 (6.639) & 0.004 (0.000) & 0.750 (0.005) & 0.203 (0.002) & 0.015 (0.004) & 0.283 (0.117) & 1.022 (0.047)\\
Brier* & 31 (6.170) & 0.004 (0.000) & 0.750 (0.005) & 0.203 (0.002) & 0.014 (0.004) & 0.284 (0.128) & 1.020 (0.046)\\
ICI* & 10 (6.782) & 0.008 (0.003) & 0.733 (0.012) & 0.208 (0.004) & 0.012 (0.004) & 1.262 (0.545) & 0.976 (0.078)\\
KL* & 97 (38.855) & 0.008 (0.003) & 0.741 (0.007) & 0.207 (0.003) & 0.029 (0.010) & 0.086 (0.019) & 1.078 (0.038)\\
\bottomrule
\end{tabular}
\end{table*}
}

Fig.~\ref{fig:trees-kl-calib-leaves-1} visualizes the relationship between KL divergence and calibration, measured by ICI, according to the number of terminal leaves in the tree for data simulated under DGP 1. Excluding the potential case of a single-leaf tree, which would be perfectly calibrated, an increase in the number of leaves worsens calibration on the validation sample. Concurrently, KL divergence exhibits a convex relationship with the number of leaves. Similar observations are reported for other DGPs and when using the Brier score instead of ICI as the calibration measure (see Figs.~\ref{fig:trees-kl-calib-brier-leaves-all} and \ref{fig:trees-kl-calib-ici-leaves-all} in the appendix).

\begin{figure}[ht!]
    \centering
    \includegraphics[width = \columnwidth]{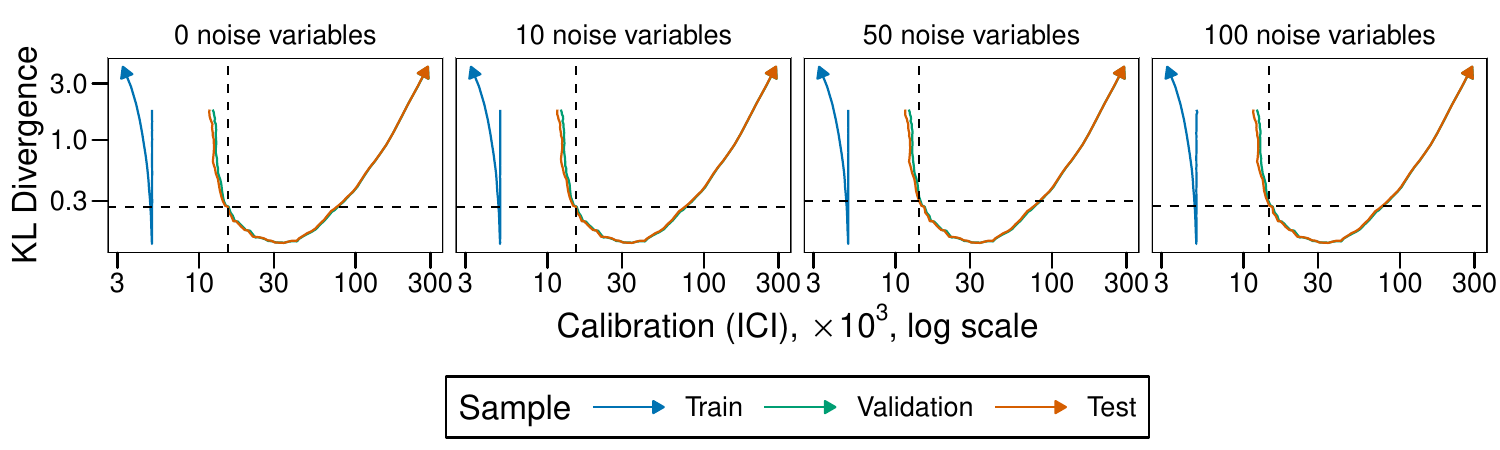}
    \caption{KL Divergence and Calibration (DGP 1) across increasing average number of tree leaves. Dashed lines represent values when maximizing AUC. Arrows indicate increasing number of leaves.}
    \label{fig:trees-kl-calib-leaves-1}
\end{figure}

Overall, the results show that increasing tree leaves tends to worsen calibration but may also decrease the divergence between predicted scores and the true probability distribution. This highlights the balance between model complexity and the fidelity of score distributions to underlying probabilities. Our findings suggest that optimizing for performance metrics, calibration metrics, and divergence metrics often leads to different outcomes, indicating that these objectives may not be aligned.

\section{Simulations}

Simulations using a decision tree on synthetic data to estimate the probabilities of a binary event showed that optimizing KL distance closely approximates the probability distribution of predicted scores with low impact on predictive performance. Our findings\footnote{\textbf{Reproducibility}: All the \texttt{R} code required to reproduce the results is available in an archive. Explanations and outputs are provided in an accompanying e-book.} highlight that calibration is meaningful only when the distribution of estimated scores mirrors the true underlying probabilities. Otherwise, a model may seem well-calibrated but fail to reflect the actual event probabilities. We extend this analysis to include other tree-based models, such as RF and XGBoost and run simulations on both synthetic and real-world data. Within a hyperparameter optimization framework targeting specific criteria, we explore discrepancies between AUC performance, calibration (Brier Score and ICI), and alignment with true probabilities (KL divergence). By comparing the outcomes from GLM and two GAM models---one with all explanatory variables and the other with Lasso penalization---we aim to show that calibration metrics alone are inadequate for decision-making. 

\subsection{Synthetic Data}

Comparisons of performance, calibration, and divergence metrics are conducted while varying key hyperparameters of RF and XGBoost on simulations, using 100 replications for each of the four DGPs presented in Section \ref{sec:synth-data}. A controlled environment is used where the KL divergence is calculated using the true simulated probabilities for each scenario. The binary outcome is estimated using two machine learning models (RF and XGBoost), with hyperparameters selected by optimizing either AUC (benchmark) or KL divergence on the validation set. For RF, we vary the minimal number of observations in the terminal leaves as well as the number of candidate variables for splitting. For XGBoost, we vary the tree depth and the number of boosting iterations. Further details on the hyperparameter optimization grids are provided in Sections~\ref{sec:appendix-rf} (RF) and \ref{sec:appendix-xgb} (XGBoost). For comparison, three other models, GLM \citep{nelder_1972_glm}, GAM \citep{hastie_gam_1986}, and GAMSEL \citep{chouldechova2015generalized}, are also estimated. In these models, all available predictors in the dataset are included. No interaction terms are added (therefore, the models estimated under DGP 4 are misspecified).

Table~\ref{tab:synthetic-results-dgp1-short} reports the metrics used previously to measure performance (AUC), calibration (Brier Score and ICI), divergence between the distribution of scores and that of probabilities (KL), as well as the ratio of the interquantile difference (90th and 10th) of the scores over that of the reference prior (QR). These metrics are computed on the test set. For space considerations, only the results from simulations conducted under DGP 1 are presented here, for various numbers of noise variables introduced in the datasets (See Tables~\ref{tab:synthetic-results-dgp1}, \ref{tab:synthetic-results-dgp2}, \ref{tab:synthetic-results-dgp3}, and \ref{tab:synthetic-results-dgp4} in the appendix for results for all the DGPs, also including metrics computed for the GLM, GAM, and GAMSEL models). Regardless of the number of noise variables introduced in the dataset, selecting hyperparameters to minimize KL divergence rather than maximize AUC leads to a slight reduction in performance (the variation in AUC, $\Delta \text{AUC}$, ranges from 0 to 0.01). At the same time, calibration measures (Brier Score, ICI) show little or no variation. The variability of the scores increases when the objective is the minimization of KL divergence. Specifically, the interquantile range ratios increase more substantially. These results are also evident in the score histograms presented in the appendix (Figs.~\ref{fig:bp_synthetic_forest_1}, \ref{fig:bp_synthetic_forest_2}, \ref{fig:bp_synthetic_forest_3}, and \ref{fig:bp_synthetic_forest_4} for RF, and \ref{fig:bp_synthetic_xgb_1}, \ref{fig:bp_synthetic_xgb_2}, \ref{fig:bp_synthetic_xgb_3}, and \ref{fig:bp_synthetic_xgb_4} for XGBoost). However, it should be noted that, for RF, when the number of noise variables is high (50 or 100), the interquantile range ratio remains below 1, indicating that in these cases, the tree-based models underestimate the variability of the probabilities. Finally, concerning KL divergence, an improvement is observed when optimization is based on this criterion, with this relative improvement increasing with the number of noise variables. The improvement is more pronounced for RF than for XGBoost.

To summarize, for all DGPs, the RF and XGBoost models yielding the optimal KL do not lead to significant performance loss. Opting for the best KL divergence does not cause a large change in calibration but reduces the distance between the distribution of estimated scores and that of the underlying probabilities, the reduction being more substantial in the presence of noise in the dataset.

{
\setlength{\tabcolsep}{2pt}

\begin{table*}[t]
    \caption{Metrics computed on the test set for models selected based on KL divergence across 100 replications of DGP 1, with variations (\(\Delta\)) compared to metrics when models are selected based on AUC. Standard errors are provided in parentheses.} 
    \label{tab:synthetic-results-dgp1-short}
    \centering\small
    \begin{tabular}{clcccccccccc}
\toprule
Noise & Model & AUC & Brier & ICI & KL & QR & \(\Delta\)AUC & \(\Delta\)Brier & \(\Delta\)ICI & \(\Delta\)KL & \(\Delta\)QR\\
\midrule
 0 & RF & 0.75 (0.01) & 0.20 (0.00) & 0.02 (0.01) & 0.03 (0.01) & 1.04 (0.02) & \cellcolor[HTML]{FFCCCC}{\textcolor[HTML]{333333}{-0.01}} & \cellcolor[HTML]{FFD6D6}{\textcolor[HTML]{333333}{0.00}} & \cellcolor[HTML]{FFD6D6}{\textcolor[HTML]{333333}{0.01}} & \cellcolor[HTML]{E9F6E9}{\textcolor[HTML]{1A1A1A}{-0.03}} & 0.04\\
 & XGB & 0.75 (0.01) & 0.20 (0.00) & 0.01 (0.00) & 0.02 (0.01) & 1.01 (0.02) & \cellcolor[HTML]{FFD6D6}{\textcolor[HTML]{333333}{0.00}} & \cellcolor[HTML]{FFD6D6}{\textcolor[HTML]{333333}{0.00}} & \cellcolor[HTML]{FFD6D6}{\textcolor[HTML]{333333}{0.00}} & \cellcolor[HTML]{E9F6E9}{\textcolor[HTML]{1A1A1A}{-0.03}} & 0.04\\
 \cmidrule{1-11}
10 & RF &  0.75 (0.01) & 0.20 (0.00) & 0.01 (0.00) & 0.01 (0.00) & 1.00 (0.02) & \cellcolor[HTML]{FFC2C2}{\textcolor[HTML]{2B2B2B}{-0.01}} & \cellcolor[HTML]{FFCCCC}{\textcolor[HTML]{333333}{0.00}} & \cellcolor[HTML]{FFD6D6}{\textcolor[HTML]{333333}{0.00}} & \cellcolor[HTML]{E9F6E9}{\textcolor[HTML]{1A1A1A}{-0.05}} & 0.02\\
 & XGB &  0.75 (0.01) & 0.20 (0.00) & 0.02 (0.00) & 0.01 (0.00) & 1.02 (0.02) & \cellcolor[HTML]{FFCCCC}{\textcolor[HTML]{333333}{-0.01}} & \cellcolor[HTML]{FFD6D6}{\textcolor[HTML]{333333}{0.00}} & \cellcolor[HTML]{FFD6D6}{\textcolor[HTML]{333333}{0.00}} & \cellcolor[HTML]{E9F6E9}{\textcolor[HTML]{1A1A1A}{-0.06}} & 0.08\\
\cmidrule{1-11}
50 & RF &  0.75 (0.01) & 0.21 (0.00) & 0.03 (0.01) & 0.14 (0.02) & 0.81 (0.02) & \cellcolor[HTML]{FFD6D6}{\textcolor[HTML]{333333}{0.00}} & \cellcolor[HTML]{FFD6D6}{\textcolor[HTML]{333333}{0.00}} & \cellcolor[HTML]{E9F6E9}{\textcolor[HTML]{1A1A1A}{-0.01}} & \cellcolor[HTML]{E9F6E9}{\textcolor[HTML]{1A1A1A}{-0.12}} & 0.04\\
 & XGB & 0.74 (0.01) & 0.21 (0.00) & 0.02 (0.01) & 0.01 (0.00) & 1.02 (0.02) & \cellcolor[HTML]{FFC2C2}{\textcolor[HTML]{2B2B2B}{-0.01}} & \cellcolor[HTML]{FFCCCC}{\textcolor[HTML]{333333}{0.00}} & \cellcolor[HTML]{FFD6D6}{\textcolor[HTML]{333333}{0.00}} & \cellcolor[HTML]{E9F6E9}{\textcolor[HTML]{1A1A1A}{-0.08}} & 0.10\\
\cmidrule{1-11}
 100 & RF  & 0.74 (0.01) & 0.21 (0.00) & 0.06 (0.01) & 0.45 (0.03) & 0.61 (0.02) & \cellcolor[HTML]{FFD6D6}{\textcolor[HTML]{333333}{0.00}} & \cellcolor[HTML]{E9F6E9}{\textcolor[HTML]{1A1A1A}{0.00}} & \cellcolor[HTML]{E9F6E9}{\textcolor[HTML]{1A1A1A}{-0.01}} & \cellcolor[HTML]{D4F2D4}{\textcolor[HTML]{1A1A1A}{-0.13}} & 0.04\\
 & XGB & 0.74 (0.01) & 0.21 (0.00) & 0.02 (0.01) & 0.01 (0.00) & 1.02 (0.01) & \cellcolor[HTML]{FFB8B8}{\textcolor[HTML]{2B2B2B}{-0.01}} & \cellcolor[HTML]{FFCCCC}{\textcolor[HTML]{333333}{0.00}} & \cellcolor[HTML]{FFD6D6}{\textcolor[HTML]{333333}{0.00}} & \cellcolor[HTML]{E9F6E9}{\textcolor[HTML]{1A1A1A}{-0.08}} & 0.11\\
\bottomrule
\end{tabular}
\end{table*}
}

\subsection{Real-World Data}\label{sec:real-world-data}

In the context of real-world data, the probability distributions corresponding to the response variable \(Y\) are not directly observable. Therefore, we employ a Bayesian framework with a Beta prior to model the underlying data distributions. While expert opinions could inform the parameters of this prior distribution, we estimate them using a statistical learning model---either a GLM or a GAM---for illustrative purposes. Using these prior distributions, we can replicate the procedure previously applied to simulated data and compare the performance of tree-based models when the objective is to maximize the AUC or minimize the KL divergence between the scores and the prior distribution. 
The data is split into 64\% for training, 16\% for validation, and 20\% for testing the fine-tuned models. Note that for the GAM, GLM, and GAMSEL models, the validation samples are not used.

We use ten datasets available on the UCI machine learning repository: (i) \texttt{abalone}, (ii) \texttt{adult}, (iii) \texttt{bank} (bank marketing), (iv) \texttt{default} (German credit), (v) \texttt{drybean}, (vi) \texttt{coupon} (in-vehicle coupon recommendation), (vii) \texttt{mushroom}, (viii) \texttt{occupancy} (occupancy Detection), (ix) \texttt{winequality} (wine quality), and (x) \texttt{spambase}. For most of these datasets, the initial objective is the classification of a binary variable. For the \texttt{adult} dataset, this binary variable is constructed from a continuous variable indicating annual salary. For the \texttt{drybean} and \texttt{winequality} datasets, the initial target variable is multi-class. In that case, we define the target as the most frequent level of this multi-class variable. More details on the datasets and the target variables are given in Section~\ref{sec:appendix-datasets} of the appendix.

Table~\ref{tab:real-data-results-gamsel-short} displays changes in metrics on the test sets when optimizing KL divergence instead of AUC, assuming the target follows a Beta distribution with parameters estimated from the GAMSEL model. See Tables~\ref{tab:real-data-results-glm}, \ref{tab:real-data-results-gam}, and \ref{tab:real-data-results-gamsel} for additional metrics with different priors, and Table~\ref{tab:results-real-glm} for metrics of GLM, GAM, and GAMSEL models.

Similar to what was observed with simulated data, when the objective becomes minimizing the KL divergence between the scores and the prior distribution of probabilities, this leads to a slight reduction in the AUC in most cases, for both RFs and XGBoost. The performance loss due to prioritizing KL divergence (\(\Delta \text{AUC}\)) is more pronounced for the \texttt{coupons} dataset (-0.07 with XGBoost) and the \texttt{winequality} dataset (-0.06 and -0.08 for RF and XGBoost, respectively). For these datasets, this performance loss is accompanied by a significant decrease in KL divergence (\(\Delta \text{KL}\)) when using XGBoost. This suggests that selecting XGBoost with optimal KL divergence enhances the resemblance between the prior Beta distribution and the predicted scores. It implies that if we have confidence in the prior distribution, using event probabilities directly from the model selected by optimizing AUC might not be advisable compared to using hyperparameters that yield optimal KL. While optimizing KL divergence consistently leads to a reduction in this divergence compared to the benchmark, we observed that it often results in a slight deterioration of calibration as measured by the Brier score (\(\Delta \text{Brier}\)) or ICI (\(\Delta \text{ICI}\)). When optimizing for the Brier score (Brier* in Table~\ref{tab:real-data-results-gamsel}), the AUC remains unchanged compared to the benchmark, but the score distribution significantly deviates from the prior distribution. Similarly, with the Integrated Calibration Index (ICI* in Table~\ref{tab:real-data-results-gamsel}), the KL divergence increases and the AUC significantly decreases.

To summarize, in cases where strong belief is placed in the prior, our results suggest prioritizing the minimization of KL divergence for selecting hyperparameters. The exercise hints that, in comparison to optimizing models based solely on AUC maximization or ICI minimization, choosing models that minimize KL divergence ensures a better representation of the true probability distributions with only a marginal decrease in AUC. Therefore, in decision-making contexts where accurate probability estimation is crucial, minimizing KL should be considered a more effective strategy than maximizing AUC or minimizing Brier score or ICI.

\begin{table}[htb]
    \caption{Changes in AUC, Brier score, ICI, and KL divergence with Beta-distributed priors when models are optimized based on KL divergence instead of AUC.}
    \label{tab:real-data-results-gamsel-short}
    \centering\scriptsize
    \begin{tabular}{llllll}
\toprule
Dataset & Model & \(\Delta \text{AUC}\) & \(\Delta \text{Brier}\) & \(\Delta \text{ICI}\) & \(\Delta \text{KL}\)\\
\midrule
 & RF & \cellcolor[HTML]{FFD6D6}{\textcolor[HTML]{333333}{0.00}} & \cellcolor[HTML]{E9F6E9}{\textcolor[HTML]{1A1A1A}{0.00}} & \cellcolor[HTML]{FFD6D6}{\textcolor[HTML]{333333}{0.00}} & \cellcolor[HTML]{E9F6E9}{\textcolor[HTML]{1A1A1A}{-0.01}}\\
\cmidrule{2-6}
\multirow[t]{-2}{*}[1\dimexpr\aboverulesep+\belowrulesep+\cmidrulewidth]{\raggedright\arraybackslash abalone} & XGB & \cellcolor[HTML]{FFD6D6}{\textcolor[HTML]{333333}{0.00}} & \cellcolor[HTML]{FFD6D6}{\textcolor[HTML]{333333}{0.00}} & \cellcolor[HTML]{FFC2C2}{\textcolor[HTML]{2B2B2B}{0.02}} & \cellcolor[HTML]{E9F6E9}{\textcolor[HTML]{1A1A1A}{-0.16}}\\
\cmidrule{1-6}
 & RF & \cellcolor[HTML]{FFD6D6}{\textcolor[HTML]{333333}{-0.01}} & \cellcolor[HTML]{FFCCCC}{\textcolor[HTML]{333333}{0.01}} & \cellcolor[HTML]{FFC2C2}{\textcolor[HTML]{2B2B2B}{0.02}} & \cellcolor[HTML]{E9F6E9}{\textcolor[HTML]{1A1A1A}{-0.04}}\\
\cmidrule{2-6}
\multirow[t]{-2}{*}[1\dimexpr\aboverulesep+\belowrulesep+\cmidrulewidth]{\raggedright\arraybackslash adult} & XGB & \cellcolor[HTML]{FFC2C2}{\textcolor[HTML]{2B2B2B}{-0.02}} & \cellcolor[HTML]{FFB8B8}{\textcolor[HTML]{2B2B2B}{0.01}} & \cellcolor[HTML]{FFB8B8}{\textcolor[HTML]{2B2B2B}{0.02}} & \cellcolor[HTML]{E9F6E9}{\textcolor[HTML]{1A1A1A}{-0.26}}\\
\cmidrule{1-6}
 & RF & \cellcolor[HTML]{FFB8B8}{\textcolor[HTML]{2B2B2B}{-0.03}} & \cellcolor[HTML]{FFC2C2}{\textcolor[HTML]{2B2B2B}{0.01}} & \cellcolor[HTML]{FFB8B8}{\textcolor[HTML]{2B2B2B}{0.03}} & \cellcolor[HTML]{E9F6E9}{\textcolor[HTML]{1A1A1A}{-0.25}}\\
\cmidrule{2-6}
\multirow[t]{-2}{*}[1\dimexpr\aboverulesep+\belowrulesep+\cmidrulewidth]{\raggedright\arraybackslash bank} & XGB & \cellcolor[HTML]{FFA3A3}{\textcolor[HTML]{1F1F1F}{-0.04}} & \cellcolor[HTML]{FFC2C2}{\textcolor[HTML]{2B2B2B}{0.01}} & \cellcolor[HTML]{FFC2C2}{\textcolor[HTML]{2B2B2B}{0.02}} & \cellcolor[HTML]{D4F2D4}{\textcolor[HTML]{1A1A1A}{-0.50}}\\
\cmidrule{1-6}
 & RF & \cellcolor[HTML]{FFD6D6}{\textcolor[HTML]{333333}{0.00}} & \cellcolor[HTML]{FFD6D6}{\textcolor[HTML]{333333}{0.00}} & \cellcolor[HTML]{FFCCCC}{\textcolor[HTML]{333333}{0.01}} & \cellcolor[HTML]{E9F6E9}{\textcolor[HTML]{1A1A1A}{-0.02}}\\
\cmidrule{2-6}
\multirow[t]{-2}{*}[1\dimexpr\aboverulesep+\belowrulesep+\cmidrulewidth]{\raggedright\arraybackslash default} & XGB & \cellcolor[HTML]{2F5D2F}{\textcolor[HTML]{E6E6E6}{0.00}} & \cellcolor[HTML]{E9F6E9}{\textcolor[HTML]{1A1A1A}{0.00}} & \cellcolor[HTML]{FFD6D6}{\textcolor[HTML]{333333}{0.00}} & \cellcolor[HTML]{E9F6E9}{\textcolor[HTML]{1A1A1A}{0.00}}\\
\cmidrule{1-6}
 & RF & \cellcolor[HTML]{FFD6D6}{\textcolor[HTML]{333333}{0.00}} & \cellcolor[HTML]{FFB8B8}{\textcolor[HTML]{2B2B2B}{0.01}} & \cellcolor[HTML]{FFA3A3}{\textcolor[HTML]{1F1F1F}{0.04}} & \cellcolor[HTML]{E9F6E9}{\textcolor[HTML]{1A1A1A}{-0.44}}\\
\cmidrule{2-6}
\multirow[t]{-2}{*}[1\dimexpr\aboverulesep+\belowrulesep+\cmidrulewidth]{\raggedright\arraybackslash drybean} & XGB & \cellcolor[HTML]{FFD6D6}{\textcolor[HTML]{333333}{0.00}} & \cellcolor[HTML]{FFCCCC}{\textcolor[HTML]{333333}{0.00}} & \cellcolor[HTML]{FFC2C2}{\textcolor[HTML]{2B2B2B}{0.02}} & \cellcolor[HTML]{E9F6E9}{\textcolor[HTML]{1A1A1A}{-0.25}}\\
\cmidrule{1-6}
 & RF & \cellcolor[HTML]{FFCCCC}{\textcolor[HTML]{333333}{-0.01}} & \cellcolor[HTML]{FFCCCC}{\textcolor[HTML]{333333}{0.01}} & \cellcolor[HTML]{FFD6D6}{\textcolor[HTML]{333333}{0.00}} & \cellcolor[HTML]{E9F6E9}{\textcolor[HTML]{1A1A1A}{-0.01}}\\
\cmidrule{2-6}
\multirow[t]{-2}{*}[1\dimexpr\aboverulesep+\belowrulesep+\cmidrulewidth]{\raggedright\arraybackslash coupon} & XGB & \cellcolor[HTML]{FF8585}{\textcolor[HTML]{101010}{-0.07}} & \cellcolor[HTML]{FFA3A3}{\textcolor[HTML]{1F1F1F}{0.02}} & \cellcolor[HTML]{426F42}{\textcolor[HTML]{E6E6E6}{-0.07}} & \cellcolor[HTML]{6CA56C}{\textcolor[HTML]{E6E6E6}{-3.10}}\\
\cmidrule{1-6}
 & RF & \cellcolor[HTML]{FFD6D6}{\textcolor[HTML]{333333}{0.00}} & \cellcolor[HTML]{FF9999}{\textcolor[HTML]{1A1A1A}{0.02}} & \cellcolor[HTML]{FFC2C2}{\textcolor[HTML]{2B2B2B}{0.02}} & \cellcolor[HTML]{E9F6E9}{\textcolor[HTML]{1A1A1A}{-0.24}}\\
\cmidrule{2-6}
\multirow[t]{-2}{*}[1\dimexpr\aboverulesep+\belowrulesep+\cmidrulewidth]{\raggedright\arraybackslash mushroom} & XGB & \cellcolor[HTML]{FFD6D6}{\textcolor[HTML]{333333}{0.00}} & \cellcolor[HTML]{FFADAD}{\textcolor[HTML]{232323}{0.02}} & \cellcolor[HTML]{FF9999}{\textcolor[HTML]{1A1A1A}{0.05}} & \cellcolor[HTML]{D4F2D4}{\textcolor[HTML]{1A1A1A}{-0.83}}\\
\cmidrule{1-6}
 & RF & \cellcolor[HTML]{FFD6D6}{\textcolor[HTML]{333333}{0.00}} & \cellcolor[HTML]{FFB8B8}{\textcolor[HTML]{2B2B2B}{0.02}} & \cellcolor[HTML]{FF8585}{\textcolor[HTML]{101010}{0.07}} & \cellcolor[HTML]{D4F2D4}{\textcolor[HTML]{1A1A1A}{-0.65}}\\
\cmidrule{2-6}
\multirow[t]{-2}{*}[1\dimexpr\aboverulesep+\belowrulesep+\cmidrulewidth]{\raggedright\arraybackslash occupancy} & XGB & \cellcolor[HTML]{FFD6D6}{\textcolor[HTML]{333333}{0.00}} & \cellcolor[HTML]{FFD6D6}{\textcolor[HTML]{333333}{0.00}} & \cellcolor[HTML]{FFADAD}{\textcolor[HTML]{232323}{0.04}} & \cellcolor[HTML]{E9F6E9}{\textcolor[HTML]{1A1A1A}{-0.24}}\\
\cmidrule{1-6}
 & RF & \cellcolor[HTML]{FF8F8F}{\textcolor[HTML]{141414}{-0.06}} & \cellcolor[HTML]{FF7A7A}{\textcolor[HTML]{0A0A0A}{0.04}} & \cellcolor[HTML]{FFCCCC}{\textcolor[HTML]{333333}{0.01}} & \cellcolor[HTML]{E9F6E9}{\textcolor[HTML]{1A1A1A}{-0.38}}\\
\cmidrule{2-6}
\multirow[t]{-2}{*}[1\dimexpr\aboverulesep+\belowrulesep+\cmidrulewidth]{\raggedright\arraybackslash winequality} & XGB & \cellcolor[HTML]{FF7A7A}{\textcolor[HTML]{0A0A0A}{-0.08}} & \cellcolor[HTML]{FF8F8F}{\textcolor[HTML]{141414}{0.03}} & \cellcolor[HTML]{2F5D2F}{\textcolor[HTML]{E6E6E6}{-0.07}} & \cellcolor[HTML]{2F5D2F}{\textcolor[HTML]{E6E6E6}{-4.53}}\\
\cmidrule{1-6}
 & RF & \cellcolor[HTML]{FFCCCC}{\textcolor[HTML]{333333}{-0.01}} & \cellcolor[HTML]{FF9999}{\textcolor[HTML]{1A1A1A}{0.02}} & \cellcolor[HTML]{FFC2C2}{\textcolor[HTML]{2B2B2B}{0.02}} & \cellcolor[HTML]{E9F6E9}{\textcolor[HTML]{1A1A1A}{-0.10}}\\
\cmidrule{2-6}
\multirow[t]{-2}{*}[1\dimexpr\aboverulesep+\belowrulesep+\cmidrulewidth]{\raggedright\arraybackslash spambase} & XGB & \cellcolor[HTML]{FFD6D6}{\textcolor[HTML]{333333}{-0.01}} & \cellcolor[HTML]{FFADAD}{\textcolor[HTML]{232323}{0.02}} & \cellcolor[HTML]{FFC2C2}{\textcolor[HTML]{2B2B2B}{0.02}} & \cellcolor[HTML]{D4F2D4}{\textcolor[HTML]{1A1A1A}{-0.75}}\\
\bottomrule
\end{tabular}
\end{table}

\section{Conclusion}

Traditionally, in binary classification, model hyperparameters are chosen to optimize performance. However, when the focus is on both the discriminative ability of the classifier and the quantification of underlying risks, it is important to ensure that the model's estimated scores reflect true probabilities. Common calibration metrics are insufficient for this purpose. Using synthetic data, we demonstrated that the goals of performance, calibration, and representativeness of scores w.r.t. true probabilities are not necessarily aligned. Optimizing model calibration increases the divergence between score and probability distributions. Our simulations suggest that optimizing KL divergence is more effective, although it results in a small loss of accuracy. With real data, minimizing KL divergence directly is not feasible since the true probabilities are not observable. However, if decision-makers have prior knowledge about the distribution of true probabilities, it can be used to select the model. Results from real data simulations align with those obtained from synthetic data.

{
\small
\bibliography{biblio}
}

\appendix

\begin{center}
\vspace{4mm}
{\scshape\Large Appendix}\\
\end{center}

\setcounter{figure}{0}  
\setcounter{table}{0}  
\renewcommand{\thetable}{\thesection\arabic{table}}
\renewcommand{\thefigure}{\thesection\arabic{figure}}

\section{Proof of Proposition \ref{prop:logistic}}\label{proof:prop}

As a starting point, suppose that there is single feature, so that $\mathbf{x}$ can be denoted $x$. Suppose that $Y|{X}={x} \sim \mathcal{B}\big(s({x})\big)$ where
$$s(\mathbf{x})={\big[1+\exp[-(\beta_0+\beta_1 x)]\big]}
,$$
with $\beta_1\neq 0$, and let $\widehat{\beta}_0$ and $\widehat{{\beta}}_1$, denote maximum likelihood estimators, so that the model is well specified. Then, for any ${x}$, the score is
$$
\hat{s}(\mathbf{x})={\big[1+\exp[-(\hat\beta_0+\beta_1 x)]\big]^{-1}}.
$$
Here both $s$ and $\hat{s}$ are continuous and invertible. Given $p\in(0,1)$, $\{\hat{s}({x})=p\}$ corresponds to $
\{x=\hat{s}^{-1}(p)\},$
since mapping $\hat{s}$ is one-to-one. Thus, $Y$ conditional on $\{\hat{s}({x})=p\}$ is therefore a Bernoulli variable with mean $s(\hat{s}^{-1}(p))$.
And because $\hat{s}$ and $s$ are continuous and bijective functions
$$
\begin{cases}
    \hat{\beta}_0\to\beta_0\\
    \hat{\beta}_1\to\beta_1
\end{cases}
\text{ as }n\to\infty
~\Longrightarrow~\forall x,p~
\begin{cases}
    \hat{s}(x)\to s(x)\\
    \hat{s}^{-1}(p)\to s^{-1}(p)
\end{cases}
\text{ as }n\to\infty
$$
since the model is well specified, where the convergence is in probability here. 
Thus
$$
s(\hat{s}^{-1}(p))\to p\text{ as }n\to\infty
$$
and
$$
\mathbb{E}[Y|\hat{s}(x)=y]=p(\hat{s}^{-1}(p))\to p\text{ as }n\to\infty.
$$
This property holds in higher (fixed) dimensions, if the model is well specified, since functions are continuous and invertible (linear models with continuous and invertible link functions). The case where the dimension of $\mathbf{x}$ increases with $n$ is discussed in \citet{bai2021don}.

\section{Resampling With a Target Distribution for the Predicted Scores}\label{sec-appendix-resampling-algo}

In our generated sample, $\mathcal{D}=\{(\boldsymbol{x}_i,y_i,{s}_i),i\in\{1,\cdots,n\}\}$, let $\widehat{\phi}$ denote the (empirical) density of scores. For the various scenarios, suppose that we want a specific distribution for the scores, denoted $g$ (uniform, Beta, etc.). A classical idea is to use ``rejection sampling" techniques to create a subsample of the dataset. Set 
$$
c = \sup_{s\in(0,1)} \frac{\widehat{\phi}(s)}{g(s)} \leq \infty.
$$
If $c$ is finite, and not too large, we can use the standard rejection technique, described in Algorithm \ref{alg:cap:1}. In a nutshell, point $i$ is kept with probability $(cg(s_i))^{-1}\widehat{\phi}(s_i)$.

\begin{algorithm}
\caption{Subsample a dataset so that the distribution of scores has density $g$ (Rejection, $c$ small)}\label{alg:cap:1}
\begin{algorithmic}
\Require $\mathcal{D}=\{(\boldsymbol{x}_i,y_i,{s}_i),i\in\{1,\cdots,n\}\}$ and $g$ (target density)
\State $\mathcal{I} \gets ,i\in\{1,\cdots,n\}$
\State $\widehat{\phi} \gets$ density of $\{({s}_i),i\in\mathcal{I}\}$, using \cite{chen1999beta}
\State $c = \displaystyle\sup_{s\in(0,1)} \frac{\widehat{\phi}(s)}{g(s)} \gets \max_{i=1,\cdots,n}\displaystyle\frac{\widehat{\phi}(s_i)}{g(s_i)} $
\For{$i\in\{1,\cdots,n\}$}
    \State $U \gets \mathcal{U}([0,1])$
    \If{$\displaystyle U > \frac{\widehat{\phi}(s_i)}{c\,g(s_i)}$}
        \State $\mathcal{I} \gets \mathcal{I}\backslash\{i\}$ , i.e., ``reject"
\EndIf 
\EndFor
\State $s\mathcal{D}=\{(\boldsymbol{x}_i,y_i,{s}_i),i\in\mathcal{I}\}$
\end{algorithmic}
\end{algorithm}  

If $c$ is too large, we use an iterative algorithm, described in Algorithm \ref{alg:cap:2}, inspired by \cite{rumbell2023novel} (alternative options could be the ``Empirical Supremum Rejection Sampling" introduced in \cite{caffo2002empirical}, for instance)

\begin{algorithm}
\caption{Subsample a dataset so that the distribution of scores has density $g$ (Iterative Rejection, $c$ large)}\label{alg:cap:2}
\begin{algorithmic}
\Require $\mathcal{D}=\{(\boldsymbol{x}_i,y_i,{s}_i),i\in\{1,\cdots,n\}\}$, $\epsilon>0$ and $g$ (target density)
\State $\mathcal{I} \gets \{1,\cdots,n\}$
\State $\widehat{\phi} \gets$ density of $\{({s}_i),i\in\mathcal{I}\}$, using \cite{chen1999beta}
\State $\widehat{\Phi} \gets$ c.d.f. of $\widehat{\phi}$, $\displaystyle{\widehat{\Phi}(t)=\int_0^t \widehat{\phi}(s)\mathrm{d}s}$
\State $d \gets \|\widehat{\Phi}-G\|_{\infty}$ (Kolmogorov-Smirnov distance)
\While{$d>\epsilon$}
\State $\mathcal{J} \gets \mathcal{I}$
\For{$i\in\mathcal{I}$}
    \State $U \gets \mathcal{U}([0,1])$
    \If{$\displaystyle U>\frac{\widehat{\phi}(s_i)}{g(s_i)}$}
        \State $\mathcal{J} \gets \mathcal{J}\backslash\{i\}$ , i.e., ``reject" observation $i$
\EndIf 
\EndFor
\State $\mathcal{I} \gets \mathcal{J}$
\State $\widehat{\phi} \gets$ density of $\{({s}_i),i\in\mathcal{I}\}$ and $\widehat{\Phi} \gets$ c.d.f. of $\widehat{\phi}$

\State $d \gets \|\widehat{\Phi}-G\|_{\infty}$ 
\EndWhile
\State $s\mathcal{D}=\{(\boldsymbol{x}_i,y_i,{s}_i),i\in\mathcal{I}\}$
\end{algorithmic}
\end{algorithm}  

\section{Simulated Data}

To simulate data, we consider different DGPs. The first three are from \citet{Ojeda_2023}. In the fourth, we add an interaction term between two predictors. Each scenario uses a logistic model to generate the outcome. Let \(Y_i\) be a binary variable following a Bernoulli distribution: \(Y_i \sim B(p_i)\), where \(p_i\) is the probability of observing \(Y_i = 1\). The probability \(p_i\) is defined by:
\begin{equation}
p_i = \mathbb{P}(Y = 1 \mid \mathbf{x}_i) = \big[{1+\exp(-\eta_i)}\big]^{-1}.\enspace\label{eq-true-propensity-appendix}
\end{equation}

For the second DGP, to introduce non-linearities, \(p^3\) is used as true probabilities instead of $p$. 

For all DGPs, \(\eta_i =\mathbf{x}_i^\top \boldsymbol{\beta}\), %
where \(\mathbf{x}_i\) is a vector of covariates and \(\boldsymbol{\beta}\) is a vector of arbitrary scalars. The covariate vector includes two continuous predictors for DGPs 1 and 2. For DGP 3, it includes five continuous and five categorical predictors. For DGP 4, it contains three continuous variables, the square of the first variable, and an interaction term between the second and third variables. Specifically, \(\eta_i = \beta_1 x_{1,i} + \beta_2 x_{2,i} + \beta_3 x_{3,i} + \beta_4 x_{1,i}^2 + \beta_5 x_{2,i} \times x_{3,i}\). Continuous predictors are drawn from \(\mathcal{N}(0,1)\). Categorical predictors consist of two variables with two categories, one with three categories, and one with five categories, all uniformly distributed. The values of coefficients \(\boldsymbol{\beta}\) are reported in Table~\ref{tab:dgp}.

For each DGP, we generate data considering four scenarios with varying numbers of noise variables: 0, 10, 50, or 100 variables drawn from \(\mathcal{N}(0,1)\). 

 For the fourth DGP, to achieve a similar probability distribution to DGP 1, we perform resampling using a rejection algorithm (details in Section~\ref{sec-appendix-resampling-algo}).

{
\begin{table}[H]
    \caption{Parameters of the different scenarios.}
    \label{tab:dgp}
    \centering\small
    \begin{tabular}{R{.6cm}R{.6cm}R{.6cm}R{2cm}R{5cm}R{2.5cm}}
        \toprule
        DGP & No. Cont. & No. Cat. & No. Noise & \(\boldsymbol{\beta}\) & Type \(\eta\) \\
        \midrule
        1 & 2 & 0 & \(\{0, 10, 50, 100\}\) & \((.5, 1)\) & Linear terms\\
        2 & \multicolumn{5}{c}{Same as DGP 2, but with probabilities \(p^3\)}\\
        3 & 5 & 5 & \(\{0, 10, 50, 100\}\) & \((.1, .2, .3, .4, .5, .01, .02, .03, .04, .05)\) & Linear terms\\
        4 & 3 & 0 & \(\{0, 10, 50, 100\}\) & \((.5, 1, .3)\) &  Non-linear terms\\
        \bottomrule
    \end{tabular}
    \begin{minipage}{\textwidth}
        \vspace{1ex}
        \scriptsize\underline{Notes:} No. Cont., No. Cat. and No. Noise correspond to the number of continuous, categorical and noise variables, respectively.
    \end{minipage}
\end{table}
}

The datasets are split into three parts: a training sample, a validation sample, and a test sample, each containing 10,000 observations.

\subsection{Trees}\label{sec:appendix-sim-trees}

\paragraph{Estimation} Regression trees are estimated using the \texttt{rpart()} function from the \texttt{R} package \{\texttt{rpart}\}. We vary a single hyperparameter: the minimum number of observations in the terminal leaves (argument \texttt{minbucket}). This hyperparameter varies according to the following sequence: \texttt{unique(round(2\^{}seq(1, 10, by = .1)))}. All variables (predictors and, if applicable, noise variables) are included in the model without transformation.

\paragraph{Metrics} Tables~\ref{tbl:trees-metrics-dgp-1}, \ref{tbl:trees-metrics-dgp-2}, \ref{tbl:trees-metrics-dgp-3}, and \ref{tbl:trees-metrics-dgp-4} supplement Table~\ref{tbl:trees-metrics}, presenting the results of 100 replications for each of the four DGPs, across the four configurations concerning noise variables introduced into the datasets for each DGP. Similar results to those shown in the main body of the paper are observed across all simulations. When the grid search optimization targets AUC (AUC* rows) rather than the KL divergence between estimated scores and true probabilities (KL* rows), there are significant losses in terms of KL divergence. At the same time, the variations in AUC are more modest. Additionally, a higher divergence is observed when optimizing according to either of the two calibration criteria (Brier* or ICI*) rather than optimizing according to KL divergence.

{
\setlength{\tabcolsep}{4pt}
\begin{table}[H]
\caption{Performance and calibration on test set over 100 replications (\textbf{DGP~1}).}
\label{tbl:trees-metrics-dgp-1}
    \centering\scriptsize
    \begin{tabular}{rcrrrrrrr}
\toprule
Noise & Selected Tree & No. Leaves & MSE & AUC & Brier & ICI & KL Div. & QR\\
\midrule
0 & Smallest & 5 (0.603) & 0.012 (0.002) & 0.720 (0.007) & 0.212 (0.002) & 0.012 (0.004) & 1.807 (0.164) & 0.930 (0.055)\\
 & Largest & 1 999 (44.796) & 0.134 (0.004) & 0.629 (0.006) & 0.333 (0.005) & 0.289 (0.008) & 4.190 (0.180) & 1.627 (0.010)\\
 & MSE* & 30 (4.800) & 0.003 (0.000) & 0.751 (0.005) & 0.203 (0.002) & 0.014 (0.004) & 0.300 (0.120) & 1.021 (0.048)\\
 & AUC* & 33 (6.639) & 0.004 (0.000) & 0.750 (0.005) & 0.203 (0.002) & 0.015 (0.004) & 0.283 (0.117) & 1.022 (0.047)\\
 & Brier* & 31 (6.170) & 0.004 (0.000) & 0.750 (0.005) & 0.203 (0.002) & 0.014 (0.004) & 0.284 (0.128) & 1.020 (0.046)\\
 & ICI* & 10 (6.782) & 0.008 (0.003) & 0.733 (0.012) & 0.208 (0.004) & 0.012 (0.004) & 1.262 (0.545) & 0.976 (0.078)\\
& KL* & 97 (38.855) & 0.008 (0.003) & 0.741 (0.007) & 0.207 (0.003) & 0.029 (0.010) & 0.086 (0.019) & 1.078 (0.038)\\
\cmidrule{1-9}
10 & Smallest & 5 (0.603) & 0.012 (0.002) & 0.720 (0.007) & 0.212 (0.002) & 0.012 (0.004) & 1.807 (0.164) & 0.930 (0.055)\\
 & Largest & 1 999 (44.796) & 0.134 (0.004) & 0.629 (0.006) & 0.333 (0.005) & 0.289 (0.008) & 4.190 (0.180) & 1.627 (0.010)\\
 & MSE* & 30 (4.721) & 0.003 (0.000) & 0.751 (0.005) & 0.203 (0.002) & 0.014 (0.004) & 0.299 (0.120) & 1.020 (0.048)\\
 & AUC* & 33 (6.836) & 0.004 (0.000) & 0.750 (0.005) & 0.203 (0.002) & 0.015 (0.004) & 0.273 (0.120) & 1.022 (0.047)\\
 & Brier* & 31 (6.460) & 0.004 (0.000) & 0.750 (0.005) & 0.203 (0.002) & 0.014 (0.004) & 0.285 (0.133) & 1.019 (0.047)\\
 & ICI* & 11 (7.039) & 0.008 (0.003) & 0.734 (0.013) & 0.208 (0.004) & 0.012 (0.004) & 1.219 (0.557) & 0.978 (0.081)\\
 & KL* & 97 (38.855) & 0.008 (0.003) & 0.741 (0.007) & 0.207 (0.003) & 0.029 (0.010) & 0.086 (0.019) & 1.078 (0.038)\\
\cmidrule{1-9}
50 & Smallest & 5 (0.603) & 0.012 (0.002) & 0.720 (0.007) & 0.212 (0.002) & 0.012 (0.004) & 1.807 (0.164) & 0.930 (0.055)\\
 & Largest & 1 999 (44.796) & 0.134 (0.004) & 0.629 (0.006) & 0.333 (0.005) & 0.289 (0.008) & 4.190 (0.180) & 1.627 (0.010)\\
 & MSE* & 30 (4.716) & 0.003 (0.000) & 0.751 (0.005) & 0.203 (0.002) & 0.014 (0.004) & 0.297 (0.120) & 1.020 (0.048)\\
 & AUC* & 33 (6.846) & 0.004 (0.000) & 0.750 (0.005) & 0.203 (0.002) & 0.015 (0.004) & 0.275 (0.117) & 1.022 (0.048)\\
 & Brier* & 31 (5.993) & 0.004 (0.000) & 0.750 (0.005) & 0.203 (0.002) & 0.014 (0.005) & 0.280 (0.130) & 1.020 (0.046)\\
 & ICI* & 10 (6.938) & 0.008 (0.003) & 0.733 (0.012) & 0.208 (0.004) & 0.012 (0.004) & 1.232 (0.552) & 0.976 (0.079)\\
 & KL* & 97 (38.855) & 0.008 (0.003) & 0.741 (0.007) & 0.207 (0.003) & 0.029 (0.010) & 0.086 (0.019) & 1.078 (0.038)\\
\cmidrule{1-9}
100 & Smallest & 5 (0.603) & 0.012 (0.002) & 0.720 (0.007) & 0.212 (0.002) & 0.012 (0.004) & 1.807 (0.164) & 0.930 (0.055)\\
 & Largest & 1 999 (44.796) & 0.134 (0.004) & 0.629 (0.006) & 0.333 (0.005) & 0.289 (0.008) & 4.190 (0.180) & 1.627 (0.010)\\
 & MSE* & 30 (4.746) & 0.003 (0.000) & 0.751 (0.005) & 0.203 (0.002) & 0.014 (0.004) & 0.298 (0.120) & 1.020 (0.048)\\
 & AUC* & 33 (7.079) & 0.004 (0.000) & 0.750 (0.005) & 0.203 (0.002) & 0.015 (0.005) & 0.278 (0.117) & 1.023 (0.046)\\
 & Brier* & 31 (6.313) & 0.004 (0.000) & 0.750 (0.005) & 0.203 (0.002) & 0.014 (0.004) & 0.288 (0.128) & 1.020 (0.047)\\
 & ICI* & 11 (6.597) & 0.008 (0.003) & 0.734 (0.012) & 0.208 (0.004) & 0.012 (0.004) & 1.219 (0.537) & 0.977 (0.078)\\
 & KL* & 97 (38.855) & 0.008 (0.003) & 0.741 (0.007) & 0.207 (0.003) & 0.029 (0.010) & 0.086 (0.019) & 1.078 (0.038)\\
 \bottomrule
\end{tabular}
\begin{minipage}{\textwidth}
\vspace{1ex}
\scriptsize\underline{Notes:} Noise: number of noise variables added, No. Leaves: average number of leaves over the 100 replicatiobns, MSE: Mean squared error, AUC: area under the receiver operating characteristic cuve, Brier: Brier's Score, ICI: integrated calibration index, KL Div.: Kullback-Leibler divergence, QR: ratio of the difference between the 90th and 10th percentiles of the scores to the difference between the 90th and 10th percentiles of the true probabilities. Values are reported as mean (standard deviation) over 100 replications. Multiple trees are estimated, with varying minimal number of observations in terminal leaf nodes. The metrics are reported for the tree with the smallest/largest number of leaves (Smallest/Largest), the tree that minimizes the MSE (MSE*), maximizes the AUC (AUC*), minimizes the Brier's score (Brier*), the ICI (ICI*), and the Kullback-Leibler divergence between the distribution of the underlying probabilities and the distribution of estimated scores (KL*).
\end{minipage}
\end{table}
}

%
%
{
\setlength{\tabcolsep}{4pt}
\begin{table}[H]
\caption{Performance and calibration on test set over 100 replications (\textbf{DGP~2}).}
\label{tbl:trees-metrics-dgp-2}
    \centering\scriptsize
    \begin{tabular}{rcrrrrrrr}
\toprule
Noise & Selected Tree & No. Leaves & MSE & AUC & Brier & ICI & KL Div. & QR\\
\midrule
0 & Smallest & 5 (0.559) & 0.014 (0.002) & 0.788 (0.008) & 0.131 (0.002) & 0.008 (0.003) & 1.268 (0.138) & 0.892 (0.104)\\
 & Largest & 1 289 (30.810) & 0.079 (0.003) & 0.676 (0.009) & 0.196 (0.004) & 0.181 (0.005) & 1.983 (0.220) & 1.923 (0.041)\\
 & MSE* & 44 (7.936) & 0.003 (0.000) & 0.833 (0.005) & 0.120 (0.002) & 0.011 (0.003) & 0.213 (0.075) & 1.028 (0.057)\\
 & AUC* & 38 (8.178) & 0.003 (0.000) & 0.833 (0.005) & 0.120 (0.002) & 0.010 (0.003) & 0.239 (0.099) & 1.030 (0.058)\\
 & Brier* & 42 (10.284) & 0.003 (0.000) & 0.833 (0.005) & 0.120 (0.002) & 0.011 (0.003) & 0.227 (0.097) & 1.031 (0.055)\\
 & ICI* & 13 (10.757) & 0.009 (0.004) & 0.810 (0.018) & 0.126 (0.005) & 0.008 (0.003) & 0.842 (0.387) & 0.973 (0.125)\\
 & KL* & 145 (49.047) & 0.008 (0.003) & 0.817 (0.009) & 0.125 (0.003) & 0.027 (0.009) & 0.064 (0.016) & 1.069 (0.041)\\
\cmidrule{1-9}
10 & Smallest & 5 (0.559) & 0.014 (0.002) & 0.788 (0.008) & 0.131 (0.003) & 0.008 (0.003) & 1.268 (0.138) & 0.892 (0.104)\\
 & Largest & 1 289 (30.810) & 0.079 (0.003) & 0.675 (0.009) & 0.196 (0.004) & 0.181 (0.005) & 1.983 (0.220) & 1.923 (0.041)\\
 & MSE* & 44 (7.937) & 0.003 (0.000) & 0.833 (0.005) & 0.120 (0.002) & 0.011 (0.003) & 0.213 (0.075) & 1.026 (0.056)\\
 & AUC* & 38 (7.755) & 0.003 (0.000) & 0.833 (0.005) & 0.121 (0.002) & 0.010 (0.003) & 0.241 (0.097) & 1.034 (0.058)\\
 & Brier* & 42 (10.641) & 0.003 (0.000) & 0.833 (0.005) & 0.120 (0.002) & 0.011 (0.003) & 0.225 (0.095) & 1.030 (0.054)\\
 & ICI* & 14 (11.569) & 0.008 (0.004) & 0.811 (0.019) & 0.126 (0.005) & 0.008 (0.003) & 0.816 (0.394) & 0.972 (0.128)\\
 & KL* & 145 (49.047) & 0.008 (0.003) & 0.817 (0.009) & 0.125 (0.003) & 0.027 (0.009) & 0.064 (0.016) & 1.069 (0.041)\\
\cmidrule{1-9}
50 & Smallest & 5 (0.559) & 0.014 (0.002) & 0.788 (0.008) & 0.131 (0.003) & 0.008 (0.003) & 1.268 (0.138) & 0.892 (0.104)\\
 & Largest & 1 289 (30.810) & 0.079 (0.003) & 0.675 (0.009) & 0.196 (0.004) & 0.181 (0.005) & 1.983 (0.220) & 1.923 (0.041)\\
 & MSE* & 44 (7.913) & 0.003 (0.000) & 0.833 (0.005) & 0.120 (0.002) & 0.011 (0.003) & 0.213 (0.074) & 1.026 (0.056)\\
 & AUC* & 38 (8.115) & 0.003 (0.000) & 0.833 (0.005) & 0.121 (0.002) & 0.010 (0.003) & 0.242 (0.095) & 1.034 (0.058)\\
 & Brier* & 42 (10.475) & 0.003 (0.000) & 0.833 (0.005) & 0.120 (0.002) & 0.011 (0.003) & 0.225 (0.096) & 1.027 (0.057)\\
 & ICI* & 14 (11.504) & 0.009 (0.004) & 0.811 (0.018) & 0.126 (0.005) & 0.008 (0.003) & 0.827 (0.392) & 0.970 (0.129)\\
 & KL* & 145 (49.047) & 0.008 (0.003) & 0.817 (0.009) & 0.125 (0.003) & 0.027 (0.009) & 0.064 (0.016) & 1.069 (0.041)\\
\cmidrule{1-9}
100 & Smallest & 5 (0.559) & 0.014 (0.002) & 0.788 (0.008) & 0.131 (0.003) & 0.008 (0.003) & 1.268 (0.138) & 0.892 (0.104)\\
 & Largest & 1 289 (30.810) & 0.079 (0.003) & 0.677 (0.009) & 0.196 (0.004) & 0.181 (0.005) & 1.983 (0.220) & 1.923 (0.041)\\
 & MSE* & 44 (7.441) & 0.003 (0.000) & 0.833 (0.005) & 0.120 (0.002) & 0.011 (0.003) & 0.212 (0.075) & 1.027 (0.057)\\
 & AUC* & 39 (8.365) & 0.003 (0.000) & 0.833 (0.005) & 0.120 (0.002) & 0.010 (0.003) & 0.236 (0.095) & 1.030 (0.053)\\
 & Brier* & 42 (10.840) & 0.003 (0.000) & 0.833 (0.005) & 0.120 (0.002) & 0.011 (0.003) & 0.224 (0.096) & 1.031 (0.054)\\
 & ICI* & 14 (11.434) & 0.009 (0.004) & 0.811 (0.018) & 0.126 (0.005) & 0.008 (0.003) & 0.817 (0.395) & 0.967 (0.127)\\
 & KL* & 145 (49.047) & 0.008 (0.003) & 0.817 (0.009) & 0.125 (0.003) & 0.027 (0.009) & 0.064 (0.016) & 1.069 (0.041)\\
\bottomrule
\end{tabular}
\begin{minipage}{\textwidth}
\vspace{1ex}
\scriptsize\underline{Notes:} Noise: number of noise variables added, No. Leaves: average number of leaves over the 100 replicatiobns, MSE: Mean squared error, AUC: area under the receiver operating characteristic cuve, Brier: Brier's Score, ICI: integrated calibration index, KL Div.: Kullback-Leibler divergence, QR: ratio of the difference between the 90th and 10th percentiles of the scores to the difference between the 90th and 10th percentiles of the true probabilities. Values are reported as mean (standard deviation) over 100 replications. Multiple trees are estimated, with varying minimal number of observations in terminal leaf nodes. The metrics are reported for the tree with the smallest/largest number of leaves (Smallest/Largest), the tree that minimizes the MSE (MSE*), maximizes the AUC (AUC*), minimizes the Brier's score (Brier*), the ICI (ICI*), and the Kullback-Leibler divergence between the distribution of the underlying probabilities and the distribution of estimated scores (KL*).
\end{minipage}
\end{table}
}

%
%
{
\setlength{\tabcolsep}{3.8pt}
\begin{table}[H]
\caption{Performance and calibration on test set over 100 replications (\textbf{DGP~3}).}
\label{tbl:trees-metrics-dgp-3}
    \centering\scriptsize
    \begin{tabular}{rcrrrrrrr}
\toprule
Noise & Selected Tree & No. Leaves & MSE & AUC & Brier & ICI & KL Div. & QR \\
\midrule
0 & Smallest & 5 (0.628) & 0.025 (0.000) & 0.546 (0.006) & 0.243 (0.001) & 0.013 (0.005) & 1.421 (0.224) & 0.345 (0.054)\\
 & Largest & 1 825 (190.935) & 0.157 (0.014) & 0.508 (0.006) & 0.375 (0.014) & 0.303 (0.027) & 6.075 (0.996) & 2.295 (0.017)\\
 & MSE* & 7 (1.515) & 0.025 (0.000) & 0.548 (0.007) & 0.243 (0.001) & 0.014 (0.005) & 1.303 (0.226) & 0.356 (0.054)\\
 & AUC* & 9 (3.518) & 0.025 (0.000) & 0.547 (0.006) & 0.244 (0.001) & 0.015 (0.006) & 1.231 (0.217) & 0.353 (0.056)\\
 & Brier* & 8 (2.536) & 0.025 (0.000) & 0.547 (0.006) & 0.243 (0.001) & 0.014 (0.005) & 1.285 (0.225) & 0.356 (0.056)\\
 & ICI* & 6 (1.788) & 0.025 (0.000) & 0.547 (0.007) & 0.243 (0.001) & 0.014 (0.005) & 1.335 (0.228) & 0.350 (0.055)\\
 & KL* & 253 (31.319) & 0.049 (0.003) & 0.516 (0.007) & 0.267 (0.003) & 0.129 (0.009) & 0.050 (0.015) & 1.004 (0.059)\\
\cmidrule{1-9}
10 & Smallest & 5 (0.628) & 0.025 (0.000) & 0.546 (0.006) & 0.243 (0.001) & 0.013 (0.005) & 1.421 (0.224) & 0.345 (0.054)\\
 & Largest & 1 825 (190.935) & 0.157 (0.014) & 0.508 (0.006) & 0.375 (0.014) & 0.303 (0.027) & 6.075 (0.996) & 2.295 (0.017)\\
 & MSE* & 7 (1.537) & 0.025 (0.000) & 0.548 (0.007) & 0.243 (0.001) & 0.014 (0.006) & 1.294 (0.224) & 0.358 (0.053)\\
 & AUC* & 9 (3.642) & 0.025 (0.000) & 0.547 (0.007) & 0.244 (0.001) & 0.016 (0.007) & 1.215 (0.215) & 0.360 (0.055)\\
 & Brier* & 8 (2.505) & 0.025 (0.000) & 0.547 (0.007) & 0.243 (0.001) & 0.014 (0.006) & 1.273 (0.217) & 0.354 (0.056)\\
 & ICI* & 6 (1.794) & 0.025 (0.000) & 0.547 (0.007) & 0.243 (0.001) & 0.014 (0.005) & 1.326 (0.230) & 0.352 (0.054)\\
 & KL* & 253 (31.319) & 0.049 (0.003) & 0.516 (0.006) & 0.267 (0.003) & 0.129 (0.009) & 0.050 (0.015) & 1.004 (0.059)\\
\cmidrule{1-9}
50 & Smallest & 5 (0.628) & 0.025 (0.000) & 0.546 (0.006) & 0.243 (0.001) & 0.013 (0.005) & 1.421 (0.224) & 0.345 (0.054)\\
 & Largest & 1 825 (190.935) & 0.157 (0.014) & 0.508 (0.006) & 0.375 (0.014) & 0.303 (0.027) & 6.075 (0.996) & 2.295 (0.017)\\
 & MSE* & 7 (1.524) & 0.025 (0.000) & 0.548 (0.007) & 0.243 (0.001) & 0.014 (0.006) & 1.286 (0.226) & 0.355 (0.055)\\
 & AUC* & 9 (3.909) & 0.025 (0.000) & 0.547 (0.007) & 0.244 (0.001) & 0.016 (0.007) & 1.215 (0.204) & 0.361 (0.055)\\
 & Brier* & 8 (2.521) & 0.025 (0.000) & 0.548 (0.007) & 0.243 (0.001) & 0.014 (0.005) & 1.273 (0.218) & 0.356 (0.058)\\
 & ICI* & 6 (1.808) & 0.025 (0.000) & 0.547 (0.007) & 0.243 (0.001) & 0.014 (0.005) & 1.339 (0.237) & 0.353 (0.055)\\
 & KL* & 253 (31.319) & 0.049 (0.003) & 0.515 (0.007) & 0.267 (0.003) & 0.129 (0.009) & 0.050 (0.015) & 1.004 (0.059)\\
\cmidrule{1-9}
100 & Smallest & 5 (0.628) & 0.025 (0.000) & 0.547 (0.006) & 0.243 (0.001) & 0.013 (0.005) & 1.421 (0.224) & 0.345 (0.054)\\
 & Largest & 1 825 (190.935) & 0.157 (0.014) & 0.508 (0.006) & 0.375 (0.014) & 0.303 (0.027) & 6.075 (0.996) & 2.295 (0.017)\\
 & MSE* & 7 (1.504) & 0.025 (0.000) & 0.548 (0.007) & 0.243 (0.001) & 0.014 (0.005) & 1.291 (0.224) & 0.357 (0.053)\\
 & AUC* & 9 (3.640) & 0.025 (0.000) & 0.547 (0.007) & 0.244 (0.001) & 0.016 (0.006) & 1.237 (0.232) & 0.358 (0.053)\\
 & Brier* & 8 (2.517) & 0.025 (0.000) & 0.548 (0.007) & 0.243 (0.001) & 0.014 (0.006) & 1.293 (0.219) & 0.354 (0.059)\\
 & ICI* & 6 (1.716) & 0.025 (0.000) & 0.547 (0.007) & 0.243 (0.001) & 0.014 (0.005) & 1.339 (0.228) & 0.353 (0.054)\\
 & KL* & 253 (31.319) & 0.049 (0.003) & 0.516 (0.006) & 0.267 (0.003) & 0.129 (0.009) & 0.050 (0.015) & 1.004 (0.059)\\
\bottomrule
\end{tabular}
\begin{minipage}{\textwidth}
\vspace{1ex}
\scriptsize\underline{Notes:} Noise: number of noise variables added, No. Leaves: average number of leaves over the 100 replicatiobns, MSE: Mean squared error, AUC: area under the receiver operating characteristic cuve, Brier: Brier's Score, ICI: integrated calibration index, KL Div.: Kullback-Leibler divergence, QR: ratio of the difference between the 90th and 10th percentiles of the scores to the difference between the 90th and 10th percentiles of the true probabilities. Values are reported as mean (standard deviation) over 100 replications. Multiple trees are estimated, with varying minimal number of observations in terminal leaf nodes. The metrics are reported for the tree with the smallest/largest number of leaves (Smallest/Largest), the tree that minimizes the MSE (MSE*), maximizes the AUC (AUC*), minimizes the Brier's score (Brier*), the ICI (ICI*), and the Kullback-Leibler divergence between the distribution of the underlying probabilities and the distribution of estimated scores (KL*).
\end{minipage}
\end{table}
}

%
%
{
\setlength{\tabcolsep}{3.9pt}
\begin{table}[H]
\caption{Performance and calibration on test set over 100 replications (\textbf{DGP~4}).}
\label{tbl:trees-metrics-dgp-4}
    \centering\scriptsize
    \begin{tabular}{rcrrrrrrr}
\toprule
Noise & Selected Tree & No. Leaves & MSE & AUC & Brier & ICI & KL Div. & QR \\
\midrule
0 & Smallest & 6 (0.522) & 0.035 (0.001) & 0.636 (0.007) & 0.235 (0.002) & 0.012 (0.005) & 1.515 (0.211) & 0.606 (0.054)\\
 & Largest & 2 052 (95.136) & 0.170 (0.007) & 0.563 (0.006) & 0.370 (0.008) & 0.314 (0.015) & 3.125 (0.207) & 1.647 (0.011)\\
 & MSE* & 32 (8.332) & 0.032 (0.001) & 0.654 (0.006) & 0.232 (0.002) & 0.021 (0.006) & 0.525 (0.135) & 0.681 (0.053)\\
 & AUC* & 36 (11.563) & 0.032 (0.001) & 0.654 (0.006) & 0.232 (0.002) & 0.023 (0.007) & 0.489 (0.153) & 0.688 (0.055)\\
 & Brier* & 32 (10.839) & 0.032 (0.001) & 0.654 (0.006) & 0.232 (0.002) & 0.021 (0.006) & 0.526 (0.157) & 0.685 (0.056)\\
 & ICI* & 9 (3.016) & 0.034 (0.001) & 0.642 (0.007) & 0.234 (0.002) & 0.012 (0.005) & 1.231 (0.295) & 0.633 (0.051)\\
 & KL* & 317 (56.534) & 0.055 (0.005) & 0.616 (0.008) & 0.255 (0.005) & 0.109 (0.014) & 0.046 (0.013) & 1.015 (0.047)\\
\cmidrule{1-9}
10 & Smallest & 6 (0.522) & 0.035 (0.001) & 0.636 (0.007) & 0.235 (0.002) & 0.012 (0.005) & 1.515 (0.211) & 0.606 (0.054)\\
 & Largest & 2 052 (95.136) & 0.170 (0.007) & 0.563 (0.006) & 0.370 (0.008) & 0.314 (0.015) & 3.125 (0.207) & 1.647 (0.011)\\
 & MSE* & 31 (7.952) & 0.032 (0.001) & 0.654 (0.006) & 0.232 (0.002) & 0.021 (0.006) & 0.527 (0.132) & 0.679 (0.051)\\
 & AUC* & 37 (12.267) & 0.032 (0.001) & 0.654 (0.006) & 0.232 (0.002) & 0.023 (0.007) & 0.484 (0.144) & 0.690 (0.056)\\
 & Brier* & 32 (10.939) & 0.032 (0.001) & 0.654 (0.006) & 0.232 (0.002) & 0.021 (0.006) & 0.530 (0.154) & 0.686 (0.057)\\
 & ICI* & 9 (3.023) & 0.034 (0.001) & 0.642 (0.008) & 0.234 (0.002) & 0.012 (0.005) & 1.230 (0.292) & 0.632 (0.053)\\
 & KL* & 317 (56.534) & 0.055 (0.005) & 0.616 (0.008) & 0.255 (0.005) & 0.109 (0.014) & 0.046 (0.013) & 1.015 (0.047)\\
\cmidrule{1-9}
50 & Smallest & 6 (0.522) & 0.035 (0.001) & 0.636 (0.007) & 0.235 (0.002) & 0.012 (0.005) & 1.515 (0.211) & 0.606 (0.054)\\
 & Largest & 2 052 (95.136) & 0.170 (0.007) & 0.563 (0.006) & 0.370 (0.008) & 0.314 (0.015) & 3.125 (0.207) & 1.647 (0.011)\\
 & MSE* & 31 (8.328) & 0.032 (0.001) & 0.654 (0.005) & 0.232 (0.002) & 0.021 (0.006) & 0.526 (0.135) & 0.680 (0.051)\\
 & AUC* & 36 (11.757) & 0.032 (0.001) & 0.654 (0.006) & 0.232 (0.002) & 0.023 (0.007) & 0.490 (0.152) & 0.691 (0.053)\\
 & Brier* & 32 (10.692) & 0.032 (0.001) & 0.654 (0.006) & 0.232 (0.002) & 0.021 (0.006) & 0.526 (0.156) & 0.684 (0.057)\\
 & ICI* & 9 (3.078) & 0.034 (0.001) & 0.642 (0.007) & 0.234 (0.002) & 0.012 (0.005) & 1.230 (0.306) & 0.633 (0.051)\\
 & KL* & 317 (56.534) & 0.055 (0.005) & 0.616 (0.008) & 0.255 (0.005) & 0.109 (0.014) & 0.046 (0.013) & 1.015 (0.047)\\
\cmidrule{1-9}
100 & Smallest & 6 (0.522) & 0.035 (0.001) & 0.636 (0.007) & 0.235 (0.002) & 0.012 (0.005) & 1.515 (0.211) & 0.606 (0.054)\\
 & Largest & 2 052 (95.136) & 0.170 (0.007) & 0.563 (0.006) & 0.370 (0.008) & 0.314 (0.015) & 3.125 (0.207) & 1.647 (0.011)\\
 & MSE* & 31 (8.266) & 0.032 (0.001) & 0.654 (0.006) & 0.232 (0.002) & 0.021 (0.006) & 0.524 (0.135) & 0.680 (0.054)\\
 & AUC* & 36 (11.359) & 0.032 (0.001) & 0.654 (0.006) & 0.232 (0.002) & 0.023 (0.007) & 0.482 (0.151) & 0.688 (0.055)\\
 & Brier* & 32 (11.051) & 0.032 (0.001) & 0.654 (0.006) & 0.232 (0.002) & 0.021 (0.006) & 0.528 (0.159) & 0.685 (0.056)\\
 & ICI* & 9 (3.095) & 0.034 (0.001) & 0.642 (0.008) & 0.234 (0.002) & 0.012 (0.005) & 1.229 (0.293) & 0.633 (0.050)\\
 & KL* & 317 (56.534) & 0.055 (0.005) & 0.616 (0.008) & 0.255 (0.005) & 0.109 (0.014) & 0.046 (0.013) & 1.015 (0.047)\\
\bottomrule
\end{tabular}
\begin{minipage}{\textwidth}
\vspace{1ex}
\scriptsize\underline{Notes:} Noise: number of noise variables added, No. Leaves: average number of leaves over the 100 replicatiobns, MSE: Mean squared error, AUC: area under the receiver operating characteristic cuve, Brier: Brier's Score, ICI: integrated calibration index, KL Div.: Kullback-Leibler divergence, QR: ratio of the difference between the 90th and 10th percentiles of the scores to the difference between the 90th and 10th percentiles of the true probabilities. Values are reported as mean (standard deviation) over 100 replications. Multiple trees are estimated, with varying minimal number of observations in terminal leaf nodes. The metrics are reported for the tree with the smallest/largest number of leaves (Smallest/Largest), the tree that minimizes the MSE (MSE*), maximizes the AUC (AUC*), minimizes the Brier's score (Brier*), the ICI (ICI*), and the Kullback-Leibler divergence between the distribution of the underlying probabilities and the distribution of estimated scores (KL*).
\end{minipage}
\end{table}
}

\paragraph{Distribution of Scores} Figs.~\ref{fig:bp_synthetic_trees_1}, \ref{fig:bp_synthetic_trees_2}, \ref{fig:bp_synthetic_trees_3}, and \ref{fig:bp_synthetic_trees_4} show, like Fig.~\ref{fig:tree-scenario1-hist-scores} in the main body, the distribution of scores obtained for each of the 4 DGPs, according to the number of noise variables introduced into the training set. The distributions depicted are those obtained during a single replication. Regardless of the DGP, the distribution of scores obtained by maximizing AUC (AUC*) bears little resemblance to that of the true probabilities. When the tree's hyperparameters are chosen to minimize calibration (Brier* or ICI*), the score distributions are also far from the true probabilities. In contrast, selecting hyperparameters based on the KL criterion results in scores whose distribution is much closer to the true probability.

\foreach \dgp in {1,2,3,4}
{
\begin{figure}[H]
    \centering
    \includegraphics[width=.95\textwidth]{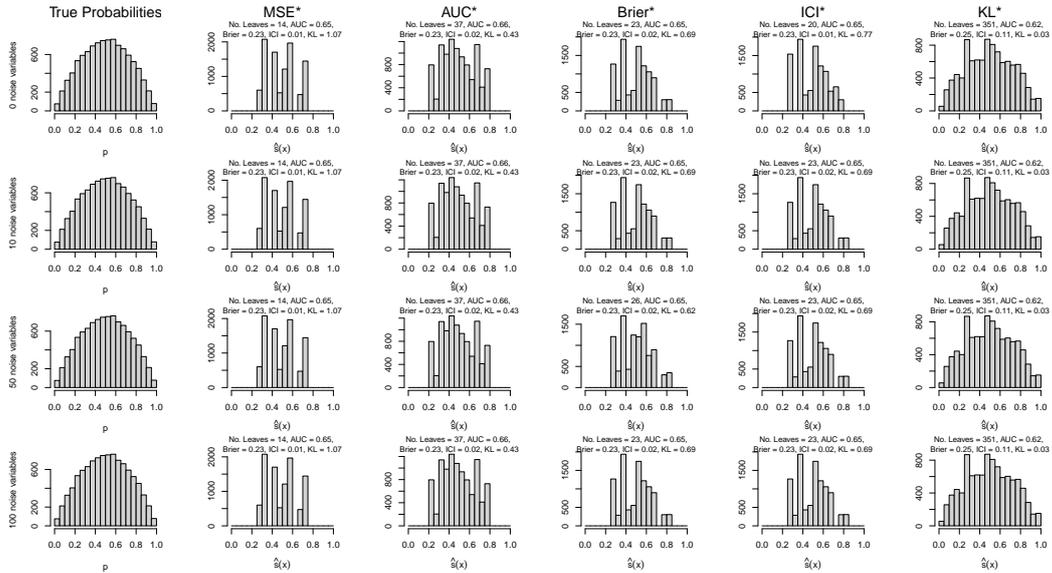}
    \caption{Distribution of true probabilities and estimated scores for \textbf{Trees} under \textbf{DGP \dgp}: single replication across various numbers of noise variables.}
    \label{fig:bp_synthetic_trees_\dgp}
    \begin{minipage}{\textwidth}
\vspace{1ex}
\scriptsize\underline{Notes:} MSE*, AUC*, Brier*, ICI*, and KL* refer to the models chosen based on the optimization, on the validation set, of the MSE, the AUC, the Brier score, the ICI, and the Kullback-Leibler divergence between the model's score distributions and the underlying probabilities, respectively. The metrics shown are computed on the test set.
\end{minipage}
\end{figure}
}

\paragraph{Relationship Between KL Divergence, Calibration, and Number of Leaves in the Tree}
Figs.~\ref{fig:trees-kl-calib-brier-leaves-all} and \ref{fig:trees-kl-calib-ici-leaves-all} complement Fig.~\ref{fig:trees-kl-calib-leaves-1} from the main body by showing the evolution of KL distance and calibration according to the number of leaves in the tree (indicated by the direction of the arrow). When the Brier score is used as the calibration measure (Fig.~\ref{fig:trees-kl-calib-brier-leaves-all}), an increase in the number of leaves initially leads to a degradation of calibration for DGPs 1 and 2. For all DGPs, as the tree becomes more complex, the Brier score improves in the same direction as the number of leaves. When calibration is measured using the ICI (Fig.~\ref{fig:trees-kl-calib-ici-leaves-all}), an increase in the number of leaves consistently leads to worse calibration. Concurrently, the KL distance shows a convex relationship with the number of tree leaves, suggesting the existence of an optimum.

\begin{figure}[H]
    \centering
    \includegraphics[width = .8\textwidth]{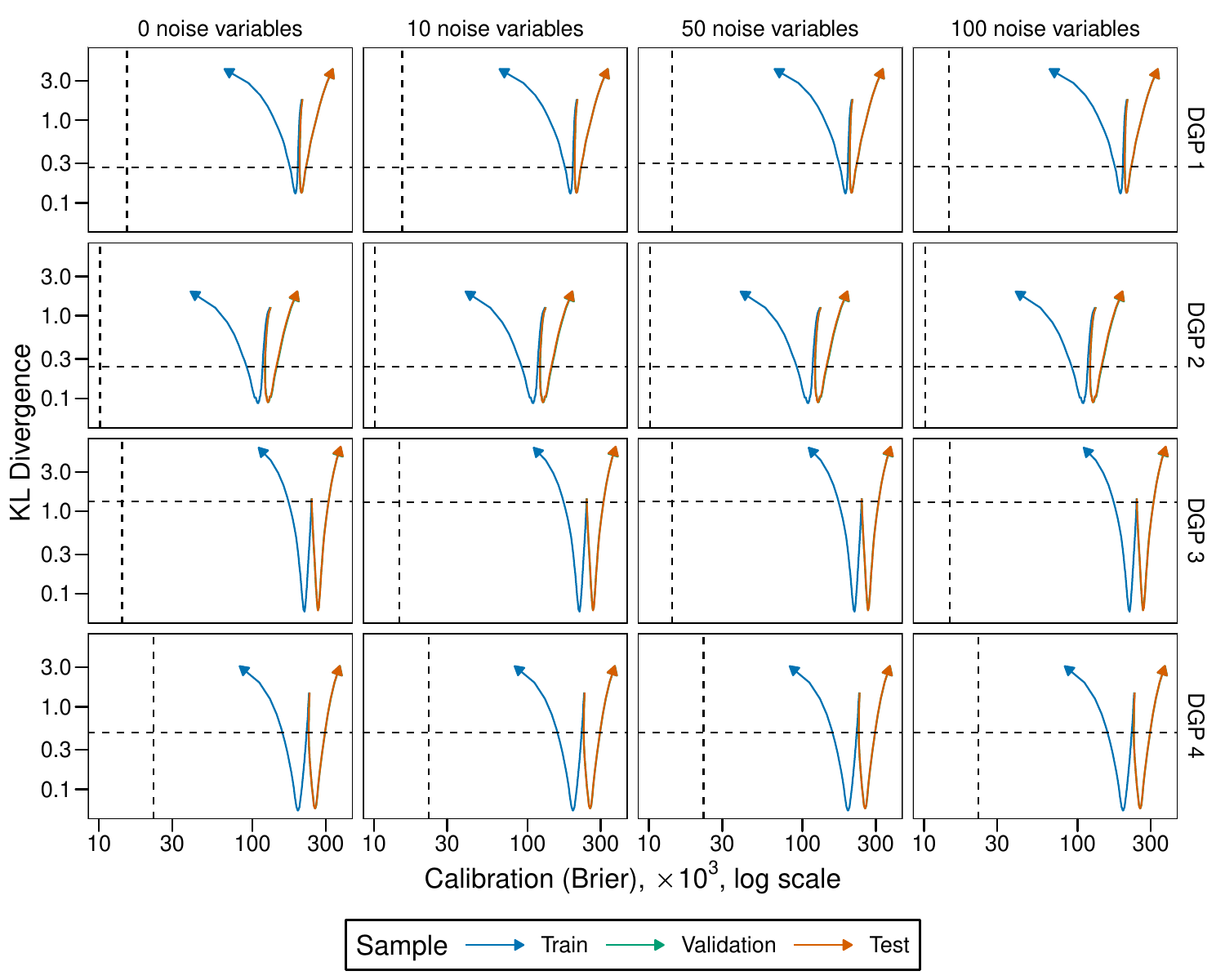}
    \caption{KL Div. and Calibration (\textbf{Brier score}) vs. Average Number of Tree Leaves.}
    \label{fig:trees-kl-calib-brier-leaves-all}
\end{figure}

\begin{figure}[H]
    \centering
    \includegraphics[width = .8\textwidth]{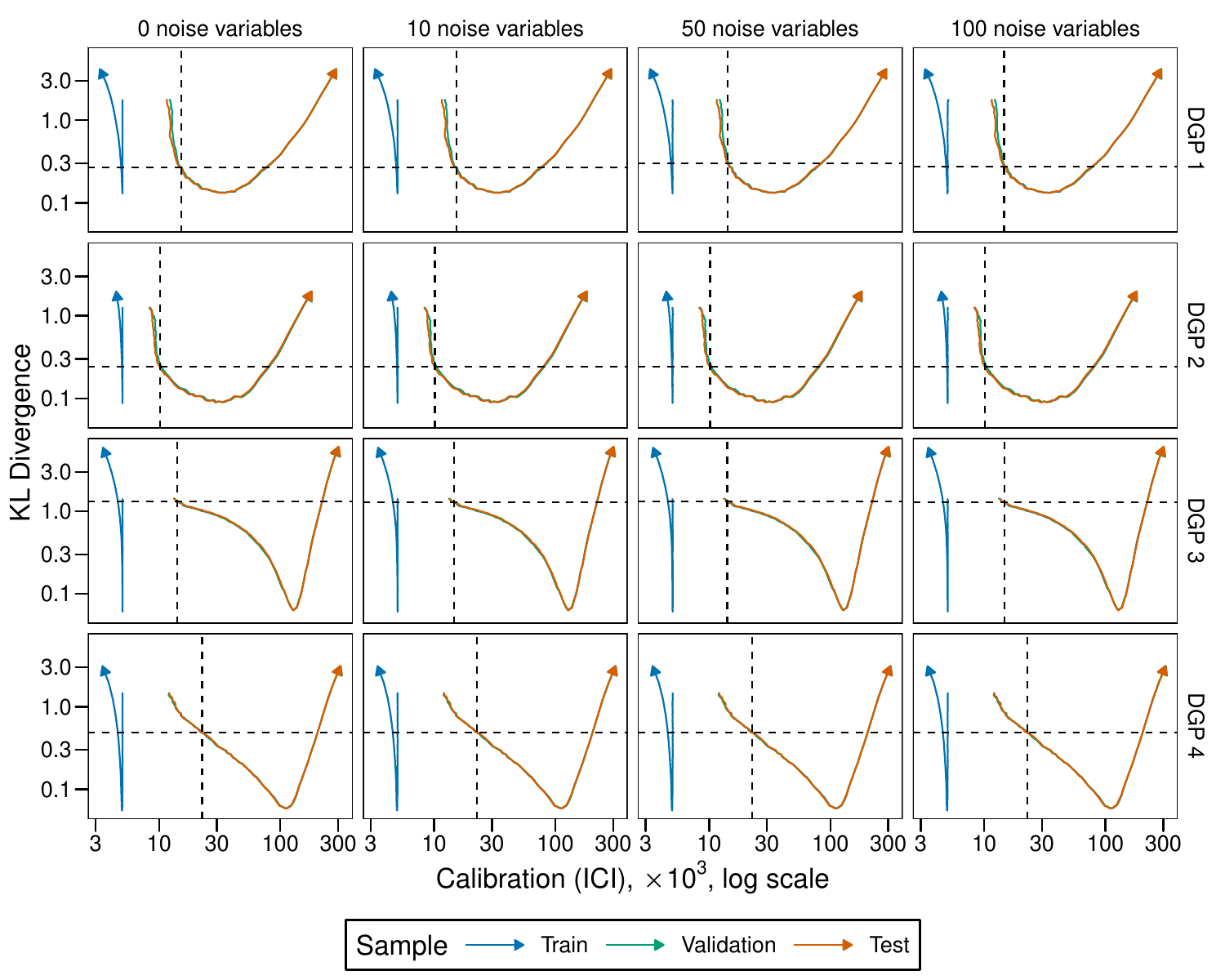}
    \caption{KL Div. and Calibration (\textbf{ICI}) vs. Average Number of Tree Leaves.}
    \label{fig:trees-kl-calib-ici-leaves-all}
\end{figure}

\subsection{Random Forests}\label{sec:appendix-rf}

\paragraph{Estimation} Random forests are estimated using the \texttt{ranger()} function from the \texttt{R} package \{\texttt{ranger}\}. The number of trees per forest is set to 250. The hyperparameters that vary are the number of candidate variables (argument \texttt{mtry}): 2, and when there are enough variables in the dataset, 4 and 6. The second hyperparameter we vary is the minimum number of observations per terminal leaf (argument \texttt{min.bucket}), which varies according to the following sequence: \texttt{unique(round(2\^{}seq(1, 14, by = .4)))}. All variables (predictors and, if applicable, noise variables) are included in the model without transformation.

\paragraph{Distribution of Scores} Figs.~\ref{fig:bp_synthetic_forest_1}, \ref{fig:bp_synthetic_forest_2}, \ref{fig:bp_synthetic_forest_3}, and \ref{fig:bp_synthetic_forest_4} show the distribution of the true probabilities (left column) and the distribution of scores estimated for forests whose hyperparameters were selected based on specific criteria: minimization of the true MSE (`MSE*'), maximization of the AUC (AUC*), minimization of the Brier score (Brier*), the ICI (ICI*), and the KL divergence between the estimated scores and the distribution of the true probabilities. Each of the 4 rows in the figures corresponds to a different number of noise variables introduced into the dataset (0, 10, 50, 100). Each figure refers to a different DGP (1, 2, 3, 4).

\foreach \dgp in {1,2,3,4}
{
    \begin{figure}[H]
        \centering
        \includegraphics[width=.95\textwidth]{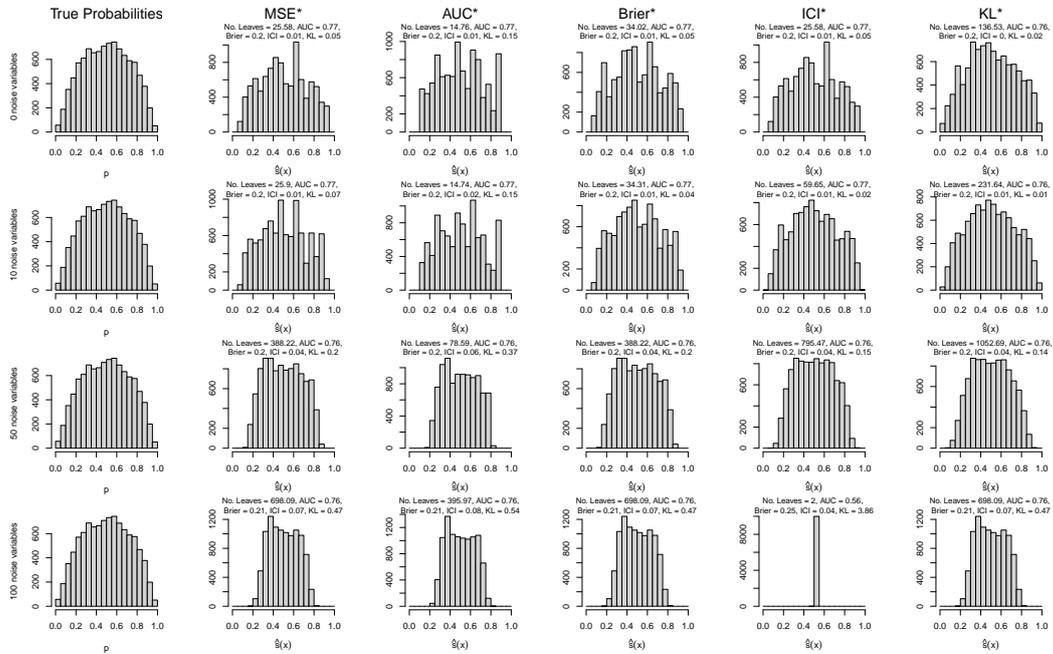}
        \caption{Distribution of true probabilities and estimated scores for \textbf{Random Forests} under \textbf{DGP \dgp}: single replication across various numbers of noise variables.}
        \label{fig:bp_synthetic_forest_\dgp}
        \begin{minipage}{\textwidth}
    \vspace{1ex}
    \scriptsize\underline{Notes:} Smallest and Largest refer to the models for which the average number of leaves in the trees of the forest is the smallest and largest, respectively. AUC* refers to the model chosen based on the maximization of AUC on the validation set. KL* denote the selection of the best model based on minimizing the Kullback-Leibler divergence between the model's score distributions and the underlying probabilities.
    \end{minipage}
    \end{figure}
}

\subsection{Extreme Gradient Boosting}\label{sec:appendix-xgb}

\paragraph{Estimation} XGBoost models are estimated using the \texttt{xgb.train} function from the \texttt{R} package \{\texttt{xgboost}\}. The learning rate is set to 0.3. The tree depth (argument \texttt{max\_depth}) varies according to the following values: 2, 4, 6. The number of boosting iterations (argument \texttt{nrounds}) ranges from 1 to 400. All variables (predictors and, if applicable, noise variables) are included in the model without transformation.

\paragraph{Distribution of Scores} Figs.~\ref{fig:bp_synthetic_xgb_1}, \ref{fig:bp_synthetic_xgb_2}, \ref{fig:bp_synthetic_xgb_3}, and \ref{fig:bp_synthetic_xgb_4} show the distribution of the true probabilities (left column) and the distribution of scores estimated for XGBoost whose hyperparameters were selected based on specific criteria: minimization of the true MSE (MSE*), maximization of the AUC (AUC*), minimization of the Brier score (Brier*), the ICI (ICI*), and the KL divergence between the estimated scores and the distribution of the true probabilities. The optimization is made on the validation set. The metrics of interest are then computed on the test set and reported on the figures. Each of the 4 rows in the figures corresponds to a different number of noise variables introduced into the dataset (0, 10, 50, 100). Each figure refers to a different DGP (1, 2, 3, 4).

\foreach \dgp in {1,2,3,4}
{
        \begin{figure}[H]
        \centering
        \includegraphics[width=.95\textwidth]{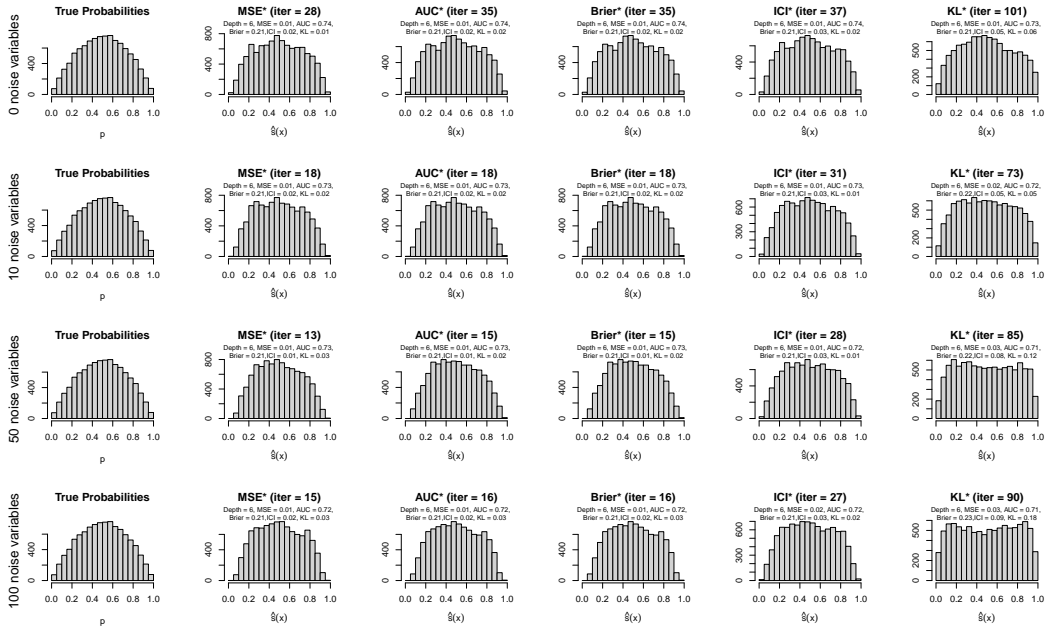}
        \caption{Distribution of true probabilities and estimated scores for \textbf{Extreme Gradient Boosting} under \textbf{DGP \dgp}: single replication.}
        \label{fig:bp_synthetic_xgb_\dgp}
        \begin{minipage}{\textwidth}
        \vspace{1ex}
        \scriptsize\underline{Notes:} MSE* refers to the model iteration selected by minimizing MSE on the validation set. AUC* refers to the model chosen by maximizing AUC. Brier*, ICI*, and KL* denote models selected based on minimizing the Brier score, ICI, and Kullback-Leibler divergence, respectively.
        \end{minipage}
        \end{figure}
}

\subsection{Generalized Linear Models}

\paragraph{Estimation} GLM models are estimated using the \texttt{glm()} function from the \texttt{R} package \{\texttt{stats}\}, using a logistic link function. All variables (predictors and, if applicable, noise variables) are included in the model without transformation.

\paragraph{Distribution of Scores} Fig.~\ref{fig:bp_synthetic_glm} shows the distribution of the true probabilities (left column) and the distribution of scores estimated with GLMs on the test set. The other 4 columns in the figure correspond to a different number of noise variables introduced into the dataset (0, 10, 50, 100). Each row corresponds to a DGP (1, 2, 3, 4).

\begin{figure}[H]
    \centering
    \includegraphics[width = \textwidth]{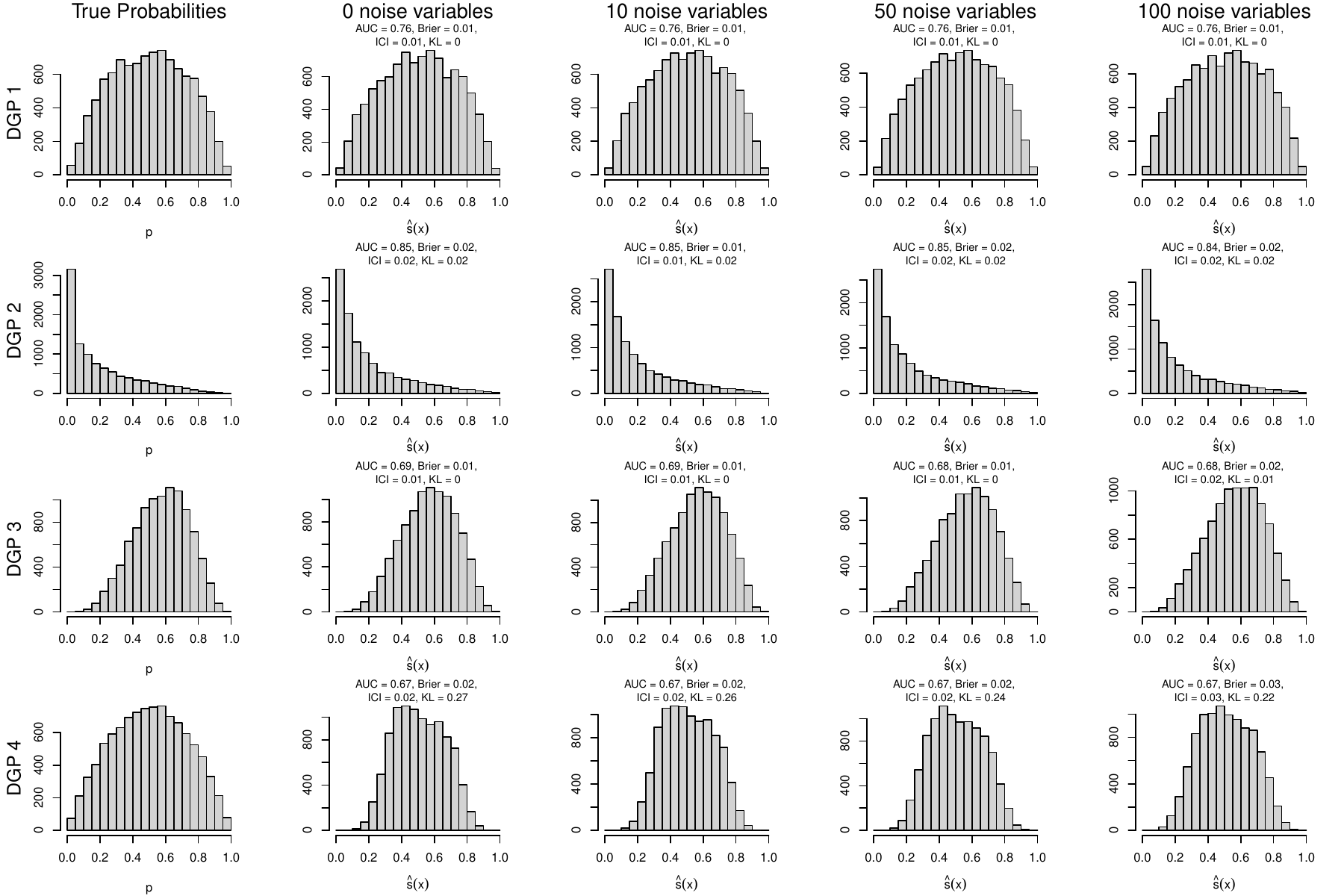}
    \caption{Distribution of true probabilities and estimated scores for \textbf{Generalized Linear Models} under the four DGPs (in rows): single replication across different number of noise variables (in columns).}
    \label{fig:bp_synthetic_glm}
\end{figure}

\subsection{Generalized Additive Models}

\paragraph{Estimation} GAM models are estimated using the \texttt{gam()} function from the \texttt{R} package \{\texttt{gam}\}. All variables (predictors and, if applicable, noise variables) are included in the model. Categorical variables are one-hot encoded. For numerical variables, the smoothing parameter is set to 6.

\paragraph{Distribution of Scores} Fig.~\ref{fig:bp_synthetic_gam} shows the distribution of the true probabilities (left column) and the distribution of scores estimated with GAMs, on the test set. The other 4 columns in the figure correspond to a different number of noise variables introduced into the dataset (0, 10, 50, 100). Each row corresponds to a DGP (1, 2, 3, 4).

\begin{figure}[H]
    \centering
    \includegraphics[width = \textwidth]{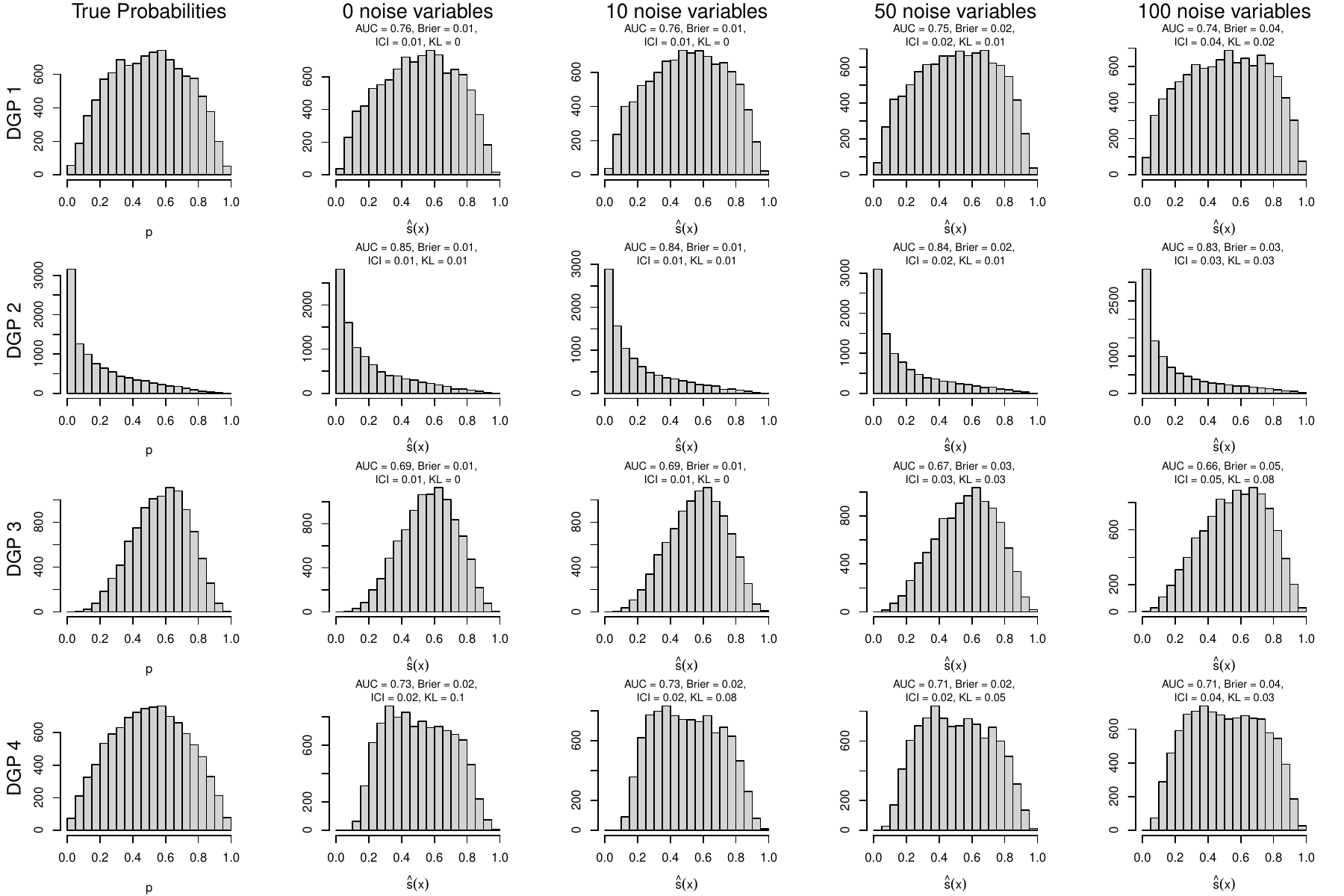}
    \caption{Distribution of true probabilities and estimated scores for \textbf{Generalized Additive Models} under the four DGPs (in rows): single replication across different number of noise variables (in columns).}
    \label{fig:bp_synthetic_gam}
\end{figure}

\subsection{Generalized Additive Model Selection}

\paragraph{Estimation} GAM models are estimated using the \texttt{gamsel()} function from the \texttt{R} package \{\texttt{gamsel}\}. All variables (predictors and, if applicable, noise variables) are included in the model. Categorical variables are one-hot encoded. For each numerical variables, the maximum degrees of freedom is set to 6. However, we use the \texttt{gamsel.cv()} function to determine, using cross-validation, whether each term in the gam should be nonzero, linear, or a non-linear spline.

\paragraph{Distribution of Scores} Fig.~\ref{fig:bp_synthetic_gamsel} shows the distribution of the true probabilities (left column) and the distribution of scores estimated with GAMSELs. The other 4 columns in the figure correspond to a different number of noise variables introduced into the dataset (0, 10, 50, 100). Each row corresponds to a DGP (1, 2, 3, 4).

\begin{figure}[H]
    \centering
    \includegraphics[width = \textwidth]{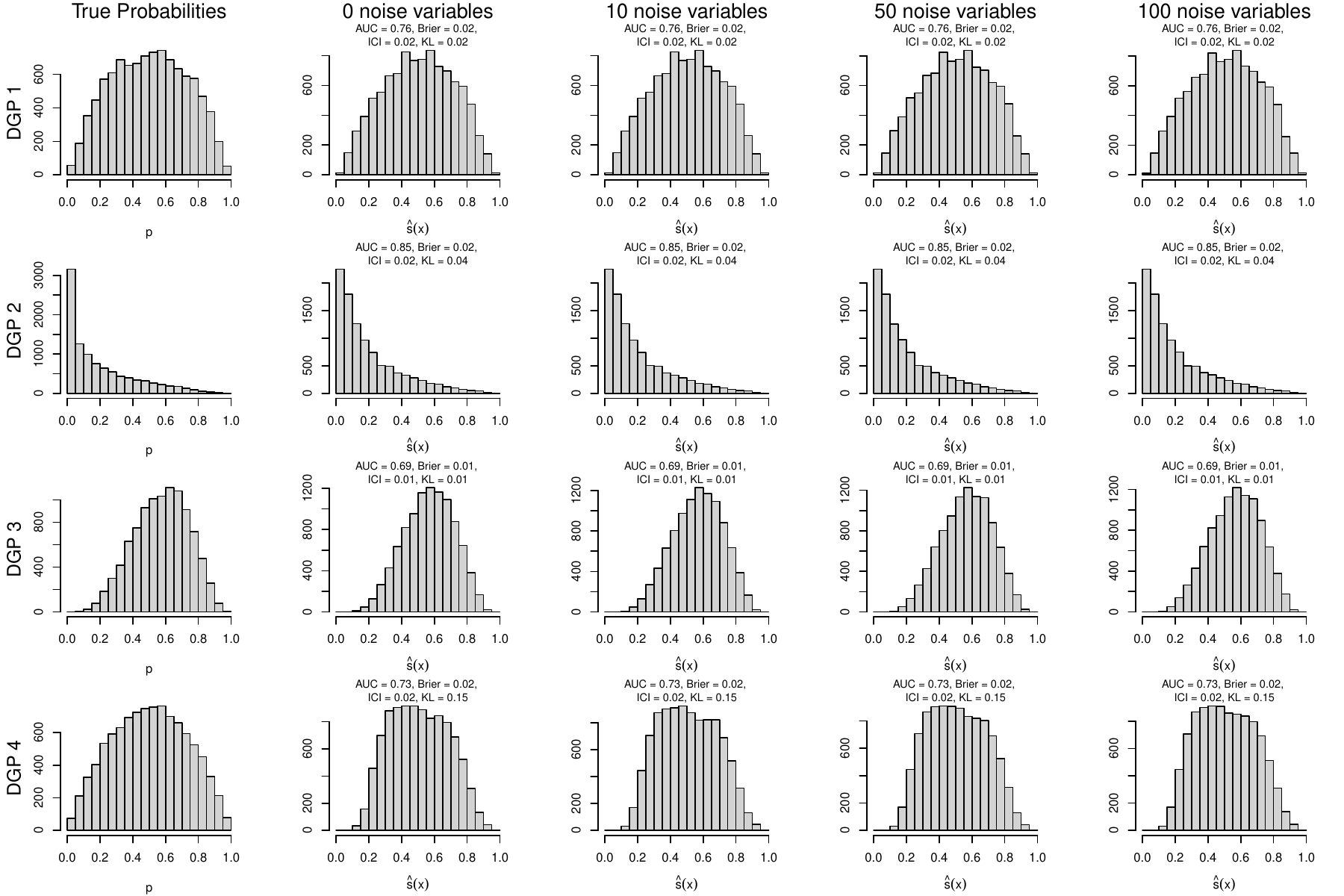}
    \caption{Distribution of true probabilities and estimated scores for \textbf{Generalized Additive Models with Selection} under the four DGPs (in rows): single replication across different number of noise variables (in columns).}
    \label{fig:bp_synthetic_gamsel}
\end{figure}

\subsection{Comparison of Models}

Tables~\ref{tab:synthetic-results-dgp1}, \ref{tab:synthetic-results-dgp2}, \ref{tab:synthetic-results-dgp3}, and \ref{tab:synthetic-results-dgp4} complement Table~\ref{tab:synthetic-results-dgp1} by showing the metrics obtained from 100 replications of simulated probability distributions for DGP 1 to 4, respectively. In addition to the metrics computed for the Random Forest and XGBoost, we included metrics for the GLM, GAM, and GAMSEL models. Since these models do not require hyperparameter selection, the metrics are computed once on the test set. We report the metrics for these models in the section of the table corresponding to hyperparameters selected by optimizing KL divergence (KL*).


{
\setlength{\tabcolsep}{2pt}
\begin{table}[ht!]
    \caption{Comparison of metrics computed on the test set for models selected based on AUC, or KL divergence across 100 replications of \textbf{DGP 1}. Standard errors are provided in parentheses.}
    \label{tab:synthetic-results-dgp1}
    \centering\tiny
    \begin{tabular}{clccccccccccccccc}
\toprule
\multicolumn{2}{c}{ } & \multicolumn{5}{c}{AUC*} & \multicolumn{10}{c}{KL*} \\
\cmidrule(l{3pt}r{3pt}){3-7} \cmidrule(l{3pt}r{3pt}){8-17}
Noise & Model & AUC & Brier & ICI & KL & QR & AUC & Brier & ICI & KL & QR & \(\Delta\)AUC & \(\Delta\)Brier & \(\Delta\)ICI & \(\Delta\)KL & \(\Delta\)QR\\
\midrule
 0 & RF & 0.76 (0.00) & 0.20 (0.00) & 0.01 (0.00) & 0.06 (0.02) & 1.00 (0.03) & 0.75 (0.01) & 0.20 (0.00) & 0.02 (0.01) & 0.03 (0.01) & 1.04 (0.02) & \cellcolor[HTML]{FFCCCC}{\textcolor[HTML]{333333}{-0.01}} & \cellcolor[HTML]{FFD6D6}{\textcolor[HTML]{333333}{0.00}} & \cellcolor[HTML]{FFD6D6}{\textcolor[HTML]{333333}{0.01}} & \cellcolor[HTML]{E9F6E9}{\textcolor[HTML]{1A1A1A}{-0.03}} & 0.04\\
 & XGB & 0.76 (0.00) & 0.20 (0.00) & 0.01 (0.00) & 0.05 (0.03) & 0.96 (0.03) & 0.75 (0.01) & 0.20 (0.00) & 0.01 (0.00) & 0.02 (0.01) & 1.01 (0.02) & \cellcolor[HTML]{FFD6D6}{\textcolor[HTML]{333333}{0.00}} & \cellcolor[HTML]{FFD6D6}{\textcolor[HTML]{333333}{0.00}} & \cellcolor[HTML]{FFD6D6}{\textcolor[HTML]{333333}{0.00}} & \cellcolor[HTML]{E9F6E9}{\textcolor[HTML]{1A1A1A}{-0.03}} & 0.04\\
 & GLM &  &  &  &  &  & 0.76 (0.01) & 0.20 (0.00) & 0.01 (0.00) & 0.00 (0.00) & 1.00 (0.02) & \cellcolor{white}{\textcolor{white}{}} & \cellcolor{white}{\textcolor{white}{}} & \cellcolor{white}{\textcolor{white}{}} & \cellcolor{white}{\textcolor{white}{}} & \\
 & GAM &  &  &  &  &  & 0.76 (0.01) & 0.20 (0.00) & 0.01 (0.00) & 0.00 (0.00) & 1.00 (0.02) & \cellcolor{white}{\textcolor{white}{}} & \cellcolor{white}{\textcolor{white}{}} & \cellcolor{white}{\textcolor{white}{}} & \cellcolor{white}{\textcolor{white}{}} & \\
 & GAMSEL &  &  &  &  &  & 0.76 (0.00) & 0.20 (0.00) & 0.01 (0.00) & 0.01 (0.01) & 0.93 (0.02) & \cellcolor{white}{\textcolor{white}{}} & \cellcolor{white}{\textcolor{white}{}} & \cellcolor{white}{\textcolor{white}{}} & \cellcolor{white}{\textcolor{white}{}} & \\
10 & RF & 0.76 (0.00) & 0.20 (0.00) & 0.01 (0.00) & 0.06 (0.03) & 0.98 (0.03) & 0.75 (0.01) & 0.20 (0.00) & 0.01 (0.00) & 0.01 (0.00) & 1.00 (0.02) & \cellcolor[HTML]{FFC2C2}{\textcolor[HTML]{2B2B2B}{-0.01}} & \cellcolor[HTML]{FFCCCC}{\textcolor[HTML]{333333}{0.00}} & \cellcolor[HTML]{FFD6D6}{\textcolor[HTML]{333333}{0.00}} & \cellcolor[HTML]{E9F6E9}{\textcolor[HTML]{1A1A1A}{-0.05}} & 0.02\\
\cmidrule{2-17}
 & XGB & 0.76 (0.00) & 0.20 (0.00) & 0.01 (0.01) & 0.07 (0.05) & 0.93 (0.04) & 0.75 (0.01) & 0.20 (0.00) & 0.02 (0.00) & 0.01 (0.00) & 1.02 (0.02) & \cellcolor[HTML]{FFCCCC}{\textcolor[HTML]{333333}{-0.01}} & \cellcolor[HTML]{FFD6D6}{\textcolor[HTML]{333333}{0.00}} & \cellcolor[HTML]{FFD6D6}{\textcolor[HTML]{333333}{0.00}} & \cellcolor[HTML]{E9F6E9}{\textcolor[HTML]{1A1A1A}{-0.06}} & 0.08\\
 & GLM &  &  &  &  &  & 0.76 (0.01) & 0.20 (0.00) & 0.01 (0.00) & 0.00 (0.00) & 1.00 (0.02) & \cellcolor{white}{\textcolor{white}{}} & \cellcolor{white}{\textcolor{white}{}} & \cellcolor{white}{\textcolor{white}{}} & \cellcolor{white}{\textcolor{white}{}} & \\
 & GAM &  &  &  &  &  & 0.76 (0.01) & 0.20 (0.00) & 0.01 (0.00) & 0.00 (0.00) & 1.01 (0.02) & \cellcolor{white}{\textcolor{white}{}} & \cellcolor{white}{\textcolor{white}{}} & \cellcolor{white}{\textcolor{white}{}} & \cellcolor{white}{\textcolor{white}{}} & \\
 & GAMSEL &  &  &  &  &  & 0.76 (0.01) & 0.20 (0.00) & 0.01 (0.00) & 0.01 (0.01) & 0.93 (0.02) & \cellcolor{white}{\textcolor{white}{}} & \cellcolor{white}{\textcolor{white}{}} & \cellcolor{white}{\textcolor{white}{}} & \cellcolor{white}{\textcolor{white}{}} & \\
\cmidrule{1-17}
50 & RF & 0.75 (0.00) & 0.20 (0.00) & 0.04 (0.01) & 0.27 (0.05) & 0.76 (0.03) & 0.75 (0.01) & 0.21 (0.00) & 0.03 (0.01) & 0.14 (0.02) & 0.81 (0.02) & \cellcolor[HTML]{FFD6D6}{\textcolor[HTML]{333333}{0.00}} & \cellcolor[HTML]{FFD6D6}{\textcolor[HTML]{333333}{0.00}} & \cellcolor[HTML]{E9F6E9}{\textcolor[HTML]{1A1A1A}{-0.01}} & \cellcolor[HTML]{E9F6E9}{\textcolor[HTML]{1A1A1A}{-0.12}} & 0.04\\
 & XGB & 0.76 (0.00) & 0.20 (0.00) & 0.02 (0.01) & 0.09 (0.05) & 0.91 (0.04) & 0.74 (0.01) & 0.21 (0.00) & 0.02 (0.01) & 0.01 (0.00) & 1.02 (0.02) & \cellcolor[HTML]{FFC2C2}{\textcolor[HTML]{2B2B2B}{-0.01}} & \cellcolor[HTML]{FFCCCC}{\textcolor[HTML]{333333}{0.00}} & \cellcolor[HTML]{FFD6D6}{\textcolor[HTML]{333333}{0.00}} & \cellcolor[HTML]{E9F6E9}{\textcolor[HTML]{1A1A1A}{-0.08}} & 0.10\\
 & GLM &  &  &  &  &  & 0.76 (0.01) & 0.20 (0.00) & 0.01 (0.00) & 0.00 (0.00) & 1.01 (0.02) & \cellcolor{white}{\textcolor{white}{}} & \cellcolor{white}{\textcolor{white}{}} & \cellcolor{white}{\textcolor{white}{}} & \cellcolor{white}{\textcolor{white}{}} & \\
 & GAM &  &  &  &  &  & 0.75 (0.01) & 0.20 (0.00) & 0.02 (0.01) & 0.01 (0.01) & 1.05 (0.02) & \cellcolor{white}{\textcolor{white}{}} & \cellcolor{white}{\textcolor{white}{}} & \cellcolor{white}{\textcolor{white}{}} & \cellcolor{white}{\textcolor{white}{}} & \\
 & GAMSEL &  &  &  &  &  & 0.76 (0.01) & 0.20 (0.00) & 0.01 (0.00) & 0.01 (0.01) & 0.93 (0.02) & \cellcolor{white}{\textcolor{white}{}} & \cellcolor{white}{\textcolor{white}{}} & \cellcolor{white}{\textcolor{white}{}} & \cellcolor{white}{\textcolor{white}{}} & \\
\cmidrule{1-17}
 100 & RF & 0.75 (0.00) & 0.21 (0.00) & 0.07 (0.01) & 0.59 (0.08) & 0.57 (0.03) & 0.74 (0.01) & 0.21 (0.00) & 0.06 (0.01) & 0.45 (0.03) & 0.61 (0.02) & \cellcolor[HTML]{FFD6D6}{\textcolor[HTML]{333333}{0.00}} & \cellcolor[HTML]{E9F6E9}{\textcolor[HTML]{1A1A1A}{0.00}} & \cellcolor[HTML]{E9F6E9}{\textcolor[HTML]{1A1A1A}{-0.01}} & \cellcolor[HTML]{D4F2D4}{\textcolor[HTML]{1A1A1A}{-0.13}} & 0.04\\
 & XGB & 0.76 (0.00) & 0.20 (0.00) & 0.02 (0.01) & 0.09 (0.04) & 0.91 (0.03) & 0.74 (0.01) & 0.21 (0.00) & 0.02 (0.01) & 0.01 (0.00) & 1.02 (0.01) & \cellcolor[HTML]{FFB8B8}{\textcolor[HTML]{2B2B2B}{-0.01}} & \cellcolor[HTML]{FFCCCC}{\textcolor[HTML]{333333}{0.00}} & \cellcolor[HTML]{FFD6D6}{\textcolor[HTML]{333333}{0.00}} & \cellcolor[HTML]{E9F6E9}{\textcolor[HTML]{1A1A1A}{-0.08}} & 0.11\\
 & GLM &  &  &  &  &  & 0.75 (0.01) & 0.20 (0.00) & 0.01 (0.00) & 0.00 (0.00) & 1.02 (0.02) & \cellcolor{white}{\textcolor{white}{}} & \cellcolor{white}{\textcolor{white}{}} & \cellcolor{white}{\textcolor{white}{}} & \cellcolor{white}{\textcolor{white}{}} & \\
 & GAM &  &  &  &  &  & 0.74 (0.01) & 0.21 (0.00) & 0.04 (0.00) & 0.03 (0.01) & 1.10 (0.02) & \cellcolor{white}{\textcolor{white}{}} & \cellcolor{white}{\textcolor{white}{}} & \cellcolor{white}{\textcolor{white}{}} & \cellcolor{white}{\textcolor{white}{}} & \\
 & GAMSEL &  &  &  &  &  & 0.76 (0.01) & 0.20 (0.00) & 0.01 (0.00) & 0.01 (0.01) & 0.93 (0.02) & \cellcolor{white}{\textcolor{white}{}} & \cellcolor{white}{\textcolor{white}{}} & \cellcolor{white}{\textcolor{white}{}} & \cellcolor{white}{\textcolor{white}{}} & \\
\bottomrule
\end{tabular}
\begin{minipage}{\textwidth}
\vspace{1ex}
\scriptsize\underline{Notes:} RF, XGB, GLM, GAM and GAMSEL denote Random Forest, Extreme Gradient Boosting, Generalized Linear Model, Generalized Additive Models, Generalized Additive Models with model selection, respectively. AUC* refers to models chosen based on the maximization or minimization of AUC on the validation set. KL* denote the selection of the best model based on minimizing the Kullback-Leibler divergence between the model's score distributions and the underlying probabilities. Columns $\Delta AUC$, $\Delta \text{Brier}$, $\Delta \text{ICI}$, $\Delta \text{KL}$, and  $\Delta \text{QR}$ display the differences between the AUC/Brier/ICI/KL/QR values from models selected by minimizing the Kullback-Leibler divergence and those maximizing AUC. Negative values indicate a reduction in AUC/Brier/ICI/KL/QR when prioritizing minimization of the Kullback-Leibler divergence over maximization of AUC. QR refers to the ratio of the difference between the 90th and 10th percentiles of the scores to the difference between the 90th and 10th percentiles of the true probabilities.
\end{minipage}
\end{table}
}

{
%
%
\setlength{\tabcolsep}{1.6pt}
\begin{table}[H]
    \caption{Comparison of metrics computed on the test set for models selected based on AUC, or KL divergence across 100 replications of \textbf{DGP 2}. Standard errors are provided in parentheses.}
    \label{tab:synthetic-results-dgp2}
    \centering\tiny
    \begin{tabular}{clccccccccccccccc}
\toprule
\multicolumn{2}{c}{ } & \multicolumn{5}{c}{AUC*} & \multicolumn{10}{c}{KL*} \\
\cmidrule(l{3pt}r{3pt}){3-7} \cmidrule(l{3pt}r{3pt}){8-17}
Noise & Model & AUC & Brier & ICI & KL & QR & AUC & Brier & ICI & KL & QR & \(\Delta\)AUC & \(\Delta\)Brier & \(\Delta\)ICI & \(\Delta\)KL & \(\Delta\)QR\\
\midrule
 0 & Random Forests & 0.84 (0.00) & 0.12 (0.00) & 0.01 (0.00) & 0.04 (0.02) & 1.01 (0.03) & 0.83 (0.01) & 0.12 (0.00) & 0.01 (0.00) & 0.02 (0.01) & 1.03 (0.03) & \cellcolor[HTML]{FFC2C2}{\textcolor[HTML]{2B2B2B}{-0.01}} & \cellcolor[HTML]{FFD6D6}{\textcolor[HTML]{333333}{0.00}} & \cellcolor[HTML]{FFD6D6}{\textcolor[HTML]{333333}{0.00}} & \cellcolor[HTML]{E9F6E9}{\textcolor[HTML]{1A1A1A}{-0.03}} & 0.02\\
 & XGB & 0.84 (0.00) & 0.12 (0.00) & 0.01 (0.00) & 0.03 (0.01) & 0.99 (0.03) & 0.84 (0.01) & 0.12 (0.00) & 0.01 (0.00) & 0.01 (0.01) & 1.01 (0.03) & \cellcolor[HTML]{FFD6D6}{\textcolor[HTML]{333333}{0.00}} & \cellcolor[HTML]{FFD6D6}{\textcolor[HTML]{333333}{0.00}} & \cellcolor[HTML]{E9F6E9}{\textcolor[HTML]{1A1A1A}{0.00}} & \cellcolor[HTML]{E9F6E9}{\textcolor[HTML]{1A1A1A}{-0.01}} & 0.02\\
 & GLM &  &  &  &  &  & 0.84 (0.00) & 0.12 (0.00) & 0.01 (0.00) & 0.02 (0.00) & 0.99 (0.02) & \cellcolor{white}{\textcolor{white}{}} & \cellcolor{white}{\textcolor{white}{}} & \cellcolor{white}{\textcolor{white}{}} & \cellcolor{white}{\textcolor{white}{}} & \\
 & GAM &  &  &  &  &  & 0.84 (0.00) & 0.12 (0.00) & 0.01 (0.00) & 0.01 (0.00) & 1.00 (0.02) & \cellcolor{white}{\textcolor{white}{}} & \cellcolor{white}{\textcolor{white}{}} & \cellcolor{white}{\textcolor{white}{}} & \cellcolor{white}{\textcolor{white}{}} & \\
 & GAMSEL &  &  &  &  &  & 0.84 (0.00) & 0.12 (0.00) & 0.02 (0.00) & 0.04 (0.01) & 0.92 (0.02) & \cellcolor{white}{\textcolor{white}{}} & \cellcolor{white}{\textcolor{white}{}} & \cellcolor{white}{\textcolor{white}{}} & \cellcolor{white}{\textcolor{white}{}} & \\
\cmidrule{1-17}
10 & RF & 0.84 (0.00) & 0.12 (0.00) & 0.01 (0.00) & 0.04 (0.01) & 0.99 (0.03) & 0.83 (0.01) & 0.12 (0.00) & 0.01 (0.00) & 0.01 (0.00) & 1.00 (0.03) & \cellcolor[HTML]{FFC2C2}{\textcolor[HTML]{2B2B2B}{-0.01}} & \cellcolor[HTML]{FFD6D6}{\textcolor[HTML]{333333}{0.00}} & \cellcolor[HTML]{FFD6D6}{\textcolor[HTML]{333333}{0.00}} & \cellcolor[HTML]{E9F6E9}{\textcolor[HTML]{1A1A1A}{-0.03}} & 0.01\\
 & XGB & 0.84 (0.00) & 0.12 (0.00) & 0.01 (0.00) & 0.03 (0.01) & 0.96 (0.03) & 0.83 (0.01) & 0.12 (0.00) & 0.01 (0.00) & 0.01 (0.00) & 1.00 (0.02) & \cellcolor[HTML]{FFCCCC}{\textcolor[HTML]{333333}{-0.01}} & \cellcolor[HTML]{FFD6D6}{\textcolor[HTML]{333333}{0.00}} & \cellcolor[HTML]{E9F6E9}{\textcolor[HTML]{1A1A1A}{0.00}} & \cellcolor[HTML]{E9F6E9}{\textcolor[HTML]{1A1A1A}{-0.02}} & 0.04\\
 & GLM &  &  &  &  &  & 0.84 (0.01) & 0.12 (0.00) & 0.01 (0.00) & 0.02 (0.00) & 0.99 (0.02) & \cellcolor{white}{\textcolor{white}{}} & \cellcolor{white}{\textcolor{white}{}} & \cellcolor{white}{\textcolor{white}{}} & \cellcolor{white}{\textcolor{white}{}} & \\
 & GAM &  &  &  &  &  & 0.84 (0.01) & 0.12 (0.00) & 0.01 (0.00) & 0.01 (0.00) & 1.00 (0.02) & \cellcolor{white}{\textcolor{white}{}} & \cellcolor{white}{\textcolor{white}{}} & \cellcolor{white}{\textcolor{white}{}} & \cellcolor{white}{\textcolor{white}{}} & \\
 & GAMSEL &  &  &  &  &  & 0.84 (0.00) & 0.12 (0.00) & 0.02 (0.00) & 0.04 (0.01) & 0.92 (0.02) & \cellcolor{white}{\textcolor{white}{}} & \cellcolor{white}{\textcolor{white}{}} & \cellcolor{white}{\textcolor{white}{}} & \cellcolor{white}{\textcolor{white}{}} & \\
\cmidrule{1-17}
 50 & RF & 0.83 (0.00) & 0.12 (0.00) & 0.04 (0.01) & 0.37 (0.11) & 0.78 (0.04) & 0.83 (0.01) & 0.12 (0.00) & 0.03 (0.00) & 0.12 (0.02) & 0.88 (0.02) & \cellcolor[HTML]{FFCCCC}{\textcolor[HTML]{333333}{-0.01}} & \cellcolor[HTML]{E9F6E9}{\textcolor[HTML]{1A1A1A}{0.00}} & \cellcolor[HTML]{E9F6E9}{\textcolor[HTML]{1A1A1A}{-0.01}} & \cellcolor[HTML]{BFEFBF}{\textcolor[HTML]{1A1A1A}{-0.26}} & 0.10\\
 & XGB & 0.84 (0.00) & 0.12 (0.00) & 0.01 (0.00) & 0.04 (0.06) & 0.95 (0.03) & 0.83 (0.01) & 0.12 (0.00) & 0.01 (0.00) & 0.01 (0.00) & 1.00 (0.02) & \cellcolor[HTML]{FFCCCC}{\textcolor[HTML]{333333}{-0.01}} & \cellcolor[HTML]{FFD6D6}{\textcolor[HTML]{333333}{0.00}} & \cellcolor[HTML]{E9F6E9}{\textcolor[HTML]{1A1A1A}{0.00}} & \cellcolor[HTML]{E9F6E9}{\textcolor[HTML]{1A1A1A}{-0.04}} & 0.05\\
 & GLM &  &  &  &  &  & 0.84 (0.00) & 0.12 (0.00) & 0.01 (0.00) & 0.03 (0.00) & 1.00 (0.02) & \cellcolor{white}{\textcolor{white}{}} & \cellcolor{white}{\textcolor{white}{}} & \cellcolor{white}{\textcolor{white}{}} & \cellcolor{white}{\textcolor{white}{}} & \\
 & GAM &  &  &  &  &  & 0.83 (0.00) & 0.12 (0.00) & 0.02 (0.00) & 0.02 (0.01) & 1.04 (0.02) & \cellcolor{white}{\textcolor{white}{}} & \cellcolor{white}{\textcolor{white}{}} & \cellcolor{white}{\textcolor{white}{}} & \cellcolor{white}{\textcolor{white}{}} & \\
 & GAMSEL &  &  &  &  &  & 0.84 (0.00) & 0.12 (0.00) & 0.02 (0.00) & 0.04 (0.01) & 0.92 (0.02) & \cellcolor{white}{\textcolor{white}{}} & \cellcolor{white}{\textcolor{white}{}} & \cellcolor{white}{\textcolor{white}{}} & \cellcolor{white}{\textcolor{white}{}} & \\
\cmidrule{1-17}
 100 & RF & 0.82 (0.01) & 0.13 (0.00) & 0.07 (0.01) & 0.84 (0.11) & 0.60 (0.05) & 0.82 (0.01) & 0.13 (0.00) & 0.05 (0.00) & 0.49 (0.04) & 0.72 (0.02) & \cellcolor[HTML]{FFCCCC}{\textcolor[HTML]{333333}{0.00}} & \cellcolor[HTML]{D4F2D4}{\textcolor[HTML]{1A1A1A}{0.00}} & \cellcolor[HTML]{D4F2D4}{\textcolor[HTML]{1A1A1A}{-0.02}} & \cellcolor[HTML]{BFEFBF}{\textcolor[HTML]{1A1A1A}{-0.35}} & 0.12\\
 & XGB & 0.84 (0.00) & 0.12 (0.00) & 0.01 (0.00) & 0.04 (0.02) & 0.94 (0.03) & 0.83 (0.01) & 0.12 (0.00) & 0.01 (0.00) & 0.01 (0.00) & 0.99 (0.02) & \cellcolor[HTML]{FFC2C2}{\textcolor[HTML]{2B2B2B}{-0.01}} & \cellcolor[HTML]{FFCCCC}{\textcolor[HTML]{333333}{0.00}} & \cellcolor[HTML]{E9F6E9}{\textcolor[HTML]{1A1A1A}{0.00}} & \cellcolor[HTML]{E9F6E9}{\textcolor[HTML]{1A1A1A}{-0.03}} & 0.06\\
 & GLM &  &  &  &  &  & 0.84 (0.01) & 0.12 (0.00) & 0.01 (0.00) & 0.03 (0.00) & 1.01 (0.02) & \cellcolor{white}{\textcolor{white}{}} & \cellcolor{white}{\textcolor{white}{}} & \cellcolor{white}{\textcolor{white}{}} & \cellcolor{white}{\textcolor{white}{}} & \\
 & GAM &  &  &  &  &  & 0.82 (0.01) & 0.13 (0.00) & 0.03 (0.00) & 0.04 (0.01) & 1.08 (0.02) & \cellcolor{white}{\textcolor{white}{}} & \cellcolor{white}{\textcolor{white}{}} & \cellcolor{white}{\textcolor{white}{}} & \cellcolor{white}{\textcolor{white}{}} & \\
 & GAMSEL &  &  &  &  &  & 0.84 (0.00) & 0.12 (0.00) & 0.02 (0.00) & 0.04 (0.01) & 0.92 (0.02) & \cellcolor{white}{\textcolor{white}{}} & \cellcolor{white}{\textcolor{white}{}} & \cellcolor{white}{\textcolor{white}{}} & \cellcolor{white}{\textcolor{white}{}} & \\
\bottomrule
\end{tabular}
\begin{minipage}{\textwidth}
\vspace{1ex}
\scriptsize\underline{Notes:} RF, XGB, GLM, GAM and GAMSEL denote Random Forest, Extreme Gradient Boosting, Generalized Linear Model, Generalized Additive Models, Generalized Additive Models with model selection, respectively. AUC* refers to models chosen based on the maximization or minimization of AUC on the validation set. KL* denote the selection of the best model based on minimizing the Kullback-Leibler divergence between the model's score distributions and the underlying probabilities. Columns $\Delta AUC$, $\Delta \text{Brier}$, $\Delta \text{ICI}$, $\Delta \text{KL}$, and  $\Delta \text{QR}$ display the differences between the AUC/Brier/ICI/KL/QR values from models selected by minimizing the Kullback-Leibler divergence and those maximizing AUC. Negative values indicate a reduction in AUC/Brier/ICI/KL/QR when prioritizing minimization of the Kullback-Leibler divergence over maximization of AUC. QR refers to the ratio of the difference between the 90th and 10th percentiles of the scores to the difference between the 90th and 10th percentiles of the true probabilities.
\end{minipage}
\end{table}
}

{
%
%
\setlength{\tabcolsep}{2pt}
\begin{table}[H]
    \caption{Comparison of metrics computed on the test set for models selected based on AUC, or KL divergence across 100 replications of \textbf{DGP 3}. Standard errors are provided in parentheses.}
    \label{tab:synthetic-results-dgp3}
    \centering\tiny
    \begin{tabular}{clccccccccccccccc}
\toprule
\multicolumn{2}{c}{ } & \multicolumn{5}{c}{AUC*} & \multicolumn{10}{c}{KL*} \\
\cmidrule(l{3pt}r{3pt}){3-7} \cmidrule(l{3pt}r{3pt}){8-17}
Noise & Model & AUC & Brier & ICI & KL & QR & AUC & Brier & ICI & KL & QR & \(\Delta\)AUC & \(\Delta\)Brier & \(\Delta\)ICI & \(\Delta\)KL & \(\Delta\)QR\\
\midrule
0 & RF & 0.68 (0.01) & 0.22 (0.00) & 0.02 (0.01) & 0.12 (0.08) & 0.81 (0.10) & 0.67 (0.01) & 0.22 (0.00) & 0.02 (0.01) & 0.01 (0.00) & 1.02 (0.02) & \cellcolor[HTML]{FFC2C2}{\textcolor[HTML]{2B2B2B}{-0.01}} & \cellcolor[HTML]{FFD6D6}{\textcolor[HTML]{333333}{0.00}} & \cellcolor[HTML]{E9F6E9}{\textcolor[HTML]{1A1A1A}{-0.01}} & \cellcolor[HTML]{E9F6E9}{\textcolor[HTML]{1A1A1A}{-0.11}} & 0.21\\
 & XGB & 0.68 (0.01) & 0.22 (0.00) & 0.01 (0.00) & 0.01 (0.01) & 0.96 (0.04) & 0.68 (0.01) & 0.22 (0.00) & 0.01 (0.00) & 0.01 (0.00) & 1.00 (0.02) & \cellcolor[HTML]{FFD6D6}{\textcolor[HTML]{333333}{0.00}} & \cellcolor[HTML]{FFD6D6}{\textcolor[HTML]{333333}{0.00}} & \cellcolor[HTML]{FFD6D6}{\textcolor[HTML]{333333}{0.00}} & \cellcolor[HTML]{E9F6E9}{\textcolor[HTML]{1A1A1A}{-0.01}} & 0.04\\
 & GLM &  &  &  &  &  & 0.69 (0.00) & 0.22 (0.00) & 0.01 (0.00) & 0.00 (0.00) & 1.01 (0.03) & \cellcolor{white}{\textcolor{white}{}} & \cellcolor{white}{\textcolor{white}{}} & \cellcolor{white}{\textcolor{white}{}} & \cellcolor{white}{\textcolor{white}{}} & \\
 & GAM &  &  &  &  &  & 0.69 (0.01) & 0.22 (0.00) & 0.01 (0.00) & 0.01 (0.00) & 1.01 (0.03) & \cellcolor{white}{\textcolor{white}{}} & \cellcolor{white}{\textcolor{white}{}} & \cellcolor{white}{\textcolor{white}{}} & \cellcolor{white}{\textcolor{white}{}} & \\
 & GAMSEL &  &  &  &  &  & 0.69 (0.00) & 0.22 (0.00) & 0.01 (0.00) & 0.01 (0.01) & 0.93 (0.03) & \cellcolor{white}{\textcolor{white}{}} & \cellcolor{white}{\textcolor{white}{}} & \cellcolor{white}{\textcolor{white}{}} & \cellcolor{white}{\textcolor{white}{}} & \\
\cmidrule{1-17}
 10 & RF & 0.68 (0.01) & 0.22 (0.00) & 0.03 (0.01) & 0.16 (0.09) & 0.77 (0.10) & 0.67 (0.01) & 0.22 (0.00) & 0.01 (0.00) & 0.01 (0.01) & 0.96 (0.03) & \cellcolor[HTML]{FFC2C2}{\textcolor[HTML]{2B2B2B}{-0.01}} & \cellcolor[HTML]{FFD6D6}{\textcolor[HTML]{333333}{0.00}} & \cellcolor[HTML]{D4F2D4}{\textcolor[HTML]{1A1A1A}{-0.02}} & \cellcolor[HTML]{D4F2D4}{\textcolor[HTML]{1A1A1A}{-0.15}} & 0.19\\
 & XGB & 0.68 (0.01) & 0.22 (0.00) & 0.01 (0.00) & 0.03 (0.02) & 0.91 (0.05) & 0.68 (0.01) & 0.22 (0.00) & 0.01 (0.01) & 0.00 (0.00) & 1.00 (0.02) & \cellcolor[HTML]{FFCCCC}{\textcolor[HTML]{333333}{0.00}} & \cellcolor[HTML]{FFD6D6}{\textcolor[HTML]{333333}{0.00}} & \cellcolor[HTML]{FFD6D6}{\textcolor[HTML]{333333}{0.00}} & \cellcolor[HTML]{E9F6E9}{\textcolor[HTML]{1A1A1A}{-0.02}} & 0.09\\
 & GLM &  &  &  &  &  & 0.69 (0.00) & 0.22 (0.00) & 0.01 (0.00) & 0.00 (0.00) & 1.01 (0.03) & \cellcolor{white}{\textcolor{white}{}} & \cellcolor{white}{\textcolor{white}{}} & \cellcolor{white}{\textcolor{white}{}} & \cellcolor{white}{\textcolor{white}{}} & \\
 & GAM &  &  &  &  &  & 0.69 (0.01) & 0.22 (0.00) & 0.01 (0.01) & 0.01 (0.01) & 1.03 (0.03) & \cellcolor{white}{\textcolor{white}{}} & \cellcolor{white}{\textcolor{white}{}} & \cellcolor{white}{\textcolor{white}{}} & \cellcolor{white}{\textcolor{white}{}} & \\
 & GAMSEL &  &  &  &  &  & 0.69 (0.00) & 0.22 (0.00) & 0.01 (0.00) & 0.01 (0.01) & 0.93 (0.03) & \cellcolor{white}{\textcolor{white}{}} & \cellcolor{white}{\textcolor{white}{}} & \cellcolor{white}{\textcolor{white}{}} & \cellcolor{white}{\textcolor{white}{}} & \\
\cmidrule{1-17}
 50 & RF & 0.68 (0.01) & 0.22 (0.00) & 0.04 (0.01) & 0.33 (0.10) & 0.63 (0.06) & 0.67 (0.01) & 0.22 (0.00) & 0.03 (0.00) & 0.19 (0.03) & 0.71 (0.02) & \cellcolor[HTML]{FFCCCC}{\textcolor[HTML]{333333}{0.00}} & \cellcolor[HTML]{E9F6E9}{\textcolor[HTML]{1A1A1A}{0.00}} & \cellcolor[HTML]{D4F2D4}{\textcolor[HTML]{1A1A1A}{-0.01}} & \cellcolor[HTML]{D4F2D4}{\textcolor[HTML]{1A1A1A}{-0.14}} & 0.07\\
 & XGB & 0.68 (0.01) & 0.22 (0.00) & 0.01 (0.01) & 0.05 (0.03) & 0.87 (0.05) & 0.67 (0.01) & 0.22 (0.00) & 0.02 (0.00) & 0.00 (0.00) & 1.01 (0.02) & \cellcolor[HTML]{FFC2C2}{\textcolor[HTML]{2B2B2B}{-0.01}} & \cellcolor[HTML]{FFD6D6}{\textcolor[HTML]{333333}{0.00}} & \cellcolor[HTML]{FFD6D6}{\textcolor[HTML]{333333}{0.00}} & \cellcolor[HTML]{E9F6E9}{\textcolor[HTML]{1A1A1A}{-0.05}} & 0.14\\
 & GLM &  &  &  &  &  & 0.69 (0.01) & 0.22 (0.00) & 0.01 (0.01) & 0.01 (0.00) & 1.03 (0.03) & \cellcolor{white}{\textcolor{white}{}} & \cellcolor{white}{\textcolor{white}{}} & \cellcolor{white}{\textcolor{white}{}} & \cellcolor{white}{\textcolor{white}{}} & \\
 & GAM &  &  &  &  &  & 0.67 (0.01) & 0.22 (0.00) & 0.03 (0.01) & 0.03 (0.01) & 1.11 (0.03) & \cellcolor{white}{\textcolor{white}{}} & \cellcolor{white}{\textcolor{white}{}} & \cellcolor{white}{\textcolor{white}{}} & \cellcolor{white}{\textcolor{white}{}} & \\
 & GAMSEL &  &  &  &  &  & 0.69 (0.00) & 0.22 (0.00) & 0.01 (0.00) & 0.01 (0.01) & 0.93 (0.03) & \cellcolor{white}{\textcolor{white}{}} & \cellcolor{white}{\textcolor{white}{}} & \cellcolor{white}{\textcolor{white}{}} & \cellcolor{white}{\textcolor{white}{}} & \\
\cmidrule{1-17}
 100 & RF & 0.67 (0.01) & 0.23 (0.00) & 0.06 (0.01) & 0.68 (0.12) & 0.46 (0.04) & 0.67 (0.01) & 0.23 (0.00) & 0.05 (0.00) & 0.46 (0.04) & 0.54 (0.02) & \cellcolor[HTML]{FFCCCC}{\textcolor[HTML]{333333}{-0.01}} & \cellcolor[HTML]{E9F6E9}{\textcolor[HTML]{1A1A1A}{0.00}} & \cellcolor[HTML]{D4F2D4}{\textcolor[HTML]{1A1A1A}{-0.02}} & \cellcolor[HTML]{D4F2D4}{\textcolor[HTML]{1A1A1A}{-0.22}} & 0.08\\
 & XGB & 0.68 (0.01) & 0.22 (0.00) & 0.01 (0.01) & 0.06 (0.03) & 0.85 (0.04) & 0.67 (0.01) & 0.22 (0.00) & 0.02 (0.01) & 0.00 (0.00) & 1.01 (0.01) & \cellcolor[HTML]{FFC2C2}{\textcolor[HTML]{2B2B2B}{-0.01}} & \cellcolor[HTML]{FFCCCC}{\textcolor[HTML]{333333}{0.00}} & \cellcolor[HTML]{FFD6D6}{\textcolor[HTML]{333333}{0.00}} & \cellcolor[HTML]{E9F6E9}{\textcolor[HTML]{1A1A1A}{-0.06}} & 0.16\\
 & GLM &  &  &  &  &  & 0.68 (0.01) & 0.22 (0.00) & 0.02 (0.01) & 0.01 (0.01) & 1.05 (0.03) & \cellcolor{white}{\textcolor{white}{}} & \cellcolor{white}{\textcolor{white}{}} & \cellcolor{white}{\textcolor{white}{}} & \cellcolor{white}{\textcolor{white}{}} & \\
 & GAM &  &  &  &  &  & 0.66 (0.01) & 0.23 (0.00) & 0.05 (0.01) & 0.09 (0.02) & 1.20 (0.03) & \cellcolor{white}{\textcolor{white}{}} & \cellcolor{white}{\textcolor{white}{}} & \cellcolor{white}{\textcolor{white}{}} & \cellcolor{white}{\textcolor{white}{}} & \\
 & GAMSEL &  &  &  &  &  & 0.69 (0.00) & 0.22 (0.00) & 0.01 (0.00) & 0.01 (0.01) & 0.93 (0.03) & \cellcolor{white}{\textcolor{white}{}} & \cellcolor{white}{\textcolor{white}{}} & \cellcolor{white}{\textcolor{white}{}} & \cellcolor{white}{\textcolor{white}{}} & \\
\bottomrule
\end{tabular}
\begin{minipage}{\textwidth}
\vspace{1ex}
\scriptsize\underline{Notes:} RF, XGB, GLM, GAM and GAMSEL denote Random Forest, Extreme Gradient Boosting, Generalized Linear Model, Generalized Additive Models, Generalized Additive Models with model selection, respectively. AUC* refers to models chosen based on the maximization or minimization of AUC on the validation set. KL* denote the selection of the best model based on minimizing the Kullback-Leibler divergence between the model's score distributions and the underlying probabilities. Columns $\Delta AUC$, $\Delta \text{Brier}$, $\Delta \text{ICI}$, $\Delta \text{KL}$, and  $\Delta \text{QR}$ display the differences between the AUC/Brier/ICI/KL/QR values from models selected by minimizing the Kullback-Leibler divergence and those maximizing AUC. Negative values indicate a reduction in AUC/Brier/ICI/KL/QR when prioritizing minimization of the Kullback-Leibler divergence over maximization of AUC. QR refers to the ratio of the difference between the 90th and 10th percentiles of the scores to the difference between the 90th and 10th percentiles of the true probabilities.
\end{minipage}
\end{table}
}

{
%
%
\setlength{\tabcolsep}{2pt}
\begin{table}[H]
    \caption{Comparison of metrics computed on the test set for models selected based on AUC, or KL divergence across 100 replications of \textbf{DGP 4}. Standard errors are provided in parentheses.}
    \label{tab:synthetic-results-dgp4}
    \centering\tiny
    \begin{tabular}{clccccccccccccccc}
\toprule
\multicolumn{2}{c}{ } & \multicolumn{5}{c}{AUC*} & \multicolumn{10}{c}{KL*} \\
\cmidrule(l{3pt}r{3pt}){3-7} \cmidrule(l{3pt}r{3pt}){8-17}
Noise & Model & AUC & Brier & ICI & KL & QR & AUC & Brier & ICI & KL & QR & \(\Delta\)AUC & \(\Delta\)Brier & \(\Delta\)ICI & \(\Delta\)KL & \(\Delta\)QR\\
\midrule
0 & RF & 0.74 (0.01) & 0.21 (0.00) & 0.01 (0.00) & 0.04 (0.01) & 0.95 (0.02) & 0.74 (0.01) & 0.21 (0.00) & 0.02 (0.01) & 0.01 (0.00) & 1.03 (0.02) & \cellcolor[HTML]{FFCCCC}{\textcolor[HTML]{333333}{-0.01}} & \cellcolor[HTML]{FFD6D6}{\textcolor[HTML]{333333}{0.00}} & \cellcolor[HTML]{FFD6D6}{\textcolor[HTML]{333333}{0.01}} & \cellcolor[HTML]{E9F6E9}{\textcolor[HTML]{1A1A1A}{-0.03}} & 0.08\\
 & XGB & 0.75 (0.01) & 0.20 (0.00) & 0.01 (0.00) & 0.04 (0.01) & 0.96 (0.03) & 0.74 (0.01) & 0.21 (0.00) & 0.02 (0.01) & 0.01 (0.00) & 1.03 (0.02) & \cellcolor[HTML]{FFD6D6}{\textcolor[HTML]{333333}{0.00}} & \cellcolor[HTML]{FFD6D6}{\textcolor[HTML]{333333}{0.00}} & \cellcolor[HTML]{FFD6D6}{\textcolor[HTML]{333333}{0.01}} & \cellcolor[HTML]{E9F6E9}{\textcolor[HTML]{1A1A1A}{-0.03}} & 0.07\\
 & GLM &  &  &  &  &  & 0.68 (0.01) & 0.23 (0.00) & 0.01 (0.00) & 0.27 (0.03) & 0.67 (0.02) & \cellcolor{white}{\textcolor{white}{}} & \cellcolor{white}{\textcolor{white}{}} & \cellcolor{white}{\textcolor{white}{}} & \cellcolor{white}{\textcolor{white}{}} & \\
 & GAM &  &  &  &  &  & 0.73 (0.01) & 0.21 (0.00) & 0.01 (0.00) & 0.08 (0.01) & 0.88 (0.02) & \cellcolor{white}{\textcolor{white}{}} & \cellcolor{white}{\textcolor{white}{}} & \cellcolor{white}{\textcolor{white}{}} & \cellcolor{white}{\textcolor{white}{}} & \\
 & GAMSEL &  &  &  &  &  & 0.73 (0.01) & 0.21 (0.00) & 0.02 (0.01) & 0.14 (0.02) & 0.79 (0.02) & \cellcolor{white}{\textcolor{white}{}} & \cellcolor{white}{\textcolor{white}{}} & \cellcolor{white}{\textcolor{white}{}} & \cellcolor{white}{\textcolor{white}{}} & \\
\cmidrule{1-17}
10 & RF & 0.74 (0.01) & 0.21 (0.00) & 0.01 (0.00) & 0.06 (0.02) & 0.93 (0.03) & 0.73 (0.01) & 0.21 (0.00) & 0.02 (0.00) & 0.01 (0.01) & 0.96 (0.02) & \cellcolor[HTML]{FFC2C2}{\textcolor[HTML]{2B2B2B}{-0.01}} & \cellcolor[HTML]{FFCCCC}{\textcolor[HTML]{333333}{0.00}} & \cellcolor[HTML]{FFD6D6}{\textcolor[HTML]{333333}{0.00}} & \cellcolor[HTML]{E9F6E9}{\textcolor[HTML]{1A1A1A}{-0.04}} & 0.03\\
 & XGB & 0.74 (0.01) & 0.21 (0.00) & 0.01 (0.00) & 0.08 (0.02) & 0.89 (0.03) & 0.73 (0.01) & 0.21 (0.00) & 0.03 (0.01) & 0.02 (0.00) & 1.04 (0.02) & \cellcolor[HTML]{FFC2C2}{\textcolor[HTML]{2B2B2B}{-0.01}} & \cellcolor[HTML]{FFCCCC}{\textcolor[HTML]{333333}{0.00}} & \cellcolor[HTML]{FFCCCC}{\textcolor[HTML]{333333}{0.02}} & \cellcolor[HTML]{E9F6E9}{\textcolor[HTML]{1A1A1A}{-0.07}} & 0.15\\
 & GLM &  &  &  &  &  & 0.68 (0.01) & 0.23 (0.00) & 0.01 (0.00) & 0.26 (0.03) & 0.67 (0.02) & \cellcolor{white}{\textcolor{white}{}} & \cellcolor{white}{\textcolor{white}{}} & \cellcolor{white}{\textcolor{white}{}} & \cellcolor{white}{\textcolor{white}{}} & \\
 & GAM &  &  &  &  &  & 0.73 (0.01) & 0.21 (0.00) & 0.01 (0.00) & 0.07 (0.01) & 0.90 (0.02) & \cellcolor{white}{\textcolor{white}{}} & \cellcolor{white}{\textcolor{white}{}} & \cellcolor{white}{\textcolor{white}{}} & \cellcolor{white}{\textcolor{white}{}} & \\
 & GAMSEL &  &  &  &  &  & 0.73 (0.01) & 0.21 (0.00) & 0.02 (0.01) & 0.14 (0.02) & 0.79 (0.02) & \cellcolor{white}{\textcolor{white}{}} & \cellcolor{white}{\textcolor{white}{}} & \cellcolor{white}{\textcolor{white}{}} & \cellcolor{white}{\textcolor{white}{}} & \\
\cmidrule{1-17}
 50 & RF & 0.73 (0.01) & 0.21 (0.00) & 0.04 (0.01) & 0.34 (0.04) & 0.66 (0.02) & 0.73 (0.01) & 0.21 (0.00) & 0.04 (0.01) & 0.30 (0.03) & 0.67 (0.02) & \cellcolor[HTML]{FFD6D6}{\textcolor[HTML]{333333}{0.00}} & \cellcolor[HTML]{FFD6D6}{\textcolor[HTML]{333333}{0.00}} & \cellcolor[HTML]{E9F6E9}{\textcolor[HTML]{1A1A1A}{0.00}} & \cellcolor[HTML]{E9F6E9}{\textcolor[HTML]{1A1A1A}{-0.04}} & 0.01\\
 & XGB & 0.74 (0.01) & 0.21 (0.00) & 0.02 (0.01) & 0.12 (0.03) & 0.84 (0.03) & 0.72 (0.01) & 0.21 (0.00) & 0.03 (0.01) & 0.02 (0.00) & 1.03 (0.02) & \cellcolor[HTML]{FFADAD}{\textcolor[HTML]{232323}{-0.02}} & \cellcolor[HTML]{FFC2C2}{\textcolor[HTML]{2B2B2B}{0.01}} & \cellcolor[HTML]{FFCCCC}{\textcolor[HTML]{333333}{0.02}} & \cellcolor[HTML]{E9F6E9}{\textcolor[HTML]{1A1A1A}{-0.11}} & 0.19\\
 & GLM &  &  &  &  &  & 0.67 (0.01) & 0.23 (0.00) & 0.01 (0.00) & 0.24 (0.03) & 0.69 (0.02) & \cellcolor{white}{\textcolor{white}{}} & \cellcolor{white}{\textcolor{white}{}} & \cellcolor{white}{\textcolor{white}{}} & \cellcolor{white}{\textcolor{white}{}} & \\
 & GAM &  &  &  &  &  & 0.72 (0.01) & 0.22 (0.00) & 0.02 (0.00) & 0.04 (0.01) & 0.94 (0.02) & \cellcolor{white}{\textcolor{white}{}} & \cellcolor{white}{\textcolor{white}{}} & \cellcolor{white}{\textcolor{white}{}} & \cellcolor{white}{\textcolor{white}{}} & \\
 & GAMSEL &  &  &  &  &  & 0.73 (0.01) & 0.21 (0.00) & 0.02 (0.01) & 0.14 (0.02) & 0.79 (0.02) & \cellcolor{white}{\textcolor{white}{}} & \cellcolor{white}{\textcolor{white}{}} & \cellcolor{white}{\textcolor{white}{}} & \cellcolor{white}{\textcolor{white}{}} & \\
\cmidrule{1-17}
100 & RF & 0.72 (0.01) & 0.22 (0.00) & 0.07 (0.01) & 0.72 (0.05) & 0.48 (0.02) & 0.72 (0.01) & 0.22 (0.00) & 0.07 (0.01) & 0.65 (0.04) & 0.49 (0.02) & \cellcolor[HTML]{FFD6D6}{\textcolor[HTML]{333333}{0.00}} & \cellcolor[HTML]{E9F6E9}{\textcolor[HTML]{1A1A1A}{0.00}} & \cellcolor[HTML]{E9F6E9}{\textcolor[HTML]{1A1A1A}{0.00}} & \cellcolor[HTML]{E9F6E9}{\textcolor[HTML]{1A1A1A}{-0.06}} & 0.01\\
 & XGB & 0.74 (0.01) & 0.21 (0.00) & 0.02 (0.01) & 0.14 (0.04) & 0.82 (0.04) & 0.72 (0.01) & 0.22 (0.00) & 0.04 (0.01) & 0.02 (0.00) & 1.03 (0.01) & \cellcolor[HTML]{FFA3A3}{\textcolor[HTML]{1F1F1F}{-0.02}} & \cellcolor[HTML]{FFB8B8}{\textcolor[HTML]{2B2B2B}{0.01}} & \cellcolor[HTML]{FFCCCC}{\textcolor[HTML]{333333}{0.02}} & \cellcolor[HTML]{E9F6E9}{\textcolor[HTML]{1A1A1A}{-0.12}} & 0.21\\
 & GLM &  &  &  &  &  & 0.67 (0.01) & 0.23 (0.00) & 0.01 (0.00) & 0.22 (0.03) & 0.71 (0.02) & \cellcolor{white}{\textcolor{white}{}} & \cellcolor{white}{\textcolor{white}{}} & \cellcolor{white}{\textcolor{white}{}} & \cellcolor{white}{\textcolor{white}{}} & \\
 & GAM &  &  &  &  &  & 0.71 (0.01) & 0.22 (0.00) & 0.04 (0.01) & 0.02 (0.00) & 1.00 (0.02) & \cellcolor{white}{\textcolor{white}{}} & \cellcolor{white}{\textcolor{white}{}} & \cellcolor{white}{\textcolor{white}{}} & \cellcolor{white}{\textcolor{white}{}} & \\
 & GAMSEL &  &  &  &  &  & 0.73 (0.01) & 0.21 (0.00) & 0.02 (0.01) & 0.14 (0.02) & 0.79 (0.02) & \cellcolor{white}{\textcolor{white}{}} & \cellcolor{white}{\textcolor{white}{}} & \cellcolor{white}{\textcolor{white}{}} & \cellcolor{white}{\textcolor{white}{}} & \\
\bottomrule
\end{tabular}
\begin{minipage}{\textwidth}
\vspace{1ex}
\scriptsize\underline{Notes:} RF, \scriptsize\underline{Notes:} RF, XGB, GLM, GAM and GAMSEL denote Random Forest, Extreme Gradient Boosting, Generalized Linear Model, Generalized Additive Models, Generalized Additive Models with model selection, respectively. AUC* refers to models chosen based on the maximization or minimization of AUC on the validation set. KL* denote the selection of the best model based on minimizing the Kullback-Leibler divergence between the model's score distributions and the underlying probabilities. Columns $\Delta AUC$, $\Delta \text{Brier}$, $\Delta \text{ICI}$, $\Delta \text{KL}$, and  $\Delta \text{QR}$ display the differences between the AUC/Brier/ICI/KL/QR values from models selected by minimizing the Kullback-Leibler divergence and those maximizing AUC. Negative values indicate a reduction in AUC/Brier/ICI/KL/QR when prioritizing minimization of the Kullback-Leibler divergence over maximization of AUC. QR refers to the ratio of the difference between the 90th and 10th percentiles of the scores to the difference between the 90th and 10th percentiles of the true probabilities.
\end{minipage}
\end{table}
}


\section{Real-World Data}

Ten datasets from the UCI Machine Learning Repository are used in this study. The datasets are split into three parts: a training sample (64\% of the observations), a validation sample (16\%), and a test sample (the remaining 20\% of observations).

\subsection{Datasets}\label{sec:appendix-datasets}

The main characteristics of the datasets are summarized in Table~\ref{tab:real-data-datasets}.

{
\setlength{\tabcolsep}{1.6pt}
\begin{table}[H]
    \centering\small
    \caption{Key characteristics of the datasets}\label{tab:real-data-datasets}
    \begin{tabular}{lcC{1.6cm}C{1.6cm}C{1.6cm}cc}
\toprule
Dataset & \(n\) & No. predictors & No. num. predictors & Prop. target = 1 & Reference & License\\
\midrule
 \texttt{abalone} & 4,177 & 8 & 8 & 0.37 & \citet{misc_abalone_1} & CC BY 4.0\\
 \texttt{adult} & 32,561 & 14 & 6 & 0.24 & \citet{misc_adult_2} & CC BY 4.0\\
 \texttt{bank} & 45,211 & 16 & 7 & 0.12 & \citet{misc_bank_marketing_222} & CC BY 4.0\\
 \texttt{default} & 30,000 & 23 & 14 & 0.22 & \citet{misc_default_of_credit_card_clients_350} & CC BY 4.0\\
 \texttt{drybean} & 13,611 & 16 & 16 & 0.26 & \citet{misc_dry_bean_602} & CC BY 4.0 \\
 \texttt{coupon} & 12,079 & 22 & 0 & 0.57 & \citet{misc_vehicle_coupon_recommendation_603} & CC BY 4.0\\
 \texttt{mushroom} & 8,124 & 21 & 0 & 0.52 & \citet{misc_mushroom_73}& CC BY 4.0\\
 \texttt{occupancy} & 20,560 & 5 & 5 & 0.23 & \citet{misc_occupancy_detection_357} & CC BY 4.0\\
 \texttt{winequality} & 6,495 & 12 & 11 & 0.63 & \citet{misc_wine_quality_186} & CC BY 4.0\\
 \texttt{spambase} & 4,601 & 57 & 57 & 0.39 & \citet{misc_spambase_94} & CC BY 4.0\\
\bottomrule
\end{tabular}
\begin{minipage}{\textwidth}
\vspace{1ex}
\scriptsize\underline{Notes:} $n$ represents the number of observations, 'No. predictors' the total number of predictors, 'No. num. predictors' the number of numeric predictors, and 'Prop. target = 1' the proportion of positive observed events.
\end{minipage}
\end{table}
}

Most of the datasets used are associated with classification tasks. If not, they contain a binary variable suitable for classification or a variable that can be converted into a binary variable. The target variables for each dataset are as follows:
\begin{itemize}
	\item \texttt{abalone}: gender of abalones (1 for male, 0 for female); originally used to predict the size of abalones.
	\item \texttt{adult}: high income (1 if income $\geq$ 50k per year).
	\item \texttt{bank}: subscription to a term deposit (1 if yes, 0 otherwise).
	\item \texttt{default}: default payment (1 if default, 0 otherwise).
	\item \texttt{drybean}: type of dry bean (1 if dermason, 0 otherwise); originally a multi-class variable.
	\item \texttt{coupon}: acceptance of a recommended coupon in different driving scenarios (1 if accepted, 0 otherwise).
	\item \texttt{mushroom}: mushroom classification (1 if edible, 0 otherwise).
	\item \texttt{occupancy}: prediction of room occupancy (1 if occupied, 0 otherwise); originally aimed at predicting the age of occupancy from physical measurements.
	\item \texttt{winequality}: quality of wine (1 if quality $\geq$ 6, 0 otherwise); originally a scale from 0 to 10, with 0 being bad quality and 10 being good quality.
	\item \texttt{spambase}: email classification (1 if spam, 0 otherwise).
\end{itemize}

\subsection{Estimation Results}

\paragraph{Metrics} Table~\ref{tab:results-real-glm} presents the metrics computed on the test set for each dataset where the models used to predict the binary event are either a GLM, a GAM, or a GAMSEL. Table~\ref{tab:real-data-results-glm} and Table~\ref{tab:real-data-results-gam} complement Table~\ref{tab:real-data-results-gamsel} by presenting metrics obtained when the assumed underlying probabilities follow a Beta distribution, with shape parameters estimated using scores from GLM and GAM models, respectively.

{
\setlength{\tabcolsep}{.6pt}
\begin{table}[H]
    \caption{Metrics on real-world dataset for statistical learning models (Test set).}
    \label{tab:results-real-glm}
    \centering\scriptsize
    \begin{tabular}{llccccccllcccccc}
\toprule
Dataset & Model & AUC & Brier & ICI & \(\text{KL}^{\text{GLM}}\) & \(\text{KL}^{\text{GAM}}\) & \(\text{KL}^{\text{GAMSEL}}\) & Dataset & Model & AUC & Brier & ICI & \(\text{KL}^{\text{GLM}}\) & \(\text{KL}^{\text{GAM}}\) & \(\text{KL}^{\text{GAMSEL}}\)\\
\midrule
 & GLM & 0.70 & 0.20 & 0.06 & 0.05 & 0.11 & 0.05 &  & GLM & 0.75 & 0.20 & 0.01 & 0.02 & 0.02 & 0.06\\
\cmidrule{10-16}
 & GAM & 0.71 & 0.20 & 0.02 & 0.32 & 0.13 & 0.30 &  & GAM & 0.75 & 0.20 & 0.01 & 0.02 & 0.02 & 0.06\\
\cmidrule{10-16}
\multirow[t]{-3}{*}{\raggedright\arraybackslash \texttt{abalone}} & GAMSEL & 0.71 & 0.20 & 0.04 & 0.11 & 0.17 & 0.10 & \multirow[t]{-3}{*}{\raggedright\arraybackslash \texttt{coupon}} & GAMSEL & 0.75 & 0.20 & 0.02 & 0.05 & 0.05 & 0.03\\
\cmidrule{1-16}
 & GLM & 0.90 & 0.10 & 0.00 & 0.02 & 0.02 & 0.16 &  & GLM & 1.00 & 0.00 & 0.00 & 0.29 & 0.29 & 1.40\\
\cmidrule{10-16}
 & GAM & 0.91 & 0.10 & 0.01 & 0.05 & 0.03 & 0.23 &  & GAM & 1.00 & 0.00 & 0.00 & 0.29 & 0.29 & 1.40\\
\cmidrule{10-16}
\multirow[t]{-3}{*}{\raggedright\arraybackslash \texttt{adult}} & GAMSEL & 0.90 & 0.11 & 0.03 & 0.06 & 0.10 & 0.05 & \multirow[t]{-3}{*}{\raggedright\arraybackslash \texttt{mushroom}} & GAMSEL & 1.00 & 0.01 & 0.04 & 0.61 & 0.61 & 0.90\\
\cmidrule{1-16}
 & GLM & 0.91 & 0.07 & 0.03 & 0.20 & 0.15 & 0.32 &  & GLM & 1.00 & 0.01 & 0.01 & 0.46 & 0.35 & 0.85\\
\cmidrule{10-16}
 & GAM & 0.92 & 0.07 & 0.02 & 0.20 & 0.14 & 0.34 &  & GAM & 1.00 & 0.01 & 0.01 & 0.50 & 0.38 & 0.92\\
\cmidrule{10-16}
\multirow[t]{-3}{*}{\raggedright\arraybackslash bank} & GAMSEL & 0.91 & 0.07 & 0.03 & 0.13 & 0.10 & 0.20 & \multirow[t]{-3}{*}{\raggedright\arraybackslash \texttt{occupancy}} & GAMSEL & 0.99 & 0.02 & 0.04 & 0.41 & 0.34 & 0.63\\
\cmidrule{1-16}
 & GLM & 0.77 & 0.14 & 0.02 & 0.45 & 0.45 & 0.49 &  & GLM & 0.74 & 0.20 & 0.04 & 0.06 & 0.22 & 0.06\\
\cmidrule{10-16}
 & GAM & 0.78 & 0.13 & 0.01 & 0.29 & 0.28 & 0.35 &  & GAM & 0.79 & 0.18 & 0.04 & 0.14 & 0.02 & 0.21\\
\cmidrule{10-16}
\multirow[t]{-3}{*}{\raggedright\arraybackslash \texttt{default}} & GAMSEL & 0.76 & 0.14 & 0.03 & 0.73 & 0.75 & 0.70 & \multirow[t]{-3}{*}{\raggedright\arraybackslash \texttt{winequality}} & GAMSEL & 0.75 & 0.20 & 0.04 & 0.04 & 0.24 & 0.03\\
\cmidrule{1-16}
 & GLM & 0.99 & 0.03 & 0.01 & 0.06 & 0.05 & 0.57 &  & GLM & 0.97 & 0.06 & 0.02 & 0.02 & 0.15 & 0.37\\
\cmidrule{10-16}
 & GAM & 0.99 & 0.03 & 0.01 & 0.08 & 0.07 & 0.64 &  & GAM & 0.91 & 0.08 & 0.07 & 0.67 & 0.29 & 1.69\\
\cmidrule{10-16}
\multirow[t]{-3}{*}{\raggedright\arraybackslash \texttt{drybean}} & GAMSEL & 0.99 & 0.04 & 0.06 & 0.19 & 0.21 & 0.06 & \multirow[t]{-3}{*}{\raggedright\arraybackslash \texttt{spambase}} & GAMSEL & 0.96 & 0.07 & 0.06 & 0.44 & 0.98 & 0.18\\
\bottomrule
\end{tabular}
\begin{minipage}{\textwidth}
\vspace{1ex}
\scriptsize\underline{Notes:} \(\text{KL}^{\text{GLM}}\), \(\text{KL}^{\text{GAM}}\), and \(\text{KL}^{\text{GAMSEL}}\) refer to the Kullback-Leibler divergence computed using the prior Beta distribution fitted using a GLM, a GAM, or a GAMSEL, respectively.
\end{minipage}
\end{table}
}


{
\setlength{\tabcolsep}{1.7pt}
\begin{table}[H]
    \caption{AUC, Brier score, AIC, or KL divergence with Beta distributed priors estimated with a GLM.}
    \label{tab:real-data-results-glm}
    \centering\tiny
    \begin{tabular}{llcccccccccccccccccccc>{}c>{}c>{}c>{}c}
\toprule
\multicolumn{2}{c}{ } & \multicolumn{5}{c}{AUC*} & \multicolumn{5}{c}{Brier*} & \multicolumn{5}{c}{ICI*} & \multicolumn{9}{c}{KL*} \\
\cmidrule(l{3pt}r{3pt}){3-7} \cmidrule(l{3pt}r{3pt}){8-12} \cmidrule(l{3pt}r{3pt}){13-17} \cmidrule(l{3pt}r{3pt}){18-26}
Dataset & Model & AUC & Brier & ICI & KL & QR & AUC & Brier & ICI & KL & QR & AUC & Brier & ICI & KL & QR & AUC & Brier & ICI & KL & QR & \(\Delta\)AUC & \(\Delta\)Brier & \(\Delta\)ICI & \(\Delta\)KL\\
\midrule
 \texttt{abalone} & RF & 0.71 & 0.20 & 0.03 & 0.34 & 1.21 & 0.71 & 0.20 & 0.03 & 0.34 & 1.24 & 0.51 & 0.23 & 0.02 & 2.73 & 0.00 & 0.71 & 0.20 & 0.03 & 0.33 & 1.28 & \cellcolor[HTML]{FFD6D6}{\textcolor[HTML]{333333}{0.00}} & \cellcolor[HTML]{E9F6E9}{\textcolor[HTML]{1A1A1A}{0.00}} & \cellcolor[HTML]{FFD6D6}{\textcolor[HTML]{333333}{0.00}} & \cellcolor[HTML]{E9F6E9}{\textcolor[HTML]{1A1A1A}{-0.01}}\\
 & XGB & 0.69 & 0.20 & 0.03 & 0.42 & 1.45 & 0.69 & 0.20 & 0.04 & 0.56 & 1.06 & 0.70 & 0.20 & 0.04 & 0.80 & 1.03 & 0.69 & 0.21 & 0.05 & 0.24 & 1.23 & \cellcolor[HTML]{FFD6D6}{\textcolor[HTML]{333333}{0.00}} & \cellcolor[HTML]{FFD6D6}{\textcolor[HTML]{333333}{0.00}} & \cellcolor[HTML]{FFC2C2}{\textcolor[HTML]{2B2B2B}{0.02}} & \cellcolor[HTML]{E9F6E9}{\textcolor[HTML]{1A1A1A}{-0.18}}\\
\cmidrule{1-26}
 \texttt{adult} & RF & 0.92 & 0.10 & 0.03 & 0.03 & 0.88 & 0.92 & 0.10 & 0.03 & 0.02 & 0.89 & 0.51 & 0.18 & 0.00 & 4.46 & 0.00 & 0.92 & 0.10 & 0.03 & 0.02 & 0.89 & \cellcolor[HTML]{FFD6D6}{\textcolor[HTML]{333333}{0.00}} & \cellcolor[HTML]{E9F6E9}{\textcolor[HTML]{1A1A1A}{0.00}} & \cellcolor[HTML]{E9F6E9}{\textcolor[HTML]{1A1A1A}{0.00}} & \cellcolor[HTML]{E9F6E9}{\textcolor[HTML]{1A1A1A}{-0.01}}\\
 & XGB & 0.93 & 0.09 & 0.01 & 0.09 & 1.00 & 0.93 & 0.09 & 0.01 & 0.09 & 1.00 & 0.93 & 0.09 & 0.01 & 0.09 & 0.97 & 0.91 & 0.10 & 0.02 & 0.04 & 0.90 & \cellcolor[HTML]{FFCCCC}{\textcolor[HTML]{333333}{-0.01}} & \cellcolor[HTML]{FFC2C2}{\textcolor[HTML]{2B2B2B}{0.01}} & \cellcolor[HTML]{FFCCCC}{\textcolor[HTML]{333333}{0.01}} & \cellcolor[HTML]{E9F6E9}{\textcolor[HTML]{1A1A1A}{-0.05}}\\
\cmidrule{1-26}
 \texttt{bank} & RF & 0.94 & 0.06 & 0.02 & 0.19 & 1.08 & 0.94 & 0.06 & 0.02 & 0.21 & 1.10 & 0.94 & 0.06 & 0.02 & 0.21 & 1.12 & 0.92 & 0.07 & 0.04 & 0.07 & 0.82 & \cellcolor[HTML]{FFC2C2}{\textcolor[HTML]{2B2B2B}{-0.02}} & \cellcolor[HTML]{FFC2C2}{\textcolor[HTML]{2B2B2B}{0.01}} & \cellcolor[HTML]{FFC2C2}{\textcolor[HTML]{2B2B2B}{0.02}} & \cellcolor[HTML]{E9F6E9}{\textcolor[HTML]{1A1A1A}{-0.12}}\\
 & XGB & 0.93 & 0.06 & 0.02 & 0.36 & 1.17 & 0.93 & 0.06 & 0.02 & 0.28 & 1.12 & 0.93 & 0.06 & 0.02 & 0.34 & 1.15 & 0.91 & 0.07 & 0.03 & 0.07 & 0.93 & \cellcolor[HTML]{FFB8B8}{\textcolor[HTML]{2B2B2B}{-0.02}} & \cellcolor[HTML]{FFCCCC}{\textcolor[HTML]{333333}{0.00}} & \cellcolor[HTML]{FFCCCC}{\textcolor[HTML]{333333}{0.01}} & \cellcolor[HTML]{E9F6E9}{\textcolor[HTML]{1A1A1A}{-0.29}}\\
\cmidrule{1-26}
 \texttt{default} & RF & 0.78 & 0.13 & 0.02 & 0.20 & 1.10 & 0.78 & 0.13 & 0.01 & 0.18 & 1.12 & 0.78 & 0.13 & 0.01 & 0.16 & 1.15 & 0.77 & 0.14 & 0.02 & 0.13 & 1.17 & \cellcolor[HTML]{FFC2C2}{\textcolor[HTML]{2B2B2B}{-0.02}} & \cellcolor[HTML]{FFD6D6}{\textcolor[HTML]{333333}{0.00}} & \cellcolor[HTML]{FFD6D6}{\textcolor[HTML]{333333}{0.00}} & \cellcolor[HTML]{E9F6E9}{\textcolor[HTML]{1A1A1A}{-0.07}}\\
 & XGB & 0.78 & 0.13 & 0.01 & 0.23 & 1.17 & 0.78 & 0.13 & 0.01 & 0.29 & 1.15 & 0.78 & 0.13 & 0.01 & 0.22 & 1.19 & 0.77 & 0.13 & 0.01 & 0.19 & 1.17 & \cellcolor[HTML]{FFCCCC}{\textcolor[HTML]{333333}{-0.01}} & \cellcolor[HTML]{FFD6D6}{\textcolor[HTML]{333333}{0.00}} & \cellcolor[HTML]{FFD6D6}{\textcolor[HTML]{333333}{0.00}} & \cellcolor[HTML]{E9F6E9}{\textcolor[HTML]{1A1A1A}{-0.04}}\\
\cmidrule{1-26}
\texttt{drybean} & RF & 0.99 & 0.03 & 0.01 & 0.06 & 1.00 & 0.99 & 0.03 & 0.01 & 0.07 & 1.00 & 0.99 & 0.03 & 0.01 & 0.06 & 1.00 & 0.99 & 0.03 & 0.02 & 0.02 & 0.98 & \cellcolor[HTML]{FFD6D6}{\textcolor[HTML]{333333}{0.00}} & \cellcolor[HTML]{FFCCCC}{\textcolor[HTML]{333333}{0.00}} & \cellcolor[HTML]{FFCCCC}{\textcolor[HTML]{333333}{0.01}} & \cellcolor[HTML]{E9F6E9}{\textcolor[HTML]{1A1A1A}{-0.04}}\\
 & XGB & 0.99 & 0.03 & 0.01 & 0.08 & 1.00 & 0.99 & 0.03 & 0.01 & 0.09 & 1.00 & 0.99 & 0.03 & 0.01 & 0.09 & 1.00 & 0.99 & 0.03 & 0.04 & 0.07 & 0.92 & \cellcolor[HTML]{FFD6D6}{\textcolor[HTML]{333333}{0.00}} & \cellcolor[HTML]{FFD6D6}{\textcolor[HTML]{333333}{0.00}} & \cellcolor[HTML]{FFB8B8}{\textcolor[HTML]{2B2B2B}{0.03}} & \cellcolor[HTML]{E9F6E9}{\textcolor[HTML]{1A1A1A}{-0.02}}\\
\cmidrule{1-26}
 \texttt{coupon} & RF & 0.83 & 0.17 & 0.07 & 0.04 & 0.98 & 0.83 & 0.17 & 0.07 & 0.04 & 0.98 & 0.51 & 0.24 & 0.00 & 3.60 & 0.00 & 0.83 & 0.17 & 0.07 & 0.04 & 0.98 & \cellcolor[HTML]{FFD6D6}{\textcolor[HTML]{333333}{0.00}} & \cellcolor[HTML]{E9F6E9}{\textcolor[HTML]{1A1A1A}{0.00}} & \cellcolor[HTML]{E9F6E9}{\textcolor[HTML]{1A1A1A}{0.00}} & \cellcolor[HTML]{E9F6E9}{\textcolor[HTML]{1A1A1A}{0.00}}\\
 & XGB & 0.84 & 0.17 & 0.10 & 2.27 & 1.74 & 0.84 & 0.16 & 0.03 & 0.81 & 1.53 & 0.83 & 0.16 & 0.02 & 0.37 & 1.39 & 0.78 & 0.19 & 0.03 & 0.04 & 1.03 & \cellcolor[HTML]{FF8F8F}{\textcolor[HTML]{141414}{-0.06}} & \cellcolor[HTML]{FFADAD}{\textcolor[HTML]{232323}{0.02}} & \cellcolor[HTML]{426F42}{\textcolor[HTML]{E6E6E6}{-0.07}} & \cellcolor[HTML]{81B781}{\textcolor[HTML]{E6E6E6}{-2.23}}\\
\cmidrule{1-26}
 \texttt{mushroom} & RF & 1.00 & 0.01 & 0.05 & 0.23 & 0.96 & 1.00 & 0.00 & 0.01 & 0.22 & 1.00 & 1.00 & 0.00 & 0.01 & 0.22 & 1.00 & 1.00 & 0.01 & 0.04 & 0.11 & 0.99 & \cellcolor[HTML]{FFD6D6}{\textcolor[HTML]{333333}{0.00}} & \cellcolor[HTML]{FFD6D6}{\textcolor[HTML]{333333}{0.00}} & \cellcolor[HTML]{D4F2D4}{\textcolor[HTML]{1A1A1A}{-0.02}} & \cellcolor[HTML]{E9F6E9}{\textcolor[HTML]{1A1A1A}{-0.12}}\\
 & XGB & 1.00 & 0.00 & 0.00 & 0.28 & 1.00 & 1.00 & 0.00 & 0.00 & 0.29 & 1.00 & 1.00 & 0.00 & 0.00 & 0.28 & 1.00 & 1.00 & 0.01 & 0.04 & 0.13 & 0.97 & \cellcolor[HTML]{FFD6D6}{\textcolor[HTML]{333333}{0.00}} & \cellcolor[HTML]{FFC2C2}{\textcolor[HTML]{2B2B2B}{0.01}} & \cellcolor[HTML]{FFB8B8}{\textcolor[HTML]{2B2B2B}{0.03}} & \cellcolor[HTML]{E9F6E9}{\textcolor[HTML]{1A1A1A}{-0.15}}\\
\cmidrule{1-26}
 \texttt{occupancy} & RF & 1.00 & 0.01 & 0.00 & 0.56 & 1.04 & 1.00 & 0.01 & 0.00 & 0.57 & 1.04 & 1.00 & 0.01 & 0.00 & 0.57 & 1.04 & 1.00 & 0.01 & 0.04 & 0.31 & 0.97 & \cellcolor[HTML]{FFD6D6}{\textcolor[HTML]{333333}{0.00}} & \cellcolor[HTML]{FFC2C2}{\textcolor[HTML]{2B2B2B}{0.01}} & \cellcolor[HTML]{FFB8B8}{\textcolor[HTML]{2B2B2B}{0.03}} & \cellcolor[HTML]{E9F6E9}{\textcolor[HTML]{1A1A1A}{-0.25}}\\
 & XGB & 1.00 & 0.01 & 0.01 & 0.60 & 1.04 & 1.00 & 0.01 & 0.01 & 0.66 & 1.04 & 1.00 & 0.01 & 0.00 & 0.54 & 1.03 & 1.00 & 0.01 & 0.04 & 0.47 & 0.95 & \cellcolor[HTML]{FFD6D6}{\textcolor[HTML]{333333}{0.00}} & \cellcolor[HTML]{FFD6D6}{\textcolor[HTML]{333333}{0.00}} & \cellcolor[HTML]{FFADAD}{\textcolor[HTML]{232323}{0.04}} & \cellcolor[HTML]{E9F6E9}{\textcolor[HTML]{1A1A1A}{-0.13}}\\
\cmidrule{1-26}
 \texttt{winequality} & RF & 0.89 & 0.14 & 0.07 & 0.32 & 1.42 & 0.89 & 0.13 & 0.03 & 0.69 & 1.58 & 0.51 & 0.24 & 0.03 & 3.43 & 0.00 & 0.84 & 0.17 & 0.08 & 0.05 & 1.07 & \cellcolor[HTML]{FF9999}{\textcolor[HTML]{1A1A1A}{-0.05}} & \cellcolor[HTML]{FF7A7A}{\textcolor[HTML]{0A0A0A}{0.03}} & \cellcolor[HTML]{FFCCCC}{\textcolor[HTML]{333333}{0.01}} & \cellcolor[HTML]{E9F6E9}{\textcolor[HTML]{1A1A1A}{-0.27}}\\
 & XGB & 0.87 & 0.15 & 0.12 & 4.06 & 1.97 & 0.86 & 0.14 & 0.04 & 1.63 & 1.75 & 0.83 & 0.17 & 0.03 & 0.35 & 1.39 & 0.80 & 0.18 & 0.04 & 0.11 & 1.12 & \cellcolor[HTML]{FF7A7A}{\textcolor[HTML]{0A0A0A}{-0.07}} & \cellcolor[HTML]{FF8F8F}{\textcolor[HTML]{141414}{0.03}} & \cellcolor[HTML]{2F5D2F}{\textcolor[HTML]{E6E6E6}{-0.08}} & \cellcolor[HTML]{2F5D2F}{\textcolor[HTML]{E6E6E6}{-3.96}}\\
\cmidrule{1-26}
 \texttt{spambase} & RF & 0.99 & 0.05 & 0.06 & 0.21 & 0.96 & 0.99 & 0.04 & 0.04 & 0.10 & 0.98 & 0.51 & 0.24 & 0.01 & 6.08 & 0.00 & 0.99 & 0.04 & 0.04 & 0.10 & 0.98 & \cellcolor[HTML]{FFD6D6}{\textcolor[HTML]{333333}{0.00}} & \cellcolor[HTML]{E9F6E9}{\textcolor[HTML]{1A1A1A}{0.00}} & \cellcolor[HTML]{BFEFBF}{\textcolor[HTML]{1A1A1A}{-0.02}} & \cellcolor[HTML]{E9F6E9}{\textcolor[HTML]{1A1A1A}{-0.11}}\\
 & XGB & 0.98 & 0.04 & 0.01 & 0.23 & 1.00 & 0.99 & 0.04 & 0.01 & 0.17 & 1.00 & 0.99 & 0.04 & 0.01 & 0.17 & 1.00 & 0.98 & 0.04 & 0.01 & 0.10 & 0.99 & \cellcolor[HTML]{FFD6D6}{\textcolor[HTML]{333333}{0.00}} & \cellcolor[HTML]{FFC2C2}{\textcolor[HTML]{2B2B2B}{0.01}} & \cellcolor[HTML]{E9F6E9}{\textcolor[HTML]{1A1A1A}{0.00}} & \cellcolor[HTML]{E9F6E9}{\textcolor[HTML]{1A1A1A}{-0.13}}\\
\bottomrule
\end{tabular}
\begin{minipage}{\textwidth}
\vspace{1ex}
\scriptsize\underline{Notes:} RF and XGB denote Random Forest, and Extreme Gradient Boosting, respectively. AUC*/Brier*/ICI* refer to models chosen based on the maximization or minimization of AUC/Brier/ICI on the test set. KL* denote the selection of the best model based on minimizing the Kullback-Leibler divergence between the model's score distributions and the assumed priors of the underlying probabilities, estimated using a GLM model. Columns $\Delta AUC$, $\Delta Brier$, $\Delta ICI$ and $\Delta KL$ display the differences between the AUC/Brier/ICI/KL values from models selected by minimizing the Kullback-Leibler divergence and those maximizing AUC. Negative values indicate a reduction in AUC/Brier/ICI/KL when prioritizing minimization of the Kullback-Leibler divergence over optimization of AUC/Brier/ICI.
\end{minipage}
\end{table}
}

{
\setlength{\tabcolsep}{1.7pt}
\begin{table}[H]
    \caption{AUC, Brier score, AIC, or KL divergence with Beta distributed priors estimated with a GAM.}
    \label{tab:real-data-results-gam}
    \centering\tiny
\begin{tabular}{llcccccccccccccccccccccccc}
\toprule
\multicolumn{2}{c}{ } & \multicolumn{5}{c}{AUC*} & \multicolumn{5}{c}{Brier*} & \multicolumn{5}{c}{ICI*} & \multicolumn{9}{c}{KL*} \\
\cmidrule(l{3pt}r{3pt}){3-7} \cmidrule(l{3pt}r{3pt}){8-12} \cmidrule(l{3pt}r{3pt}){13-17} \cmidrule(l{3pt}r{3pt}){18-26}
Dataset & Model & AUC & Brier & ICI & KL & QR & AUC & Brier & ICI & KL & QR & AUC & Brier & ICI & KL & QR & AUC & Brier & ICI & KL & QR & \(\Delta\)AUC & \(\Delta\)Brier & \(\Delta\)ICI & \(\Delta\)KL\\
\midrule
 \texttt{abalone} & RF & 0.71 & 0.20 & 0.03 & 0.24 & 0.92 & 0.71 & 0.20 & 0.03 & 0.94 & 0.21 & 0.51 & 0.23 & 0.02 & 3.18 & 0.00 & 0.70 & 0.20 & 0.04 & 0.13 & 1.07 & \cellcolor[HTML]{FFCCCC}{\textcolor[HTML]{333333}{-0.01}} & \cellcolor[HTML]{FFCCCC}{\textcolor[HTML]{333333}{0.00}} & \cellcolor[HTML]{FFCCCC}{\textcolor[HTML]{333333}{0.01}} & \cellcolor[HTML]{E9F6E9}{\textcolor[HTML]{1A1A1A}{-0.11}}\\
 & XGB & 0.69 & 0.20 & 0.03 & 0.12 & 1.11 & 0.69 & 0.20 & 0.04 & 0.80 & 0.58 & 0.70 & 0.20 & 0.04 & 0.92 & 0.78 & 0.69 & 0.21 & 0.04 & 0.14 & 1.26 & \cellcolor[HTML]{FFD6D6}{\textcolor[HTML]{333333}{0.00}} & \cellcolor[HTML]{FFCCCC}{\textcolor[HTML]{333333}{0.00}} & \cellcolor[HTML]{FFCCCC}{\textcolor[HTML]{333333}{0.01}} & \cellcolor[HTML]{FFD6D6}{\textcolor[HTML]{333333}{0.02}}\\
\cmidrule{1-26}
 \texttt{adult} & RF & 0.92 & 0.10 & 0.03 & 0.06 & 0.83 & 0.92 & 0.10 & 0.03 & 0.83 & 0.04 & 0.51 & 0.18 & 0.00 & 4.63 & 0.00 & 0.92 & 0.10 & 0.03 & 0.04 & 0.83 & \cellcolor[HTML]{FFD6D6}{\textcolor[HTML]{333333}{0.00}} & \cellcolor[HTML]{E9F6E9}{\textcolor[HTML]{1A1A1A}{0.00}} & \cellcolor[HTML]{E9F6E9}{\textcolor[HTML]{1A1A1A}{0.00}} & \cellcolor[HTML]{E9F6E9}{\textcolor[HTML]{1A1A1A}{-0.01}}\\
 & XGB & 0.93 & 0.09 & 0.01 & 0.06 & 0.93 & 0.93 & 0.09 & 0.01 & 0.93 & 0.06 & 0.93 & 0.09 & 0.01 & 0.06 & 0.91 & 0.92 & 0.09 & 0.01 & 0.05 & 0.88 & \cellcolor[HTML]{FFCCCC}{\textcolor[HTML]{333333}{-0.01}} & \cellcolor[HTML]{FFC2C2}{\textcolor[HTML]{2B2B2B}{0.00}} & \cellcolor[HTML]{FFD6D6}{\textcolor[HTML]{333333}{0.01}} & \cellcolor[HTML]{E9F6E9}{\textcolor[HTML]{1A1A1A}{-0.02}}\\
\cmidrule{1-26}
 \texttt{bank} & RF & 0.94 & 0.06 & 0.02 & 0.14 & 1.09 & 0.94 & 0.06 & 0.02 & 1.11 & 0.15 & 0.94 & 0.06 & 0.02 & 0.15 & 1.13 & 0.92 & 0.07 & 0.04 & 0.05 & 0.87 & \cellcolor[HTML]{FFC2C2}{\textcolor[HTML]{2B2B2B}{-0.01}} & \cellcolor[HTML]{FFB8B8}{\textcolor[HTML]{2B2B2B}{0.01}} & \cellcolor[HTML]{FFC2C2}{\textcolor[HTML]{2B2B2B}{0.02}} & \cellcolor[HTML]{E9F6E9}{\textcolor[HTML]{1A1A1A}{-0.08}}\\
 & XGB & 0.93 & 0.06 & 0.02 & 0.28 & 1.18 & 0.93 & 0.06 & 0.02 & 1.13 & 0.21 & 0.93 & 0.06 & 0.02 & 0.26 & 1.16 & 0.91 & 0.07 & 0.03 & 0.05 & 0.96 & \cellcolor[HTML]{FFB8B8}{\textcolor[HTML]{2B2B2B}{-0.02}} & \cellcolor[HTML]{FFC2C2}{\textcolor[HTML]{2B2B2B}{0.00}} & \cellcolor[HTML]{FFCCCC}{\textcolor[HTML]{333333}{0.01}} & \cellcolor[HTML]{E9F6E9}{\textcolor[HTML]{1A1A1A}{-0.23}}\\
\cmidrule{1-26}
 \texttt{default} & RF & 0.78 & 0.13 & 0.02 & 0.20 & 1.07 & 0.78 & 0.13 & 0.01 & 1.09 & 0.17 & 0.78 & 0.13 & 0.01 & 0.15 & 1.12 & 0.77 & 0.14 & 0.02 & 0.12 & 1.14 & \cellcolor[HTML]{FFC2C2}{\textcolor[HTML]{2B2B2B}{-0.02}} & \cellcolor[HTML]{FFCCCC}{\textcolor[HTML]{333333}{0.00}} & \cellcolor[HTML]{FFD6D6}{\textcolor[HTML]{333333}{0.00}} & \cellcolor[HTML]{E9F6E9}{\textcolor[HTML]{1A1A1A}{-0.08}}\\
 & XGB & 0.78 & 0.13 & 0.01 & 0.22 & 1.14 & 0.78 & 0.13 & 0.01 & 1.12 & 0.29 & 0.78 & 0.13 & 0.01 & 0.20 & 1.16 & 0.77 & 0.13 & 0.01 & 0.17 & 1.14 & \cellcolor[HTML]{FFCCCC}{\textcolor[HTML]{333333}{-0.01}} & \cellcolor[HTML]{FFD6D6}{\textcolor[HTML]{333333}{0.00}} & \cellcolor[HTML]{FFD6D6}{\textcolor[HTML]{333333}{0.00}} & \cellcolor[HTML]{E9F6E9}{\textcolor[HTML]{1A1A1A}{-0.05}}\\
\cmidrule{1-26}
 \texttt{drybean} & RF & 0.99 & 0.03 & 0.01 & 0.05 & 1.00 & 0.99 & 0.03 & 0.01 & 1.00 & 0.06 & 0.99 & 0.03 & 0.01 & 0.05 & 1.00 & 0.99 & 0.03 & 0.02 & 0.02 & 0.98 & \cellcolor[HTML]{FFD6D6}{\textcolor[HTML]{333333}{0.00}} & \cellcolor[HTML]{FFCCCC}{\textcolor[HTML]{333333}{0.00}} & \cellcolor[HTML]{FFCCCC}{\textcolor[HTML]{333333}{0.01}} & \cellcolor[HTML]{E9F6E9}{\textcolor[HTML]{1A1A1A}{-0.03}}\\
 & XGB & 0.99 & 0.03 & 0.01 & 0.07 & 1.00 & 0.99 & 0.03 & 0.01 & 1.00 & 0.08 & 0.99 & 0.03 & 0.01 & 0.08 & 1.00 & 0.99 & 0.03 & 0.02 & 0.05 & 0.97 & \cellcolor[HTML]{FFD6D6}{\textcolor[HTML]{333333}{0.00}} & \cellcolor[HTML]{FFCCCC}{\textcolor[HTML]{333333}{0.00}} & \cellcolor[HTML]{FFD6D6}{\textcolor[HTML]{333333}{0.01}} & \cellcolor[HTML]{E9F6E9}{\textcolor[HTML]{1A1A1A}{-0.02}}\\
\cmidrule{1-26}
 \texttt{coupon} & RF & 0.83 & 0.17 & 0.07 & 0.04 & 0.98 & 0.83 & 0.17 & 0.07 & 0.98 & 0.04 & 0.51 & 0.24 & 0.00 & 3.60 & 0.00 & 0.83 & 0.17 & 0.07 & 0.04 & 0.98 & \cellcolor[HTML]{FFD6D6}{\textcolor[HTML]{333333}{0.00}} & \cellcolor[HTML]{E9F6E9}{\textcolor[HTML]{1A1A1A}{0.00}} & \cellcolor[HTML]{E9F6E9}{\textcolor[HTML]{1A1A1A}{0.00}} & \cellcolor[HTML]{E9F6E9}{\textcolor[HTML]{1A1A1A}{0.00}}\\
 & XGB & 0.84 & 0.17 & 0.10 & 2.27 & 1.74 & 0.84 & 0.16 & 0.03 & 1.53 & 0.81 & 0.83 & 0.16 & 0.02 & 0.37 & 1.39 & 0.78 & 0.19 & 0.03 & 0.04 & 1.03 & \cellcolor[HTML]{FF7A7A}{\textcolor[HTML]{0A0A0A}{-0.06}} & \cellcolor[HTML]{FF8F8F}{\textcolor[HTML]{141414}{0.02}} & \cellcolor[HTML]{426F42}{\textcolor[HTML]{E6E6E6}{-0.07}} & \cellcolor[HTML]{2F5D2F}{\textcolor[HTML]{E6E6E6}{-2.23}}\\
\cmidrule{1-26}
 \texttt{mushroom} & RF & 1.00 & 0.01 & 0.05 & 0.23 & 0.96 & 1.00 & 0.00 & 0.01 & 1.00 & 0.22 & 1.00 & 0.00 & 0.01 & 0.22 & 1.00 & 1.00 & 0.01 & 0.04 & 0.11 & 0.99 & \cellcolor[HTML]{FFD6D6}{\textcolor[HTML]{333333}{0.00}} & \cellcolor[HTML]{FFCCCC}{\textcolor[HTML]{333333}{0.00}} & \cellcolor[HTML]{D4F2D4}{\textcolor[HTML]{1A1A1A}{-0.02}} & \cellcolor[HTML]{E9F6E9}{\textcolor[HTML]{1A1A1A}{-0.12}}\\
 & XGB & 1.00 & 0.00 & 0.00 & 0.28 & 1.00 & 1.00 & 0.00 & 0.00 & 1.00 & 0.29 & 1.00 & 0.00 & 0.00 & 0.28 & 1.00 & 1.00 & 0.01 & 0.04 & 0.13 & 0.97 & \cellcolor[HTML]{FFD6D6}{\textcolor[HTML]{333333}{0.00}} & \cellcolor[HTML]{FFB8B8}{\textcolor[HTML]{2B2B2B}{0.01}} & \cellcolor[HTML]{FFB8B8}{\textcolor[HTML]{2B2B2B}{0.03}} & \cellcolor[HTML]{E9F6E9}{\textcolor[HTML]{1A1A1A}{-0.15}}\\
\cmidrule{1-26}
 \texttt{occupancy} & RF & 1.00 & 0.01 & 0.00 & 0.43 & 1.03 & 1.00 & 0.01 & 0.00 & 1.03 & 0.44 & 1.00 & 0.01 & 0.00 & 0.44 & 1.03 & 1.00 & 0.01 & 0.02 & 0.26 & 0.99 & \cellcolor[HTML]{FFD6D6}{\textcolor[HTML]{333333}{0.00}} & \cellcolor[HTML]{FFC2C2}{\textcolor[HTML]{2B2B2B}{0.00}} & \cellcolor[HTML]{FFC2C2}{\textcolor[HTML]{2B2B2B}{0.02}} & \cellcolor[HTML]{E9F6E9}{\textcolor[HTML]{1A1A1A}{-0.17}}\\
 & XGB & 1.00 & 0.01 & 0.01 & 0.47 & 1.03 & 1.00 & 0.01 & 0.01 & 1.03 & 0.52 & 1.00 & 0.01 & 0.00 & 0.41 & 1.02 & 1.00 & 0.01 & 0.04 & 0.37 & 0.94 & \cellcolor[HTML]{FFD6D6}{\textcolor[HTML]{333333}{0.00}} & \cellcolor[HTML]{FFCCCC}{\textcolor[HTML]{333333}{0.00}} & \cellcolor[HTML]{FFADAD}{\textcolor[HTML]{232323}{0.04}} & \cellcolor[HTML]{E9F6E9}{\textcolor[HTML]{1A1A1A}{-0.10}}\\
\cmidrule{1-26}
 \texttt{winequality} & RF & 0.89 & 0.14 & 0.07 & 0.04 & 1.10 & 0.89 & 0.13 & 0.03 & 1.23 & 0.12 & 0.51 & 0.24 & 0.03 & 3.95 & 0.00 & 0.86 & 0.15 & 0.06 & 0.03 & 1.02 & \cellcolor[HTML]{FFA3A3}{\textcolor[HTML]{1F1F1F}{-0.03}} & \cellcolor[HTML]{FF8585}{\textcolor[HTML]{101010}{0.02}} & \cellcolor[HTML]{D4F2D4}{\textcolor[HTML]{1A1A1A}{-0.01}} & \cellcolor[HTML]{E9F6E9}{\textcolor[HTML]{1A1A1A}{-0.01}}\\
 & XGB & 0.87 & 0.15 & 0.12 & 1.91 & 1.53 & 0.86 & 0.14 & 0.04 & 1.36 & 0.53 & 0.83 & 0.17 & 0.03 & 0.04 & 1.08 & 0.82 & 0.17 & 0.03 & 0.04 & 1.00 & \cellcolor[HTML]{FF7A7A}{\textcolor[HTML]{0A0A0A}{-0.06}} & \cellcolor[HTML]{FF7A7A}{\textcolor[HTML]{0A0A0A}{0.02}} & \cellcolor[HTML]{2F5D2F}{\textcolor[HTML]{E6E6E6}{-0.08}} & \cellcolor[HTML]{578252}{\textcolor[HTML]{E6E6E6}{-1.87}}\\
\cmidrule{1-26}
 \texttt{spambase} & RF & 0.99 & 0.05 & 0.06 & 0.63 & 0.96 & 0.99 & 0.04 & 0.04 & 0.98 & 0.37 & 0.51 & 0.24 & 0.01 & 7.17 & 0.00 & 0.99 & 0.04 & 0.04 & 0.37 & 0.98 & \cellcolor[HTML]{FFD6D6}{\textcolor[HTML]{333333}{0.00}} & \cellcolor[HTML]{D4F2D4}{\textcolor[HTML]{1A1A1A}{0.00}} & \cellcolor[HTML]{BFEFBF}{\textcolor[HTML]{1A1A1A}{-0.02}} & \cellcolor[HTML]{D4F2D4}{\textcolor[HTML]{1A1A1A}{-0.26}}\\
 & XGB & 0.98 & 0.04 & 0.01 & 0.03 & 1.00 & 0.99 & 0.04 & 0.01 & 1.00 & 0.03 & 0.99 & 0.04 & 0.01 & 0.03 & 1.00 & 0.98 & 0.04 & 0.02 & 0.04 & 1.00 & \cellcolor[HTML]{FFD6D6}{\textcolor[HTML]{333333}{0.00}} & \cellcolor[HTML]{FFCCCC}{\textcolor[HTML]{333333}{0.00}} & \cellcolor[HTML]{FFD6D6}{\textcolor[HTML]{333333}{0.00}} & \cellcolor[HTML]{FFD6D6}{\textcolor[HTML]{333333}{0.00}}\\
\bottomrule
\end{tabular}
\begin{minipage}{\textwidth}
\vspace{1ex}
\scriptsize\underline{Notes:} RF and XGB denote Random Forest, and Extreme Gradient Boosting, respectively. AUC*/Brier*/ICI* refer to models chosen based on the maximization or minimization of AUC/Brier/ICI on the test set. KL* denote the selection of the best model based on minimizing the Kullback-Leibler divergence between the model's score distributions and the assumed priors of the underlying probabilities, estimated using a GAM model. Columns $\Delta AUC$, $\Delta Brier$, $\Delta ICI$ and $\Delta KL$ display the differences between the AUC/Brier/ICI/KL values from models selected by minimizing the Kullback-Leibler divergence and those maximizing AUC. Negative values indicate a reduction in AUC/Brier/ICI/KL when prioritizing minimization of the Kullback-Leibler divergence over optimization of AUC/Brier/ICI.
\end{minipage}
\end{table}
}

{
\setlength{\tabcolsep}{1.7pt}
\begin{table}[ht!]
    \caption{AUC, Brier score, AIC, or KL divergence with Beta distributed priors estimated with a GAMSEL.}
\label{tab:real-data-results-gamsel}
    \centering\tiny
\begin{tabular}{llcccccccccccccccccccccccc}
\toprule
\multicolumn{2}{c}{ } & \multicolumn{5}{c}{AUC*} & \multicolumn{5}{c}{Brier*} & \multicolumn{5}{c}{ICI*} & \multicolumn{9}{c}{KL*} \\
\cmidrule(l{3pt}r{3pt}){3-7} \cmidrule(l{3pt}r{3pt}){8-12} \cmidrule(l{3pt}r{3pt}){13-17} \cmidrule(l{3pt}r{3pt}){18-26}
Dataset & Model & AUC & Brier & ICI & KL & QR & AUC & Brier & ICI & KL & QR & AUC & Brier & ICI & KL & QR & AUC & Brier & ICI & KL & QR & \(\Delta\)AUC & \(\Delta\)Brier & \(\Delta\)ICI & \(\Delta\)KL\\
\midrule
 \texttt{abalone} & RF & 0.71 & 0.20 & 0.03 & 0.33 & 1.21 & 0.71 & 0.20 & 0.03 & 0.33 & 1.23 & 0.51 & 0.23 & 0.02 & 2.74 & 0.00 & 0.71 & 0.20 & 0.03 & 0.32 & 1.28 & \cellcolor[HTML]{FFD6D6}{\textcolor[HTML]{333333}{0.00}} & \cellcolor[HTML]{E9F6E9}{\textcolor[HTML]{1A1A1A}{0.00}} & \cellcolor[HTML]{FFD6D6}{\textcolor[HTML]{333333}{0.00}} & \cellcolor[HTML]{E9F6E9}{\textcolor[HTML]{1A1A1A}{-0.01}}\\
 & XGB & 0.69 & 0.20 & 0.03 & 0.40 & 1.45 & 0.69 & 0.20 & 0.04 & 0.56 & 1.06 & 0.70 & 0.20 & 0.04 & 0.80 & 1.02 & 0.69 & 0.21 & 0.05 & 0.24 & 1.23 & \cellcolor[HTML]{FFD6D6}{\textcolor[HTML]{333333}{0.00}} & \cellcolor[HTML]{FFD6D6}{\textcolor[HTML]{333333}{0.00}} & \cellcolor[HTML]{FFC2C2}{\textcolor[HTML]{2B2B2B}{0.02}} & \cellcolor[HTML]{E9F6E9}{\textcolor[HTML]{1A1A1A}{-0.16}}\\
\cmidrule{1-26}
  \texttt{adult} & RF & 0.92 & 0.10 & 0.03 & 0.08 & 1.09 & 0.92 & 0.10 & 0.03 & 0.09 & 1.10 & 0.51 & 0.18 & 0.00 & 3.96 & 0.00 & 0.91 & 0.10 & 0.05 & 0.04 & 0.98 & \cellcolor[HTML]{FFD6D6}{\textcolor[HTML]{333333}{-0.01}} & \cellcolor[HTML]{FFCCCC}{\textcolor[HTML]{333333}{0.01}} & \cellcolor[HTML]{FFC2C2}{\textcolor[HTML]{2B2B2B}{0.02}} & \cellcolor[HTML]{E9F6E9}{\textcolor[HTML]{1A1A1A}{-0.04}}\\
 & XGB & 0.93 & 0.09 & 0.01 & 0.35 & 1.23 & 0.93 & 0.09 & 0.01 & 0.35 & 1.23 & 0.93 & 0.09 & 0.01 & 0.34 & 1.20 & 0.91 & 0.10 & 0.03 & 0.09 & 1.04 & \cellcolor[HTML]{FFC2C2}{\textcolor[HTML]{2B2B2B}{-0.02}} & \cellcolor[HTML]{FFB8B8}{\textcolor[HTML]{2B2B2B}{0.01}} & \cellcolor[HTML]{FFB8B8}{\textcolor[HTML]{2B2B2B}{0.02}} & \cellcolor[HTML]{E9F6E9}{\textcolor[HTML]{1A1A1A}{-0.26}}\\
\cmidrule{1-26}
  \texttt{bank} & RF & 0.94 & 0.06 & 0.02 & 0.31 & 1.30 & 0.94 & 0.06 & 0.02 & 0.34 & 1.32 & 0.94 & 0.06 & 0.02 & 0.35 & 1.34 & 0.91 & 0.07 & 0.05 & 0.06 & 0.87 & \cellcolor[HTML]{FFB8B8}{\textcolor[HTML]{2B2B2B}{-0.03}} & \cellcolor[HTML]{FFC2C2}{\textcolor[HTML]{2B2B2B}{0.01}} & \cellcolor[HTML]{FFB8B8}{\textcolor[HTML]{2B2B2B}{0.03}} & \cellcolor[HTML]{E9F6E9}{\textcolor[HTML]{1A1A1A}{-0.25}}\\
 & XGB & 0.93 & 0.06 & 0.02 & 0.57 & 1.40 & 0.93 & 0.06 & 0.02 & 0.46 & 1.34 & 0.93 & 0.06 & 0.02 & 0.54 & 1.38 & 0.89 & 0.07 & 0.04 & 0.07 & 0.80 & \cellcolor[HTML]{FFA3A3}{\textcolor[HTML]{1F1F1F}{-0.04}} & \cellcolor[HTML]{FFC2C2}{\textcolor[HTML]{2B2B2B}{0.01}} & \cellcolor[HTML]{FFC2C2}{\textcolor[HTML]{2B2B2B}{0.02}} & \cellcolor[HTML]{D4F2D4}{\textcolor[HTML]{1A1A1A}{-0.50}}\\
\cmidrule{1-26}
  \texttt{default} & RF & 0.78 & 0.13 & 0.02 & 0.22 & 1.24 & 0.78 & 0.13 & 0.01 & 0.23 & 1.27 & 0.78 & 0.13 & 0.01 & 0.24 & 1.30 & 0.78 & 0.14 & 0.03 & 0.20 & 1.07 & \cellcolor[HTML]{FFD6D6}{\textcolor[HTML]{333333}{0.00}} & \cellcolor[HTML]{FFD6D6}{\textcolor[HTML]{333333}{0.00}} & \cellcolor[HTML]{FFCCCC}{\textcolor[HTML]{333333}{0.01}} & \cellcolor[HTML]{E9F6E9}{\textcolor[HTML]{1A1A1A}{-0.02}}\\
 & XGB & 0.78 & 0.13 & 0.01 & 0.34 & 1.32 & 0.78 & 0.13 & 0.01 & 0.36 & 1.30 & 0.78 & 0.13 & 0.01 & 0.35 & 1.35 & 0.79 & 0.13 & 0.01 & 0.34 & 1.28 & \cellcolor[HTML]{2F5D2F}{\textcolor[HTML]{E6E6E6}{0.00}} & \cellcolor[HTML]{E9F6E9}{\textcolor[HTML]{1A1A1A}{0.00}} & \cellcolor[HTML]{FFD6D6}{\textcolor[HTML]{333333}{0.00}} & \cellcolor[HTML]{E9F6E9}{\textcolor[HTML]{1A1A1A}{0.00}}\\
\cmidrule{1-26}
  \texttt{drybean} & RF & 0.99 & 0.03 & 0.01 & 0.56 & 1.17 & 0.99 & 0.03 & 0.01 & 0.59 & 1.17 & 0.99 & 0.03 & 0.01 & 0.57 & 1.17 & 0.99 & 0.04 & 0.05 & 0.12 & 1.08 & \cellcolor[HTML]{FFD6D6}{\textcolor[HTML]{333333}{0.00}} & \cellcolor[HTML]{FFB8B8}{\textcolor[HTML]{2B2B2B}{0.01}} & \cellcolor[HTML]{FFA3A3}{\textcolor[HTML]{1F1F1F}{0.04}} & \cellcolor[HTML]{E9F6E9}{\textcolor[HTML]{1A1A1A}{-0.44}}\\
 & XGB & 0.99 & 0.03 & 0.01 & 0.62 & 1.16 & 0.99 & 0.03 & 0.01 & 0.63 & 1.17 & 0.99 & 0.03 & 0.01 & 0.65 & 1.17 & 0.99 & 0.03 & 0.03 & 0.37 & 1.09 & \cellcolor[HTML]{FFD6D6}{\textcolor[HTML]{333333}{0.00}} & \cellcolor[HTML]{FFCCCC}{\textcolor[HTML]{333333}{0.00}} & \cellcolor[HTML]{FFC2C2}{\textcolor[HTML]{2B2B2B}{0.02}} & \cellcolor[HTML]{E9F6E9}{\textcolor[HTML]{1A1A1A}{-0.25}}\\
\cmidrule{1-26}
 \texttt{coupon} & RF & 0.83 & 0.17 & 0.07 & 0.04 & 1.10 & 0.83 & 0.17 & 0.07 & 0.04 & 1.10 & 0.51 & 0.24 & 0.00 & 3.40 & 0.00 & 0.82 & 0.18 & 0.07 & 0.03 & 1.02 & \cellcolor[HTML]{FFCCCC}{\textcolor[HTML]{333333}{-0.01}} & \cellcolor[HTML]{FFCCCC}{\textcolor[HTML]{333333}{0.01}} & \cellcolor[HTML]{FFD6D6}{\textcolor[HTML]{333333}{0.00}} & \cellcolor[HTML]{E9F6E9}{\textcolor[HTML]{1A1A1A}{-0.01}}\\
 & XGB & 0.84 & 0.17 & 0.10 & 3.15 & 1.94 & 0.84 & 0.16 & 0.03 & 1.27 & 1.72 & 0.83 & 0.16 & 0.02 & 0.66 & 1.55 & 0.77 & 0.19 & 0.03 & 0.05 & 1.07 & \cellcolor[HTML]{FF8585}{\textcolor[HTML]{101010}{-0.07}} & \cellcolor[HTML]{FFA3A3}{\textcolor[HTML]{1F1F1F}{0.02}} & \cellcolor[HTML]{426F42}{\textcolor[HTML]{E6E6E6}{-0.07}} & \cellcolor[HTML]{6CA56C}{\textcolor[HTML]{E6E6E6}{-3.10}}\\
\cmidrule{1-26}
  \texttt{mushroom} & RF & 1.00 & 0.01 & 0.05 & 0.65 & 0.98 & 1.00 & 0.00 & 0.01 & 1.28 & 1.02 & 1.00 & 0.00 & 0.01 & 1.28 & 1.02 & 1.00 & 0.03 & 0.07 & 0.41 & 0.97 & \cellcolor[HTML]{FFD6D6}{\textcolor[HTML]{333333}{0.00}} & \cellcolor[HTML]{FF9999}{\textcolor[HTML]{1A1A1A}{0.02}} & \cellcolor[HTML]{FFC2C2}{\textcolor[HTML]{2B2B2B}{0.02}} & \cellcolor[HTML]{E9F6E9}{\textcolor[HTML]{1A1A1A}{-0.24}}\\
 & XGB & 1.00 & 0.00 & 0.00 & 1.40 & 1.02 & 1.00 & 0.00 & 0.00 & 1.40 & 1.02 & 1.00 & 0.00 & 0.00 & 1.40 & 1.02 & 1.00 & 0.02 & 0.05 & 0.57 & 0.95 & \cellcolor[HTML]{FFD6D6}{\textcolor[HTML]{333333}{0.00}} & \cellcolor[HTML]{FFADAD}{\textcolor[HTML]{232323}{0.02}} & \cellcolor[HTML]{FF9999}{\textcolor[HTML]{1A1A1A}{0.05}} & \cellcolor[HTML]{D4F2D4}{\textcolor[HTML]{1A1A1A}{-0.83}}\\
\cmidrule{1-26}
  \texttt{occupancy} & RF & 1.00 & 0.01 & 0.00 & 1.02 & 1.15 & 1.00 & 0.01 & 0.00 & 1.02 & 1.15 & 1.00 & 0.01 & 0.00 & 1.02 & 1.15 & 1.00 & 0.02 & 0.07 & 0.37 & 1.02 & \cellcolor[HTML]{FFD6D6}{\textcolor[HTML]{333333}{0.00}} & \cellcolor[HTML]{FFB8B8}{\textcolor[HTML]{2B2B2B}{0.02}} & \cellcolor[HTML]{FF8585}{\textcolor[HTML]{101010}{0.07}} & \cellcolor[HTML]{D4F2D4}{\textcolor[HTML]{1A1A1A}{-0.65}}\\
 & XGB & 1.00 & 0.01 & 0.01 & 1.07 & 1.15 & 1.00 & 0.01 & 0.01 & 1.14 & 1.15 & 1.00 & 0.01 & 0.00 & 0.97 & 1.14 & 1.00 & 0.01 & 0.04 & 0.82 & 1.05 & \cellcolor[HTML]{FFD6D6}{\textcolor[HTML]{333333}{0.00}} & \cellcolor[HTML]{FFD6D6}{\textcolor[HTML]{333333}{0.00}} & \cellcolor[HTML]{FFADAD}{\textcolor[HTML]{232323}{0.04}} & \cellcolor[HTML]{E9F6E9}{\textcolor[HTML]{1A1A1A}{-0.24}}\\
\cmidrule{1-26}
 \texttt{winequality} & RF & 0.89 & 0.14 & 0.07 & 0.44 & 1.51 & 0.89 & 0.13 & 0.03 & 0.88 & 1.68 & 0.51 & 0.24 & 0.03 & 3.35 & 0.00 & 0.83 & 0.17 & 0.08 & 0.05 & 1.05 & \cellcolor[HTML]{FF8F8F}{\textcolor[HTML]{141414}{-0.06}} & \cellcolor[HTML]{FF7A7A}{\textcolor[HTML]{0A0A0A}{0.04}} & \cellcolor[HTML]{FFCCCC}{\textcolor[HTML]{333333}{0.01}} & \cellcolor[HTML]{E9F6E9}{\textcolor[HTML]{1A1A1A}{-0.38}}\\
 & XGB & 0.87 & 0.15 & 0.12 & 4.66 & 2.10 & 0.86 & 0.14 & 0.04 & 1.95 & 1.86 & 0.83 & 0.17 & 0.03 & 0.47 & 1.48 & 0.80 & 0.18 & 0.04 & 0.13 & 1.12 & \cellcolor[HTML]{FF7A7A}{\textcolor[HTML]{0A0A0A}{-0.08}} & \cellcolor[HTML]{FF8F8F}{\textcolor[HTML]{141414}{0.03}} & \cellcolor[HTML]{2F5D2F}{\textcolor[HTML]{E6E6E6}{-0.07}} & \cellcolor[HTML]{2F5D2F}{\textcolor[HTML]{E6E6E6}{-4.53}}\\
\cmidrule{1-26}
  \texttt{spambase} & RF & 0.99 & 0.05 & 0.06 & 0.17 & 1.00 & 0.99 & 0.04 & 0.04 & 0.26 & 1.03 & 0.51 & 0.24 & 0.01 & 4.96 & 0.00 & 0.97 & 0.07 & 0.08 & 0.07 & 0.95 & \cellcolor[HTML]{FFCCCC}{\textcolor[HTML]{333333}{-0.01}} & \cellcolor[HTML]{FF9999}{\textcolor[HTML]{1A1A1A}{0.02}} & \cellcolor[HTML]{FFC2C2}{\textcolor[HTML]{2B2B2B}{0.02}} & \cellcolor[HTML]{E9F6E9}{\textcolor[HTML]{1A1A1A}{-0.10}}\\
 & XGB & 0.98 & 0.04 & 0.01 & 1.01 & 1.05 & 0.99 & 0.04 & 0.01 & 0.88 & 1.05 & 0.99 & 0.04 & 0.01 & 0.87 & 1.05 & 0.98 & 0.05 & 0.04 & 0.27 & 1.00 & \cellcolor[HTML]{FFD6D6}{\textcolor[HTML]{333333}{-0.01}} & \cellcolor[HTML]{FFADAD}{\textcolor[HTML]{232323}{0.02}} & \cellcolor[HTML]{FFC2C2}{\textcolor[HTML]{2B2B2B}{0.02}} & \cellcolor[HTML]{D4F2D4}{\textcolor[HTML]{1A1A1A}{-0.75}}\\
\bottomrule
\end{tabular}
\begin{minipage}{\textwidth}
\vspace{1ex}
\scriptsize\underline{Notes:} RF and XGB denote Random Forest, and Extreme Gradient Boosting, respectively. AUC*/Brier*/ICI* refer to models chosen based on the maximization or minimization of AUC/Brier/ICI on the test set. KL* denote the selection of the best model based on minimizing the Kullback-Leibler divergence between the model's score distributions and the assumed priors of the underlying probabilities, estimated using a GAMSEL model. Columns $\Delta AUC$, $\Delta Brier$, $\Delta ICI$ and $\Delta KL$ display the differences between the AUC/Brier/ICI/KL values from models selected by minimizing the Kullback-Leibler divergence and those maximizing AUC. Negative values indicate a reduction in AUC/Brier/ICI/KL when prioritizing minimization of the Kullback-Leibler divergence over optimization of AUC/Brier/ICI. QR refers to the ratio of the difference between the 90th and 10th percentiles of the scores to the difference between the 90th and 10th percentiles of the assumed Beta distribution.
\end{minipage}
\end{table}
}

\paragraph{Distribution of Scores} The distribution of scores under each prior distribution as well as that estimated with the RF and the XGBoost on all datasets are reported when using a GLM to estimate the prior distribution shapes (Figs.~\ref{fig:real-glm-1-5} and \ref{fig:real-glm-6-10}), a GAM (Figs.~\ref{fig:real-gam-1-5} and \ref{fig:real-gam-6-10}), or a GAMSEL (Figs.~\ref{fig:real-gamsel-1-5} and \ref{fig:real-gamsel-6-10}).

\foreach \priormodelfile/\priormodelname in {glm/GLM,gam/GAM,gamsel/GAMSEL}
{
    \begin{figure}[H]
        \centering
        \includegraphics[width=.65\textwidth]{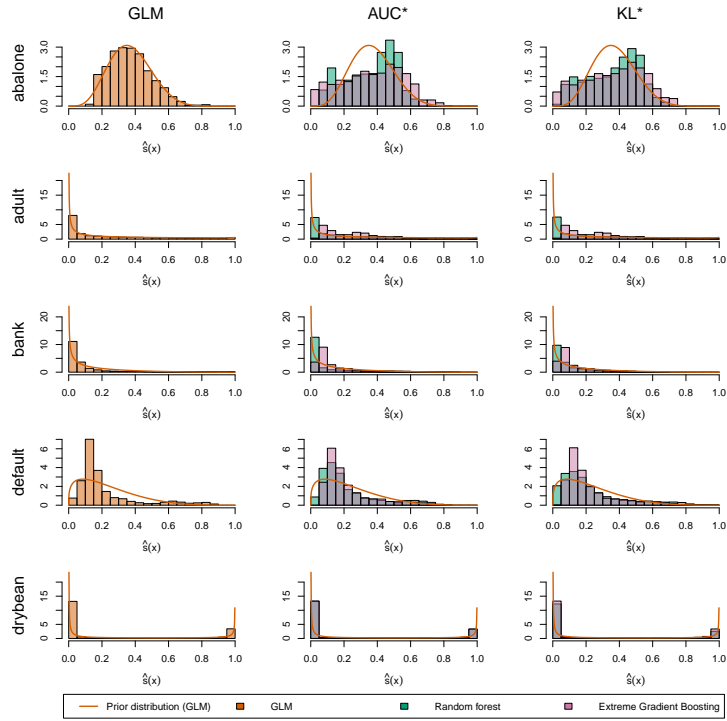}
        \caption{Estimated scores on the first five datasets: \textbf{\priormodelname} (left), models selected by AUC maximization (middle), and KL divergence minimization relative to prior assumptions (right).}
        \label{fig:real-\priormodelfile-1-5}
    \end{figure}
    
    \begin{figure}[H]
        \centering
        \includegraphics[width=.65\textwidth]{figures/real-\priormodelfile-6-10.pdf}
        \caption{Estimated scores on the last five datasets: \textbf{\priormodelname} (left), models selected by AUC maximization (middle), and KL divergence minimization relative to prior assumptions (right).}
        \label{fig:real-\priormodelfile-6-10}
    \end{figure}
}

\end{document}